\newcommand*{\colorboxed}{}
\def\colorboxed#1#{
	\colorboxedAux{#1}
}
\newcommand*{\colorboxedAux}[3]{
	\begingroup
	\colorlet{cb@saved}{.}
	\color#1{#2}
	\boxed{
		\color{cb@saved}
		#3
	}
	\endgroup
}
\definecolor{DarkBlue}{RGB}{22,54,93}
\theoremstyle{plain}
\newtheorem{theorem}{Theorem}[section]
\newtheorem{proposition}[theorem]{Proposition}
\newtheorem{lemma}[theorem]{Lemma}
\newtheorem{corollary}[theorem]{Corollary}
\theoremstyle{definition}
\newtheorem{definition}[theorem]{Definition}
\newtheorem{assumption}[theorem]{Assumption}
\theoremstyle{remark}
\date{}
\author
{
        Songtao Feng\thanks{\small Department of Electrical and Computer Engineering, The Ohio State University, OH 43210, USA; e-mail: {\tt   feng.1359@osu.edu}}
        ,~~~Ming Yin\thanks{\small Department of Computer Science, UC Santa Barbara, CA 93106, USA; e-mail: {\tt  ming\_yin@ucsb.edu}}
        ,~~~Ruiquan Huang\thanks{\small School of Electrical Engineering and Computer Science, The Pennsylvania State University, University Park, PA 16802, USA; e-mail: {\tt  rzh5514@psu.edu}}
        ,
        Yu-Xiang Wang\thanks{\small Department of Computer Science, UC Santa Barbara, CA 93106, USA; e-mail: {\tt   yuxiangw@cs.ucsb.edu}}
	,\\~~~Jing Yang\thanks{\small School of Electrical Engineering and Computer Science, The Pennsylvania State University, University Park, PA 16802, USA; e-mail: {\tt  yangjing@psu.edu }}
	,~~~Yingbin Liang\thanks{\small Department of Electrical and Computer Engineering, The Ohio State University, OH 43210, USA; e-mail: {\tt   liang.889@osu.edu}}
}
 \definecolor{DarkBlue}{RGB}{22,54,93}
 \theoremstyle{plain}
\crefname{assumption}{assumption}{assumptions}
 \newcommand{\Pb}{\mathbb{P}}
 \newcommand{\Rb}{\mathbb{R}}
 \newcommand{\Eb}{\mathbb{E}}
 \newcommand{\Sc}{\mathcal{S}}
 \newcommand{\Ac}{\mathcal{A}}
 \newcommand{\Dc}{\mathcal{D}}
 \newcommand{\Nc}{\mathcal{N}}
 \newcommand{\URM}[1]{\left(\mathrm{\uppercase\expandafter{\romannumeral#1}}\right)}
 \newcommand{\norm}[1]{\left\lVert#1\right\rVert}
 \newcommand{\dif}{\mathop{}\!\mathrm{d}}
 \newcommand{\TV}{\mathrm{TV}}
\newcommand{\Tc}{\mathcal{T}}
\newcommand{\Oc}{\mathcal{O}}
\newcommand{\Var}{\mathrm{Var}}
\newcommand{\DR}{{\mathrm{D-Regret}}}
\newcommand{\Jb}{\mathbb{J}}
\newcommand{\Fc}{\mathcal{F}}
\newcommand{\Gc}{\mathcal{G}}
\newcommand{\Bc}{\mathcal{B}}
\newcommand{\Lc}{\mathcal{L}}
\newcommand{\stark}{{(*,k)}}
\newcommand{\Int}[1]{  {\kern0pt#1}^{\mathrm{o}} }
\newcommand{\GDE}{\dim_{\mathrm{DE}}}
\newcommand{\GBE}{\dim_{\mathrm{DBE}}}
\newcommand{\BE}{\dim_{\mathrm{BE}}}
\newcommand{\trace}{\mathrm{tr}}
\newcommand{\alg}{SW-OPEA }
\newcommand{\algg}{SW-OPEA}
\newcommand{\uset}[3][0ex]{
  \mathrel{\mathop{#3}\limits_{
    \vbox to#1{\kern -5\ex@
    \hbox{$#2$}\vss}}}}
\title{Non-stationary Reinforcement Learning under General Function Approximation\footnote{To appear at International Conference on Machine Learning (ICML), 2023.}
}
\begin{document}
\maketitle
\begin{abstract}
General function approximation is a powerful tool to handle large state and action spaces in a broad range of reinforcement learning (RL) scenarios. However, theoretical understanding of non-stationary MDPs with general function approximation is still limited. In this paper, we make the first such an attempt. We first propose a new complexity metric called dynamic Bellman Eluder (DBE) dimension for non-stationary MDPs, which subsumes majority of existing tractable RL problems in static MDPs as well as non-stationary MDPs. Based on the proposed complexity metric, we propose a novel confidence-set based model-free algorithm called SW-OPEA, which features a sliding window mechanism and a new confidence set design for non-stationary MDPs. We then establish an upper bound on the dynamic regret for the proposed algorithm, and show that SW-OPEA is provably efficient as long as the variation budget is not significantly large. We further demonstrate via examples of non-stationary linear and tabular MDPs that our algorithm performs better in small variation budget scenario than the existing UCB-type algorithms. To the best of our knowledge, this is the first dynamic regret analysis in non-stationary MDPs with general function approximation.
\end{abstract}

\section{Introduction}
Reinforcement learning (RL) commonly refers to the sequential decision making framework modeled by a Markov Decision Process (MDP), where agent aims to maximize its cumulative reward in an unknown environment~\citep{Sutton1998}. RL has achieved great success in a variety of practical applications, including games~\cite{silver:2016,silver:2017,silver:2018,Vinyals:game:2019}, robotics~\citep{Kober:robo:2013,gu:robo:2017}, and autonomous driving~\citep{Yurtsever:auto_drive:2019}. So far, most existing RL works have focused on a {\em static} MDP model, in which both the reward and the transition kernel are time-invariant. However, {\em non-stationarity}\footnote{We emphasize non-stationarity is different from time-inhomogeneity (\emph{e.g.} \cite{yin2021near}). The latter allows transition $P_t$ to be different for $t\in[H]$, but $P_t$'s are fixed across episodes.} naturally occurs in many sequential decision problems such as online advertisement auctions~\citep{Cai:adv:2017,Lu:adv:2019}, traffic management~\citep{Chen:traffic:2020}, health-care operations~\citep{Shortreed:med:2010}, and inventory control~\citep{Agrawal:inventory:2019}. 

Compared to static RL, non-stationary RL can be significantly challenging. Under the general non-stationary environment, designing algorithm that achieve sublinear regret might not be possible due to the worst scenario where rewards and transitions change drastically over time.
A line of extensive studies have focused on tabular non-stationary MDPs~\citep{Peter_auer:adversarial_MDP:NeurIPS:2008,Pratik_Gajane:adversarial_MDP_arXiv:2018,Even-Dar:MDP:2009,Yu_Jia:MDP:2009,Jia_Yuanyu:MDP:2009,Neu:MDP:MDP:2010,Neu:MDP:MDP:2012,Zimin:MDP:2013,Dekel:MDP:2013,Rosenberg:MDP:2019,Chi_Jin:MDP:2020,Cheung:adversarial_MDP:PMLR:2020,Yingjie_Fei:adversarial_MDP:NeurIPS:2020,Weichao_Mao:adversarial_MDP:2021}. 
However, the performance of these algorithms suffers from large number of states in non-stationary MDPs, which precludes its usage in exponentially large or even continuous state spaces. Therefore, function approximation has become a prominent tool and several works proposed algorithms for non-stationary MDPs with structural assumptions, such as state-action forming a metric space~\citep{Domingues:adversarial_MDP:2020}, linear MDPs~\cite{Huozhi_Zhou:adversarial_MDP:2022,Ahmed_Touati:adversarial_MDP:2020}, linear mixture MDPs~\cite{Han_Zhong:adversarial_MDP:2021}. However, the structural function approximation of (such as linear) MDPs typically restrict the designed algorithms to perform well only under limited classes of MDPs, and may not be applicable generally. This naturally leads to the following open question:

\begin{center}
\textit{Can we design an algorithm that achieves a desired regret performance \\for non-stationary MDPs under general function approximation?}
\end{center}

To this end, there are a few challenges. (a) We need to identify an appropriate complexity metric for non-stationary MDPs that covers many existing problems of interest; (b) We need to design an algorithm that can handle non-stationary without function structures on transition kernels and rewards to exploit; and (c) Establishing a dynamic regret bound that potentially improves those for non-stationary simpler  MDPs such as linear and tabular cases is non-trivial. In this paper, we give an affirmative answer to the above question by addressing the aforementioned challenges. 

We summarize our contributions as follows.

\noindent{\bf Complexity metric:} We propose a new complexity metric named the Dynamic Bellman Eluder (DBE) dimension for non-stationary MDPs, which generalizes the Bellman Eluder (BE) dimension designed for stationary MDPs~\cite{Chi_Jin:bellman_eluder:2020}, and subsumes a broad class of RL problems including low BE dimension problems in stationary RL and linear MDPs in non-stationary RL. Moreover, when the non-stationarity is relatively small compared to a universal gap (which still allows a certain non-stationarity), we show that the DBE dimension is the same as the BE dimension of one MDP instance of the non-stationary MDPs. 

\noindent{\bf Algorithm:} We then design a new confidence-set based algorithm~\alg for non-stationary MDP, by greedily selecting the candidate value function in the confidence region. This is in contrast to the UCB-type algorithms adopted by all previous studies of non-stationary MDPs. In fact, a UCB-type algorithm is not easily applicable to non-stationary MDPs with general function approximation due to the difficulty of finding an appropriate bonus term. Our main design novelty lies in the construction of the confidence region, which features the sliding window mechanism, and incorporates local variation budget in order to exactly capture the distribution mismatch between current episode and all episodes in the sliding window. Such a design ensures the optimal state-action value function in current episode to lie within the confidence region, and hence the optimism principle remains valid.

\noindent{\bf Theory:} We theoretically characterize the dynamic regret of~\alg in Theorem~\ref{thm:DR}. To demonstrate the advantage of \alg, we compare our regret bound of \alg to that of previously proposed UCB-type algorithms ~\citep{Huozhi_Zhou:adversarial_MDP:2022} for non-stationary linear and tabular MDPs. The comparison shows that our confidence-set based algorithm performs better in terms of the linear feature dimension $\tilde d$ and the horizon $H$, where the dependency on $H$ also matches the minimax lower bound given in~\citet{Huozhi_Zhou:adversarial_MDP:2022}. Our bound is slightly worse in the average variation budget, which suggests that our algorithm is advantageous over UCB-type algorithms in the small variation scenario. 

\noindent{\bf Analysis:} Technically, our analysis features a few new developments. (a) We develop a distribution shift lemma to handle transition kernel variations over time. (b) We come up with new auxiliary random variables to form appropriate martingale differences and obtain the concentration results. (c) We use an auxiliary MDP to help bound the difference of two expectations under different underlying models.

\subsection{Related Work}
\noindent{\bf Non-stationary tabular MDPs:} Most works on non-stationary tabular MDPs considered static regret~\cite{Peter_auer:adversarial_MDP:NeurIPS:2008,Pratik_Gajane:adversarial_MDP_arXiv:2018,Even-Dar:MDP:2009,Yu_Jia:MDP:2009,Jia_Yuanyu:MDP:2009,Neu:MDP:MDP:2010,Neu:MDP:MDP:2012,Zimin:MDP:2013,Dekel:MDP:2013,Rosenberg:MDP:2019,Chi_Jin:MDP:2020}. A few recent studies~\cite{Cheung:adversarial_MDP:PMLR:2020,Yingjie_Fei:adversarial_MDP:NeurIPS:2020,Weichao_Mao:adversarial_MDP:2021} focused on dynamic regret for non-stationary tabular MDPs. Specifically, assuming time-varying transitions and rewards, \citet{Cheung:adversarial_MDP:PMLR:2020} proposed a sliding window approach, and \citet{Weichao_Mao:adversarial_MDP:2021} used restart mechanism to handle non-stationarity. While the first two works adopted value-based algorithms, 
\citet{Yingjie_Fei:adversarial_MDP:NeurIPS:2020} applied a policy optimization algorithm for full-information feedback of rewards and time-invariant transitions. 

\noindent{\bf Non-stationary MDPs with function approximation:} Under non-stationary MDPs with continuous environment where the state-action forms a metric space, \citet{Domingues:adversarial_MDP:2020} proposed a kernel-based algorithm. Two concurrent works \citet{Huozhi_Zhou:adversarial_MDP:2022} and \citet{Ahmed_Touati:adversarial_MDP:2020} considered non-stationary RL under linear MDPs, where \citet{Huozhi_Zhou:adversarial_MDP:2022} considered dynamic regret and \citet{Ahmed_Touati:adversarial_MDP:2020} studied static regret. To handle non-stationarity, \citet{Huozhi_Zhou:adversarial_MDP:2022} adopted a scheme of restarting the base LSVI-UCB algorithm while \citet{Ahmed_Touati:adversarial_MDP:2020} used weighted least squares value iteration with exponential weights on past data. Under the non-stationary MDPs with linear mixture function approximation of both transitions and rewards, \citet{Han_Zhong:adversarial_MDP:2021} 
considered bandit feedback rewards and dynamic regret. Moreover, \citet{Chenyu_Wei:MDP:2021} proposed black-box reduction approach that converts algorithm with optimal regret in stationary MDPs into another algorithm for non-stationary MDPs. 

Recently, \citet{Foster:adversarial_DEC:2022} generalized the decision-estimation coefficient (DEC) framework to non-stationary RL setting with the goal of minimizing the static regret. Their framework can potentially cover majority problems but the connection between their result and the existing results is still not well understood. We also remark that the performance under the DEC framework is often worse than the best-known result when restricted to special cases. Further, their work focused on the static regret, whereas our work potentially maintains the sharp performance when restricting to special cases, and our performance metric of dynamic regret is more general. 

\noindent{\bf Static MDP with general function approximation:} Broadly speaking, the line of research on designing sample-efficient RL algorithms with general function approximations in the past has been mainly focused on the static RL setting. \citet{Russo:Eluder:2013,Osband:Eluder:2014} initiated the study on the minimal structural assumptions that render sample-efficient learning by proposing a structural condition called Eluder dimension, and  \citet{Ruosong_Wang:eluder:2020} then extended LSVI-UCB for general function approximation with small Eluder dimension. Another well-studied direction is the low-rank conditions, including Bellman rank~\citep{Kefan_Dong:Bellman_rank:2019,Nan_Jiang:Bellman_rank:2017} for model-free setting and witness rank~\citep{Wen_Sun:witness_rank:2019} for model-based setting. \citet{Chi_Jin:bellman_eluder:2020} proposed a complexity named Bellman Eluder (BE) dimension for model-free setting, which subsumes low Bellman rank and low Eluder dimension as special cases. \citet{Simon_Du:bilinear:2021} proposed Bilinear class, which unifies both model-based and model-free RL for a broad class of loss estimators including Bellman error. Sharing the same spirit of unifying model-free and model-based RL, \citet{Foster:DEC:2021} proposed DEC, which is a necessary and sufficient condition for sample-efficient learning, and then they extended it to an adversarial decision making problem with static regret in~\citet{Foster:adversarial_DEC:2022}. While the sample complexity of \citet{Foster:DEC:2021, Simon_Du:bilinear:2021} is generally worse than the best-known result when restricted to special cases, \citet{Zixiang_Chen:ABC:2022} recently extended BE dimension and proposed an Admissible Bellman Characterization (ABC) framework to include both model-free and model-based RL while maintaining sharp sample efficiency. Very recently, \citet{yin2022offline,zhang2022off} consider parametric differentiable function approximation in offline RL, but there is no study in the online regime.

\noindent{\bf Non-stationary bandits:} Broadly speaking, our work is also related to a line of research on non-stationary bandits. Methods have been proposed to handle non-stationarity for various non-stationary multi-armed bandit (MAB) settings, including decaying memorey and sliding windows~\citep{Garivier:bandit:2011,Keskin:MAB:2017} and restart mechanism~\citep{Auer:MAB:2002,Besbes:MAB:2014,Bebes:MAB:2017}, which are widely employed in non-stationary RL. More recently, several works developed methods for unknown variation budget~\citep{Karnin:MAB:2017,Cheung:MAB:2022}, and abrupt changes~\citep{Auer:MAB:2019}. Another line of works focused on Markovian bandits~\cite{Ma:bandit:2018}, non-stationary contextual bandits~\cite{Luo2017EfficientCB,pmlr-v99-chen19b}, linear bandits~\cite{Cheung:adversarial_MDP:PMLR:2019,pmlr-v108-zhao20a}, and bandits with slowly changing rewards~\cite{Besbes:bandit:2019}.

\section{Preliminaries}\label{sec:pre}
\subsection{Non-stationary MDPs}
We consider an episodic MDP with time-varying transitions and rewards $(\mathcal{S},\mathcal{A},H,P,r,x_1)$, where $\mathcal{S}$ is the state space, $\mathcal{A}$ is the action space, $H$ is the length of each episode, $P=\{P_h^k\}_{(k,h)\in[K]\times[H-1]}$ is the collection of non-stationary transition kernels with $P_h^k:\mathcal{S}\times\mathcal{A}\mapsto\triangle(\mathcal{S})$, $r=\{r_h^k\}_{(k,h)\in[K]\times[H]}$ is the collection of adversarial deterministic reward functions with $r_h^k:\mathcal{S}\times\mathcal{A}\mapsto[0,1]$, and $x_1$ is the fixed initial state. 

Suppose an agent sequentially interacts with the non-stationary MDP $(\mathcal{S},\mathcal{A},H,P,r,x_1)$. At the beginning of the $k$-th episode, the reward $\{r_h^k\}_{h\in[H]}$ are adversarially chosen by the environment, which possibly depends on the $(k-1)$ historical trajectories. Meanwhile, the agent determines a policy $\pi^k=\{\pi_h^k\}_{h\in[H]}$ where $\pi_h^k:\mathcal{S}\mapsto\triangle(\mathcal{A})$. At the $h$-th step, the agent observes the state $x_h^k$, takes an action following $a_h^k\sim\pi_h^k(\cdot|x_h^k)$, obtains the reward function $r_h^k$ which determines the received reward $r_h^k(x_h^k,a_h^k)$, and the MDP evolves into the next state $x_{h+1}^k\sim P_h^k(\cdot|x_h^k,a_h^k)$. The $k$-th episode ends after receiving the last reward $r_H^k(x_H^k,a_H^k)$. For convenience, let $x_{H+1}$ be a dummy state and $P_H^k(x_{H+1}|x_H^k,a_H^k)=1$ for any $(x_H^k,a_H^k)\in\mathcal{S}\times\mathcal{A}$. Define the state and state-action value functions of policy $\pi=\{\pi_h\}_{h\in[H]}$ recursively via the following equation
\begin{align*}
    &Q_{h;\stark}^{\pi}(x,a)=r_h^k(x,a)+(\Pb_h V_{h+1;\stark}^{\pi})(x,a), \\
    &V_{h;\stark}^{\pi}(x)=\langle Q_{h;\stark}^{\pi}(x,\cdot),\pi_h(\cdot|x)\rangle_{\mathcal{A}},\enspace V_{H+1;\stark}^{}=0,
\end{align*}
where $\Pb_h$ is the operator defined as $(\Pb_h f)(x,a):=\Eb\left[f(x')|x'\sim P_h(x'|x,a)\right]$ for any function $f:\mathcal{S}\mapsto \Rb$. Here $\langle \cdot,\cdot\rangle_\mathcal{A}$ denotes the inner product over action space $\mathcal{A}$ and the subscript $\mathcal{A}$ is omitted when appropriate.

The performance is measured by the dynamic regret, which quantifies the performance difference between the learning policy and the benchmark policy $\{\pi^{\stark}\}_{k\in[K]}$ where $\pi^\stark=\arg\max_{\pi}V_{1;\stark}^{\pi}(x_1)$. Specifically, the dynamic regret for $K$ episodes is defined as
\begin{align*}
    \textstyle\DR(K):=\sum_{k=1}^{K}\left(V_{1;\stark}^{\pi^{\stark}}-V_{1;\stark}^{\pi^k}\right)(x_1).
\end{align*}

\subsection{Function Approximation}
Consider a function class $\Fc=\Fc_1\times\cdots\times\Fc_H$, where $\Fc_h\subseteq (\mathcal{S}\times\mathcal{A}\mapsto[0,H-h+1])$ offers a collection of candidate functions to approximate $Q_{h;\stark}^{\pi_{\stark}}$, denoted as $Q_{h;\stark}^*$. Since each episode ends in $H$ steps, we set $f_{H+1}=0$. We make the following standard assumptions on the function class $\Fc$.
\begin{assumption}[Realizability]\label{assump:realizability}
$Q_{h;\stark}^{*}\in\Fc_h$ for all $(k,h)\in[K]\times[H]$.
\end{assumption}
Realizability assumption requires that the optimal state-action value function in each episode is contained in the function class $\Fc$ with no approximation error, i.e., $(Q_{1;\stark}^*,Q_{2;\stark}^*,\cdots,Q_{H;\stark}^*)\in\Fc$ for $k\in[K]$.

Given functions $f=(f_1,f_2,\cdots,f_H)$ where $f_h\in (\mathcal{S}\times\mathcal{A}\mapsto[0,H-h+1])$, define
\begin{align*}
    &(\Tc_h^k f_{h+1})(x,a):=r_h^k(x,a)+(\Pb_h^k f_{h+1})(x,a), \\
    &(\Pb_h^k f_{h+1})(x,a)=\Eb_{x'\sim P_h^k(\cdot|x,a)}[\max_{a'\in\mathcal{A}}f_{h+1}(x',a')], \quad
\end{align*}
where $\Tc_h^k$ is the Bellman operator at step $h$ in episode $k$. Note that $Q_{h;\stark}^*(x,a)=(\Tc_h^kQ_{h+1;\stark}^*)(x,a)$ for all valid $x,a,h$. Moreover, we define $\Tc_h^k\Fc_{h+1}=\{\Tc_h^kf_{h+1}:f_{h+1}\in\Fc_{h+1}\}$.
\begin{assumption}[Completeness]\label{assump:completeness}
$\Tc_h^k\Fc_{h+1}\subseteq \Fc_h$ for all $(k,h)\in[K]\times[H]$.
\end{assumption}
Completeness assumption requires the function class $\Fc$ is closed under Bellman operators of any episode.

\section{Dynamic Eluder Dimension}\label{sec:DBE}
In this section, we introduce a new complexity measure for non-stationary MDPs. We start with the following $\epsilon$-independence between distributions and the distributional Eluder dimension.
\begin{definition}[$\epsilon$-independence between distributions~\citep{Chi_Jin:bellman_eluder:2020}]\label{defn:indpt}
Let $\Gc$ be a function class defined on $\mathcal{X}$, and $\nu,\mu_1,\mu_2,\cdots,\mu_n$ be probability measures over $\mathcal{X}$. We say $\nu$ is $\epsilon$-independent of $\{\mu_1,\mu_2.\cdots,\mu_n\}$ with respect to $\Gc$ if there exists $g\in\Gc$ such that $\sum_{i=1}^n\left(\Eb_{x\sim\mu_i}[g(x)]\right)^2\leq \epsilon^2$, but $|\Eb_{x\sim\nu}[g(x)]|>\epsilon$. 
\end{definition}

\begin{definition}[Distributional Eluder (DE) dimension~\citep{Chi_Jin:bellman_eluder:2020}]
Let $\Gc$ be a function class defined on $\mathcal{X}$, and $\Pi$ be a family of probability measures over $\mathcal{X}$. The distributinoal Eluder dimension $\GDE(\Gc,\Pi,\epsilon)$ is the length of the longest sequence $\{\rho_1,\rho_2,\cdots,\rho_n\}\subseteq \Pi$ such that there exists $\epsilon'\geq \epsilon$ where $\rho_i$ is $\epsilon'$-independent of $\{\rho_1,\rho_2,\cdots,\rho_{i-1}\}$ for all $i\in[n].$
\end{definition}
The next definition of Bellman Eluder dimension is first introduced in~\citet{Chi_Jin:bellman_eluder:2020} for stationary MDPs.  
\begin{definition}[Bellman Eluder dimension (BE)]\label{def:BE}
Let $(I-\Tc_h)\Fc:=\{f_h-\Tc_h f_{h+1}:f\in\mathcal{F},k\in[K]\}$ be the set of Bellman residuals in all episodes induced by $\mathcal{F}$ at step $h$, and $\Pi=\{\Pi_h\}_{h\in[H]}$ be a collection of $H$ probability measure families over $\mathcal{S}\times\mathcal{A}$. The $\epsilon$-Bellman Eluder dimension of $\mathcal{F}$ with respect to $\Pi$ is defined as
\begin{align*}
    \BE(\mathcal{F},\Pi,\epsilon):=\max_{h\in[H]}\GDE\left((I-\Tc_h)\mathcal{F},\Pi_h,\epsilon\right).
\end{align*}
\end{definition}
For non-stationary MDPs, the Bellman operators $\Tc_h$ varies over time, and hence we introduce our new complexity measure called dynamic Bellman Eluder dimension for non-stationary MDPs.
\begin{definition}[Dynamic Bellman Eluder (DBE) dimension]\label{def:DBE}
Let $(I-\bar{\Tc}_h)\Fc:=\{f_h-\Tc_h^k f_{h+1}:f\in\mathcal{F},k\in[K]\}$ be the set of Bellman residuals in all episodes induced by $\mathcal{F}$ at step $h$, and $\Pi=\{\Pi_h\}_{h\in[H]}$ be a collection of $H$ probability measure families over $\mathcal{S}\times\mathcal{A}$. The dynamic Bellman Eluder dimension of $\mathcal{F}$ with respect to $\Pi$ is defined as
\begin{align*}
    \GBE(\mathcal{F},\Pi,\epsilon):=\max_{h\in[H]}\GDE\left((I-\bar{\Tc}_h)\mathcal{F},\Pi_h,\epsilon\right).
\end{align*}
\end{definition}
We focus on the following choice of distribution family $\Dc_{\Delta}=\{\Dc_{\Delta,h}\}_{h\in[H]}$ where $\Dc_{\Delta,h}=\{\delta_{(s,a)}:s\in\mathcal{S},a\in\mathcal{A}\}$. However, our result can be adapted to $\Dc_\Fc=\{\Dc_{\Fc,h}\}_{h\in[H]}$ where $\Dc_{\Fc,h}$ denotes the collection of all probability measures over $\Sc\times \Ac$ at $h$-th step, generated by executing the greedy policy $\pi_f$ induced by any $f\in\Fc$.

The DBE dimension is the distributional Eluder dimension on the function class $(I-\bar{\Tc}_h)\Fc$ in all episodes, maximizing over step $h\in[H]$, 
which can be viewed as an extension of BE dimension to non-stationary MDPs. The main difference between DBE dimension and BE dimension is that the Bellman operator $\Tc_h^k$ is time-varying, and we include all the Bellman residues induced by $\Tc_h^k$ for $k\in[K]$ in the function class. In general, the DBE dimension could be substantially larger than the BE dimension due the fact that the class of functions can be significantly larger. However, we can show that, if the variations in both transitions and rewards are relatively small compared to a universal gap $\widetilde{\delta}_{\epsilon}^u$ defined below, then the DBE dimension equals to the BE dimension with respect to one MDP instance of the non-stationary MDPs.

\begin{definition}[Universal gap]\label{defn:gap}
If $\nu$ is $\epsilon$-independent of $\boldsymbol{\mu}=(\mu_1,\ldots,\mu_n)$ with respect to $g\in\Gc$, we define gap $\widetilde{\delta}_{g,\epsilon;\boldsymbol{\mu},\nu}=|\Eb_{x\sim\nu}[g(x)]|-\epsilon$. The universal gap with respect to a function class $\Gc$ is $\widetilde{\delta}_{\epsilon}^u=\inf_{g\in\Gc,\epsilon'\geq \epsilon, \boldsymbol{\mu}_g}\widetilde{\delta}_{g,\epsilon';\boldsymbol{\mu}_g}$ where $\boldsymbol{\mu}_g$ is any $\epsilon'$-independent sequence with respect to $g$.
\end{definition}

\begin{proposition}[Informal]\label{propo:dbe}
If the variations in transitions and rewards are relatively small compared to the universal gap $\widetilde{\delta}_\epsilon^u$ with respect to $(I-\Tc_h^k)\Fc$ for $k\in[2:K]$, then 
\begin{align*}
    &\GBE(\mathcal{F},\Pi,\epsilon)= \max_{h\in[H]}\GDE((I-\Tc_h^1)\Fc,\Pi,\epsilon),
\end{align*}
where the latter is exactly the BE dimension of the first MDP instance of the non-stationary MDPs. 
\end{proposition}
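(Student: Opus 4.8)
The plan is to prove the identity one step $h\in[H]$ at a time — i.e.\ to show $\GDE((I-\bar{\Tc}_h)\Fc,\Pi_h,\epsilon)=\GDE((I-\Tc_h^1)\Fc,\Pi_h,\epsilon)$ for each $h$ — and then take $\max_h$. I will use that, for the single stationary MDP given by $(P^1,r^1)$, the Bellman-residual class of Definition~\ref{def:BE} collapses to $(I-\Tc_h^1)\Fc=\{f_h-\Tc_h^1 f_{h+1}:f\in\Fc\}$, so the right-hand side is literally the BE dimension of that first instance.

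\textbf{The easy inequality ``$\ge$''.} This I would dispose of first by monotonicity of the distributional Eluder dimension in the underlying function class. Since episode $1$ is one fixed MDP, $(I-\Tc_h^1)\Fc\subseteq(I-\bar{\Tc}_h)\Fc$; and any sequence $\{\rho_1,\dots,\rho_n\}\subseteq\Pi_h$ that is $\epsilon'$-independent (Definition~\ref{defn:indpt}) with witnesses drawn from a subclass remains so, with the very same witnesses, inside any superclass. Hence $\GDE((I-\bar{\Tc}_h)\Fc,\Pi_h,\epsilon)\ge\GDE((I-\Tc_h^1)\Fc,\Pi_h,\epsilon)$.

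\textbf{The hard inequality ``$\le$''.} For this I would argue as follows. First, quantify ``small variation'': for any $f\in\Fc_{h+1}$ and any $k$,
\begin{align*}
\norm{\Tc_h^k f-\Tc_h^1 f}_\infty\le\norm{r_h^k-r_h^1}_\infty+(H-h)\sup_{x,a}\norm{P_h^k(\cdot\,|x,a)-P_h^1(\cdot\,|x,a)}_1=:\Delta_h,
\end{align*}
using $0\le\max_{a'}f(\cdot,a')\le H-h$ to control the transition term. Consequently every residual $g=f_h-\Tc_h^k f_{h+1}\in(I-\bar{\Tc}_h)\Fc$ admits a \emph{surrogate} $g'=f_h-\Tc_h^1 f_{h+1}\in(I-\Tc_h^1)\Fc$ with $\norm{g-g'}_\infty\le\Delta_h$. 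The hypothesis ``variations small compared with the universal gap'' I would make precise as $\Delta_h\,(1+\sqrt{d_h})<\widetilde\delta_\epsilon^u$, where $\widetilde\delta_\epsilon^u$ is the universal gap of Definition~\ref{defn:gap} taken as the minimum over the residual classes $(I-\Tc_h^k)\Fc$, $k\in[2:K]$, and $d_h:=\GDE((I-\Tc_h^1)\Fc,\Pi_h,\epsilon)$ is a fixed finite number (so the condition is not circular). Now take any $\epsilon'$-independent sequence $\{\rho_1,\dots,\rho_n\}\subseteq\Pi_h$ for $(I-\bar{\Tc}_h)\Fc$ with $n=d_h+1$ (a prefix of a longer one if necessary), with threshold $\epsilon'\ge\epsilon$, and for each $i$ replace its witness $g_i$ by the surrogate $g_i'$ (keeping $g_i'=g_i$ when $g_i$ already lies in $(I-\Tc_h^1)\Fc$). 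Setting $\epsilon'':=\epsilon'+\Delta_h\sqrt{n-1}\ (\ge\epsilon)$, a triangle inequality followed by Cauchy--Schwarz gives
\begin{align*}
\sum_{j<i}\big(\Eb_{\rho_j}[g_i']\big)^2\le\sum_{j<i}\big(|\Eb_{\rho_j}[g_i]|+\Delta_h\big)^2\le\big(\epsilon'+\Delta_h\sqrt{n-1}\big)^2=(\epsilon'')^2,
\end{align*}
while Definition~\ref{defn:gap} applied to the independence instance $(\rho_i;\rho_1,\dots,\rho_{i-1};g_i)$ yields $|\Eb_{\rho_i}[g_i]|\ge\epsilon'+\widetilde\delta_\epsilon^u$, hence $|\Eb_{\rho_i}[g_i']|\ge\epsilon'+\widetilde\delta_\epsilon^u-\Delta_h>\epsilon''$ by the smallness assumption. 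Thus $\{\rho_1,\dots,\rho_n\}$ is an $\epsilon''$-independent sequence of length $d_h+1$ for $(I-\Tc_h^1)\Fc$, contradicting the definition of $d_h$. Therefore $\GDE((I-\bar{\Tc}_h)\Fc,\Pi_h,\epsilon)\le d_h$; combining the two directions and taking $\max_h$ finishes the argument. (The identical computation with episode $1$ replaced by any episode $k$ shows $\GDE((I-\Tc_h^k)\Fc,\Pi_h,\epsilon)$ is in fact independent of $k$.)

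\textbf{Main obstacle.} The real content lies in the surrogate step of ``$\le$,'' and its crux is the asymmetry between the two clauses of $\epsilon$-independence under a uniform $\Delta_h$-perturbation of the witness: the ``independent'' clause $|\Eb_{\rho_i}[g_i]|>\epsilon'$ only shifts by $\Delta_h$, whereas the ``dependent'' clause $\sum_{j<i}(\Eb_{\rho_j}[g_i])^2\le(\epsilon')^2$ can inflate by $\Theta(\Delta_h\sqrt n)$. The universal gap is exactly the device that lets the slack on the first clause dominate the inflation on the second. I expect the only genuinely delicate point, beyond routine bookkeeping, to be pinning down the quantitative smallness condition so that the $n$ appearing in it is an a priori finite quantity — which is why I phrase it through $d_h$, the DE dimension of the fixed first instance, rather than through the DBE dimension that the proposition is about.
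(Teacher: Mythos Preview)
Your approach is correct and follows the same skeleton as the paper's (Appendix~A, Lemmas~\ref{lemma:discuss-1}--\ref{lemma:discuss-2} and Proposition~\ref{app-prop-1}): the easy direction by monotonicity of $\GDE$ in the function class, the hard direction by replacing each witness $g_i=f_h-\Tc_h^{k_i}f_{h+1}$ with its instance-$1$ surrogate and using the universal gap to absorb the threshold shift. The main difference is that the paper first enlarges each residual class to include ``complements'' $-g$ (Lemma~\ref{lemma:discuss-1}) and then does a sign case-analysis on $\Eb_\mu[g]\cdot\Eb_\mu[g']$ to decide whether to use $g'$ or $-g'$ as the replacement. You sidestep this machinery with the reverse triangle inequality $\big|\,|\Eb_\mu[g']|-|\Eb_\mu[g]|\,\big|\le\|g-g'\|_\infty$, which is cleaner and suffices. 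For the ``dependent'' clause the paper bounds a difference of squares (Lemma~\ref{lemma:discuss:support-1}), giving a $6mH$-type constant, whereas your Cauchy--Schwarz bound $\sum_{j<i}(|\Eb_{\rho_j}[g_i]|+\Delta_h)^2\le(\epsilon'+\Delta_h\sqrt{n-1})^2$ is equivalent up to constants. Phrasing the smallness condition through $d_h=\GDE((I-\Tc_h^1)\Fc,\Pi_h,\epsilon)$ and taking a length-$(d_h{+}1)$ prefix is also a bit tidier than the paper's formal version, which uses $m_k=\BE$ of each instance~$k$.

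One small gap to patch: you apply $|\Eb_{\rho_i}[g_i]|\ge\epsilon'+\widetilde\delta_\epsilon^u$ to \emph{every} index, but by your own setup $\widetilde\delta_\epsilon^u$ is an infimum over $(I-\Tc_h^k)\Fc$ for $k\in[2{:}K]$ only. If some witness $g_i$ lies solely in $(I-\Tc_h^1)\Fc$, you cannot invoke the gap there, and since $g_i'=g_i$ you only retain $|\Eb_{\rho_i}[g_i']|>\epsilon'$, which need not exceed your single threshold $\epsilon''=\epsilon'+\Delta_h\sqrt{n-1}$. (The paper's own ``union'' step, dismissed as ``follows exactly the same argument,'' has the same issue.) The fix is painless: either extend the universal-gap hypothesis to include $k=1$, or allow a per-index threshold $\epsilon''_i$ in the Eluder definition (as in the original Jin et al.\ formulation) and set $\epsilon''_i=\epsilon'$ at those indices.
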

The formal statement of the proposition (see Proposition~\ref{app-prop-1}) and its proof is provided in Section~\ref{app:discuss-DBE}. The intuition is if the variations in transitions and rewards are small (but does not necessarily vanish), then the set of functions $(I-\Tc_h^k)\Fc$ for $k\in[2:K]$ is relatively close to $(I-\Tc_h^1)\Fc$. Therefore their union $(I-\bar{\Tc}_h)\Fc$, constructed for the DBE dimension, remains close to $(I-\Tc_h^1)\Fc$.

Under static MDPs, the DBE dimension naturally reduces to BE dimension, and therefore it subsumes a majority tractable problem classes in stationary RL. Moreover, the DBE framework further includes more tractable problem classes in non-stationary RL. Below we show that our DBE dimension covers non-stationary linear MDPs. 
\begin{definition}[Non-stationary Linear MDPs~\citep{Huozhi_Zhou:adversarial_MDP:2022}]
    For linear MDP with feature map $\phi:\Sc\times\Ac\mapsto \Rb^d$, there exists an unknown measure $\mu_{h}^k$ on $\Sc$ and a vector $\theta_h^k\in\Rb^d$ satisfying $P_h^k(s'|s,a)=\phi(s,a)^\top\mu_{h}^k(s')$ and $r_h^k(s,a)=\phi(s,a)^\top\theta_{h}^k$,
    \if{0}
    \begin{align*}
        &P_h^k(s'|s,a)=\phi(s,a)^\top\mu_{h}^k(s') \\
        &r_h^k(s,a)=\phi(s,a)^\top\theta_{h}^k,
    \end{align*}
    \fi
    where $\norm{\phi(s,a)}\leq 1$ and $\max\{\norm{\mu_h^k},\norm{\theta_h^k}\}\leq \sqrt{d}$ for all $(h,k)\in[H]\times[K]$.
\end{definition}
The next proposition shows that the DBE dimension of non-stationary linear MDPs scales with the linear feature dimension $\widetilde{\Oc}(d)$. The proof is shown in Appendix~\ref{app:eluder-dim-linear}. 
\begin{proposition}\label{prop:linear-DBE}
The DBE dimension of non-stationary linear MDPs with the feature dimension $d$ satisfies
\begin{align*}
    \GBE(\mathcal{F},\Dc_\Fc,\epsilon)\leq \Oc\left(1+d\log\left(16H^2d/\epsilon^2+1\right)\right).
\end{align*}
\end{proposition}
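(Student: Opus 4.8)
The plan is to reduce Proposition~\ref{prop:linear-DBE} to the standard fact that a bounded‑norm linear function class has logarithmic distributional Eluder dimension; the only genuinely new point is to check that the reduction is unaffected by replacing a single Bellman operator $\Tc_h$ with the whole family $\{\Tc_h^k\}_{k\in[K]}$.

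First, I would take $\Fc_h$ to be the natural class for a linear MDP, namely the clipping to $[0,H-h+1]$ of $\{(s,a)\mapsto\langle\phi(s,a),w\rangle:\|w\|_2\le(H-h+1)\sqrt d\}$, which satisfies Assumptions~\ref{assump:realizability} and~\ref{assump:completeness}. The structural observation I would exploit is that in a linear MDP the Bellman backup of \emph{any} $f_{h+1}$ is an exactly linear function of $\phi$: since $g_{h+1}:=\max_{a'\in\Ac}f_{h+1}(\cdot,a')$ lies in $[0,H-h]$ whenever $f_{h+1}\in\Fc_{h+1}$, the identities $P_h^k(s'\mid s,a)=\langle\phi(s,a),\mu_h^k(s')\rangle$ and $r_h^k(s,a)=\langle\phi(s,a),\theta_h^k\rangle$ give
\begin{align*}
(\Tc_h^k f_{h+1})(s,a)=\Big\langle\phi(s,a),\ \theta_h^k+\textstyle\int g_{h+1}(s')\,\mu_h^k(\mathrm{d}s')\Big\rangle ,
\end{align*}
so $\Tc_h^k f_{h+1}=\langle\phi(\cdot),\zeta_{h,k,f}\rangle$ with $\|\zeta_{h,k,f}\|_2\le\|\theta_h^k\|_2+(H-h)\|\mu_h^k(\Sc)\|_2\le(H-h+1)\sqrt d$, uniformly over $k\in[K]$ and $f\in\Fc$. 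Hence every Bellman residual $f_h-\Tc_h^k f_{h+1}\in(I-\bar{\Tc}_h)\Fc$ is the difference of a clipped linear function and a linear function, each of weight‑norm $\Oc(H\sqrt d)$; carrying out the clipping bookkeeping as in the linear‑MDP example of \citet{Chi_Jin:bellman_eluder:2020}, $(I-\bar{\Tc}_h)\Fc$ has distributional Eluder dimension no larger, up to a constant factor, than that of $\Gc_W:=\{(s,a)\mapsto\langle\phi(s,a),\zeta\rangle:\|\zeta\|_2\le W\}$ with $W=\Oc(H\sqrt d)$. Crucially, $\Gc_W$ does not depend on $k$, so enlarging $\Tc_h$ to $\{\Tc_h^k\}_{k\in[K]}$ costs nothing, and $\GBE(\Fc,\Dc_\Fc,\epsilon)\le\Oc\big(\max_{h\in[H]}\GDE(\Gc_W,\Dc_{\Fc,h},\epsilon)\big)$.

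It then remains to bound $\GDE(\Gc_W,\cdot,\epsilon)$ by the usual elliptical‑potential argument. Fix $h$ and let $\rho_1,\dots,\rho_n\in\Dc_{\Fc,h}$ and $\epsilon'\ge\epsilon$ be such that each $\rho_i$ is $\epsilon'$‑independent of $\{\rho_1,\dots,\rho_{i-1}\}$ with respect to $\Gc_W$. Writing $\psi_i:=\Eb_{(s,a)\sim\rho_i}[\phi(s,a)]$ (so $\|\psi_i\|_2\le1$), Definition~\ref{defn:indpt} yields $\zeta_i$ with $\|\zeta_i\|_2\le W$, $\sum_{j<i}\langle\psi_j,\zeta_i\rangle^2\le(\epsilon')^2$, and $\langle\psi_i,\zeta_i\rangle^2>(\epsilon')^2$. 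Setting $\Lambda_i:=(\epsilon'/W)^2 I+\sum_{j<i}\psi_j\psi_j^\top$, one has $\zeta_i^\top\Lambda_i\zeta_i=(\epsilon'/W)^2\|\zeta_i\|_2^2+\sum_{j<i}\langle\psi_j,\zeta_i\rangle^2\le 2(\epsilon')^2$, hence by Cauchy--Schwarz
\begin{align*}
(\epsilon')^2<\langle\psi_i,\zeta_i\rangle^2\le\big(\psi_i^\top\Lambda_i^{-1}\psi_i\big)\big(\zeta_i^\top\Lambda_i\zeta_i\big)\le 2(\epsilon')^2\,\psi_i^\top\Lambda_i^{-1}\psi_i ,
\end{align*}
i.e.\ $\psi_i^\top\Lambda_i^{-1}\psi_i\ge1/2$. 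The matrix‑determinant lemma then gives $\det\Lambda_{n+1}=\det\Lambda_1\prod_{i=1}^n(1+\psi_i^\top\Lambda_i^{-1}\psi_i)\ge(\epsilon'/W)^{2d}(3/2)^n$, whereas $\det\Lambda_{n+1}\le(\trace(\Lambda_{n+1})/d)^d\le\big((\epsilon'/W)^2+n/d\big)^d$; comparing and taking logarithms yields $n\log(3/2)\le d\log\!\big(1+nW^2/(d(\epsilon')^2)\big)$, and a routine algebraic simplification together with $\epsilon'\ge\epsilon$ and $W=\Oc(H\sqrt d)$ gives $n=\Oc\big(1+d\log(16H^2d/\epsilon^2+1)\big)$. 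Maximizing over $h$ completes the proof.

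The hard part is not the elliptical‑potential estimate (which is standard) but making the reduction of the second paragraph airtight: because each $\Fc_h$ must be $[0,H-h+1]$‑valued, the leading term $f_h$ of a Bellman residual is a \emph{clipped} linear function rather than a genuinely linear one, so the residual class is only morally---not literally---a bounded‑norm linear class, and one has to argue (as in \citet{Chi_Jin:bellman_eluder:2020}) that the truncation inflates the distributional Eluder dimension by at most a constant factor; the remaining effort is the constant bookkeeping needed to land exactly on the stated $16H^2d/\epsilon^2$.
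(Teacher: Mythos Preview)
Your proposal is correct and follows the same elliptical-potential route as the paper: both reduce the Bellman residual class to a bounded-norm linear class independent of $k$, then run the determinant argument you spell out. The one substantive difference is how the function class is set up. You take $\Fc_h$ to be \emph{clipped} linear functions and then flag truncation as the ``hard part''; the paper instead defines $\Fc_h$ as those linear functions $\phi(\cdot,\cdot)^\top w_h$ with $\|w_h\|\le 2(H-h+1)\sqrt d$ that already take values in $[0,H-h+1]$ (an intersection, not a clipping). With that choice every $f_h\in\Fc_h$ is genuinely linear, so each residual is exactly $\phi(\cdot,\cdot)^\top(w_h-\tilde w)$ with $\|w_h-\tilde w\|\le 4H\sqrt d=:\zeta$, and the elliptical-potential computation proceeds with no truncation bookkeeping at all; the final bound $\Oc(1+d\log(\zeta^2/\epsilon^2+1))$ gives the stated $16H^2d/\epsilon^2$ constant directly. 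Both approaches work, but the paper's choice of $\Fc_h$ sidesteps the difficulty you anticipated.
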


\section{Algorithm}\label{sec:alg}
In this section, we propose our algorithm \alg for non-stationary MDPs with general function approximation. 

At high level, \alg differentiates from the GOLF algorithm~\citep{Chi_Jin:bellman_eluder:2020} for static MDPs with general function approximation in its novel designs to handle the non-stationarity of transition kernels and rewards. Specifically, \alg features the sliding window mechanism and incorporates local variation budget in order to exactly capture the distribution mismatch between current episode and all episodes in the sliding window. Such a design ensures the optimal state-action value function in current episode to lie within the confidence region, and hence the optimism principle remains valid.

Further in the context of the previous studies of non-stationary MDPs, \alg is the first confidence-set based algorithm, to the best of our knowledge. In fact, a UCB-type algorithm is not easily applicable to non-stationary MDPs with general function approximation due to the difficulty of finding an appropriate bonus term. 
As we will show in Section~\ref{sec:guarantee} by an example of non-stationary linear MDPs, \alg performs better than the best known UCB-type algorithms in small variation budget scenarios. 
\begin{algorithm}
\begin{algorithmic}[1]\label{Alg:1}
\caption{\alg (Sliding Window Optimistic-based Exploration and Approximation under non-stationary MDPs)}
\STATE {{\bf Input:} $\Dc_1,\cdots,\Dc_H\leftarrow \varnothing$, $\Bc^0\leftarrow \Fc$.} \label{alg: Offline RL}
\FOR{{\bf episode} $k$ from 1 to $K$}
\STATE{{\bf Choose} $\pi^k=\pi_{f^k}$,\newline where $f^k=\arg\max_{f\in\Bc^{k-1}}f_1(x_1,\pi_f(x_1))$.} \label{alg-line:1}
\STATE{{\bf Collect} a trajectory $(x_1^{k},a_1^{k},\cdots,x_H^{k},a_H^{k},x_{H+1}^{k})$ by following $\pi^k$ and reward function $\{r_h^k\}_{h\in[H]}$.} \label{alg-line:2}
\STATE{{\bf Augment} $\Dc_h\hspace{-0.02in}=\hspace{-0.02in}\Dc_h\hspace{-0.02in}\cup\hspace{-0.02in}\{(x_h^{k},a_h^{k},x_{h+1}^{k})\}$, $\forall h\in[H]$.}
\STATE{{\bf} Update $\Bc^k \hspace{-0.02in}=\hspace{-0.02in}\{f\hspace{-0.02in}\in\hspace{-0.02in}\mathcal{F}\hspace{-0.02in}:\Lc_{\Dc_h}(f_h,f_{h+1})\hspace{-0.02in}\leq \inf_{\hspace{-0.01in}g\in\Gc_h} \hspace{-0.03in}\Lc_{\hspace{-0.01in}\Dc_h}\hspace{-0.02in}(g,\hspace{-0.02in}f_{h+1})\hspace{-0.01in}+\hspace{-0.01in}\beta\hspace{-0.01in}+\hspace{-0.01in}2\hspace{-0.01in}H^2\hspace{-0.02in}\Delta_P^w(k,h)$, $\forall h\hspace{-0.02in}\in\hspace{-0.02in}[H]\}$.} \label{alg-line:3}
\ENDFOR
\end{algorithmic}
\end{algorithm}

\if{0}
To quantify the non-stationarity of the adversarial MDP, we define the variation in rewards of adjacent episodes, variations in transitions
{\tiny
\begin{align}
    &\Delta_P^w(k,h):=\sum_{t=1\lor (k-w)}^{k}\sup_{x\in\mathcal{S},a\in\mathcal{A}}\norm{P_h^{k}(\cdot|x,a)-P_h^{t}(\cdot|x,a)}_1 \\
    &\label{eqn:pathlength-3}
    :=\sum_{t=1\lor (k-w)}^{k}\norm{P_{h}^{k}-P_{h}^{t}}_{1,\infty}.
\end{align}
}Here we assume the local pathlength $\Delta_P^w(k,h)$ is known.
\fi

The pseudocode of \alg is presented in Algorithm~\ref{Alg:1}. \alg initializes the dataset $\{\Dc_h\}_{h\in[H]}$ to be empty sets, and confidence set $\Bc^0$ to be $\Fc$. Then, in each episode, \alg performs the following two steps:

\noindent {\bf Optimistic planning step} (Line 3) 
greedily selects the most optimistic state-action value function $f^k$ from the 
confidence set $\Bc^{k-1}$ constructed in the last episode, and chooses the corresponding greedy policy $\pi_k$ associated with $f^k$.

\noindent {\bf Sliding window squared Bellman error} is defined as 
{
\begin{align}
&\Lc_{\Dc_h}(\xi_h,\zeta_{h+1})= \sum_{t=1\lor (k-w)}^{k}\left(\xi_h(x_h^{t},a_h^{t})-r_h^k(x_h^{t},a_h^{t})-\max_{a'\in\mathcal{A}}\zeta_{h+1}(x_{h+1}^{t},a')\right)^2. \label{eqn:loss}
\end{align}
}Note that in episode $k$, we use the latest reward information $r_h^k$ over the entire window, rather than $r_h^t$, to form the sliding window squared Bellman error. Such construction exploits the most recent information of the reward function $r_h^k$ to maximally reduce the non-stationarity of rewards. 
Therefore, $\Lc_{\Dc_h}$ tends to be small as long as the transition kernel difference between episode $k$ and $t$ is small. Furthermore, we adopt the sliding window in the squared loss (\ref{eqn:loss}), which is based on the ``forgetting principle''~\citep{Garivier:bandit:2011} where the squared loss estimated at episode $k$ relies on the observed history during episode $1\lor(k-w)$ to $k$ instead of all prior observations. The rationale is that under non-stationarity setting, the historical observations far in the past are obsolete, and they are not as informative for the evaluation of the squared loss.

\noindent {\bf Confidence set updating step} (Line 4-6) 
first executes policy $\pi^k$ and collects data for the current episode, and then updates the confidence set based on the new data.

The key novel ingredient of \alg lies in the construction of the confidence set $\Bc^k$. For each $h\in[H]$, \alg maintains a local regression constraint using the collected data $\Dc_h$
\begin{align*}
    &\Lc_{\Dc_h}(f_h,f_{h+1}) \hspace{-0.02in}\leq \hspace{-0.02in}\inf_{g\in\Gc_h}\Lc_{\Dc_h}\hspace{-0.02in}(g,f_{h+1})\hspace{-0.02in}+\hspace{-0.02in}\beta\hspace{-0.02in}+\hspace{-0.02in}2H^2\Delta_P^w(k,h),
\end{align*}
where $\beta$ is a confidence parameter, and $\Delta_P^w$ is the local variation budget defined by
\begin{align}
   \Delta_{P}^{w}(k,h) 
    \hspace{-0.02in}=\hspace{-0.1in}\sum_{t=1\lor (k-w)}^{k}\hspace{-0.02in}\sup_{x\in\mathcal{S},a\in\mathcal{A}}\hspace{-0.02in}\norm{(P_h^{k}-P_h^{t})(\cdot|x,a)}_1 . \label{eqn:pathlength-3}
\end{align}

Since the transition kernel varies across episodes, we include an additional term of the local variation budget $\Delta_P^w(k,h)$ in the definition of $\Bc_k$.
Intuitively, the local variation budget $\Delta_P^w(k,h)$ captures the cumulative transition kernel differences between current episode and all previous episode in the sliding window. Therefore, by compensating a term involving $\Delta_P^w(k,h)$ in the confidence set $\Bc_k$, we ensure that the optimal state-action value function in the $k$-th episode $Q_{h;\stark}^*$ still lies in the confidence set $\Bc^k$ with high probability (see Lemma~\ref{lemma:Qstar}).

We remark that the assumption on the local variation budget involving transition functions are unknown could be relaxed. Inspired by the standard technique to handle unknown variation budget as in linear nonstationary MDPs~\citep{Huozhi_Zhou:adversarial_MDP:2022}, we propose the following modification of the algorithm. We remove the local variation budget in the bonus term in the algorithm, and instead, design a strategy to adapt the window size to the variation budget (without knowing its value) as in the EXP3-P algorithm~\cite{BubeckC12}. It has been shown that as long as the window sizes are picked to densely cover the entire value range of the window size, such a scheme will result in a performance close enough to the case as if the window size is picked in an optimal way when the variation budget is known. We expect that such a scheme will achieve the same regret (in terms of scaling) as the case with the knowledge of the variation budget. We will investigate the feasibility of the proposed strategy and leave the detailed mathematical analysis in the future work.

\section{Theoretical Guarantees}\label{sec:guarantee}

In this section, we first provide our main theoretical result for \algg, and then present a proof sketch that highlights our novel developments in the analysis.

\subsection{Main Results}
In this section, we provide our characterization of the dynamic regret for \algg. 

We first state the following generalized completeness assumption~\citep{Antos:RL:2008,Jinglin_Chen:RL:2019,Chi_Jin:bellman_eluder:2020}. 
Let $\Gc=\Gc_1\times\cdots\times\Gc_H$ be an auxiliary function class provided to the learner where $\Gc_h\subseteq (\mathcal{S}\times\mathcal{A}\mapsto[0,H-h+1])$.

\begin{assumption}[Generalized completeness]\label{assump:completeness-2}
$\Tc_h^k\Fc_{h+1}\subseteq \Gc_h$ for all $(k,h)\in[K]\times[H]$.
\end{assumption}
If we choose $\Gc=\Fc$, then Assumption~\ref{assump:completeness-2} is equivalent to the standard completeness assumption (Assumption~\ref{assump:completeness}). Without loss of generality, we assume $\Fc\subseteq \Gc$ and $\Gc=\Fc\cup\Gc$.

Moreover, to quantify the non-stationarity, we define the variation in rewards of adjacent episodes and the variation in transition kernels of adjacent episodes as
{
\begin{align}
    &\Delta_{R}(K)=\hspace{-0.02in}\sum_{k=1}^{K}\sum_{h=1}^{H}\sup_{x\in\mathcal{S},a\in\mathcal{A}}|(r_h^{k}-r_h^{k-1})(x,a)|,  \label{eqn:pathlength-1} \\
    &\Delta_P(K)=\hspace{-0.02in}\sum_{k=1}^{K}\sum_{h=1}^{H}\sup_{x\in\mathcal{S},a\in\mathcal{A}}\hspace{-0.02in}\norm{(P_{h}^{k}\hspace{-0.02in}-\hspace{-0.02in}P_{h}^{k-1})(\cdot|x,a)}_{1}  \label{eqn:pathlength-2} ,
\end{align}
}where we define $P_h^0=P_h^1$ and $r_h^0=r_h^1$ for all $h\in[H]$. 

The dynamic regret of our algorithm~\alg is characterized in the following theorem.
\begin{theorem}[Dynamic regret of \algg]\label{thm:DR}
Under Assumption~\ref{assump:realizability} and Assumption~\ref{assump:completeness-2}, there exists an absolute constant $c$ such that for any $\delta\in(0,1]$, $K\in\mathbb{N}$, if we choose $\beta=cH^2\log\frac{KH|\Gc|}{\delta}$ in \algg, then with probability at least $1-\delta$, for all $k\in[K]$, when $k\geq \min\{w+1,\GBE(\Fc,\mathcal{D}_{\Delta,h},\sqrt{1/w})\}$ we have
{
\begin{align*}
  &\DR(k) =\Delta_R(k)+H\Delta_P(k) +\mathcal{O}\Bigl(\sqrt{w}  +\frac{H^2k}{\sqrt{w}}\sqrt{d\log[KH|\mathcal{G}|/\delta]} 
    +\frac{H^2k}{\sqrt{w}}\sqrt{d\sup_{t\in[k]}\Delta_{P}^w(t,h)}\Bigl),
\end{align*}
}where $d=\GBE(\Fc,\mathcal{D}_{\Delta,h},\sqrt{1/w})$.
\end{theorem}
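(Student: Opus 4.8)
The plan is to run the optimism-in-the-face-of-uncertainty argument behind \texttt{GOLF}, adapted to the sliding window and to the time-varying $(P_h^k,r_h^k)$.

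\textbf{Step 1 (validity of the confidence set).} I would first establish the lemma referenced in the text (Lemma~\ref{lemma:Qstar}): on an event of probability at least $1-\delta$, $Q_{h;\stark}^{\star}\in\mathcal{B}^k$ for all $k$ and $h$, which is equivalent to $\mathcal{L}_{\mathcal{D}_h}(Q_{h;\stark}^{\star},Q_{h+1;\stark}^{\star})\le \inf_{g\in\mathcal{G}_h}\mathcal{L}_{\mathcal{D}_h}(g,Q_{h+1;\stark}^{\star})+\beta+2H^2\Delta_P^w(k,h)$. Expanding the difference of the two empirical losses, the complication is that each summand over a past episode $t$ in the window $[1\vee(k-w),k]$ uses the \emph{current} reward $r_h^k$ while its state $x_{h+1}^t$ was sampled from $P_h^t$, so the naive ``prediction minus observation'' increments do not form a martingale-difference sequence. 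I would introduce auxiliary random variables whose conditional mean given the trajectory up to $(x_h^t,a_h^t)$ equals $(\mathbb{P}_h^t\max_{a'}Q_{h+1;\stark}^{\star}(\cdot,a'))(x_h^t,a_h^t)$, so that the centered part becomes a genuine martingale; a Freedman/Bernstein inequality together with a union bound over $\mathcal{G}$ and over $(k,h)$ then yields the $\beta$ term. The remaining deterministic bias equals $\sum_t\big[(\mathbb{P}_h^k-\mathbb{P}_h^t)\max_{a'}Q_{h+1;\stark}^{\star}+(r_h^k-r_h^t)\big](x_h^t,a_h^t)$, which the distribution-shift lemma bounds by $\sum_t H\norm{(P_h^k-P_h^t)(\cdot|\cdot)}_{1,\infty}$ plus the reward mismatch, i.e.\ precisely by the budgeted correction $2H^2\Delta_P^w(k,h)$. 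Re-running the same computation with $Q^\star$ replaced by a generic $f\in\mathcal{B}^{t-1}$ and using completeness ($\mathcal{T}_h^t f_{h+1}\in\mathcal{G}_h$) gives the companion statement: every confidence-set member has small in-sample squared Bellman error over its window, $\sum_{s=1\vee(t-w)}^{t-1}\big((f_h^t-\mathcal{T}_h^s f_{h+1}^t)(x_h^s,a_h^s)\big)^2\lesssim \beta+H^2\Delta_P^w(t,h)$.

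\textbf{Step 2 (regret decomposition).} Because $f^k=\arg\max_{f\in\mathcal{B}^{k-1}}f_1(x_1,\pi_f(x_1))$ and, by Step 1, $Q_{h;(\star,k-1)}^{\star}\in\mathcal{B}^{k-1}$, we have $f_1^k(x_1,\pi_{f^k}(x_1))\ge V_{1;(\star,k-1)}^{\star}(x_1)$; this is optimism for the \emph{previous} episode's MDP. To move to the current episode I would bound $V_{1;(\star,k-1)}^{\star}(x_1)-V_{1;\stark}^{\star}(x_1)$ by an auxiliary-MDP comparison, incurring a per-episode slack of at most $\sum_h\big(\sup_{x,a}|(r_h^{k}-r_h^{k-1})(x,a)|+H\norm{(P_h^{k}-P_h^{k-1})(\cdot|\cdot)}_{1,\infty}\big)$, which sums over $k$ to the $\Delta_R(k)+H\Delta_P(k)$ appearing in the theorem. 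The standard value-difference identity (using that $\pi^k$ is greedy for $f^k$, so $\langle f_h^k(x,\cdot),\pi_h^k(\cdot|x)\rangle=\max_a f_h^k(x,a)$) then gives $f_1^k(x_1,\cdot)-V_{1;\stark}^{\pi^k}(x_1)=\sum_{h=1}^{H}\mathbb{E}_{\pi^k,P^k}[(f_h^k-\mathcal{T}_h^k f_{h+1}^k)(x_h,a_h)]$, so that $\DR(k)\le \Delta_R(k)+H\Delta_P(k)+\sum_{t=1}^{k}\sum_{h=1}^{H}\mathbb{E}_{\pi^t,P^t}[(f_h^t-\mathcal{T}_h^t f_{h+1}^t)(x_h,a_h)]$.

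\textbf{Step 3 (dynamic Bellman Eluder / pigeonhole over windows).} It remains to bound $\sum_{t\le k}\sum_h \mathbb{E}_{\pi^t,P^t}[(f_h^t-\mathcal{T}_h^t f_{h+1}^t)(x_h,a_h)]$. Each Bellman residual $f_h^t-\mathcal{T}_h^t f_{h+1}^t$ lies in $(I-\bar{\mathcal{T}}_h)\mathcal{F}$, and $\GBE(\mathcal{F},\mathcal{D}_{\Delta,h},\sqrt{1/w})=d$. I would split $[k]$ into $\lceil k/w\rceil$ consecutive blocks of length $w$; for $t$ inside a block the window $[t-w,t-1]$ contains all earlier episodes of that block, so the in-sample control from Step 1 provides exactly the hypothesis of the ``in-sample-error to prediction-error'' lemma for the distributional Eluder dimension. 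Applying that lemma with threshold $\epsilon=\sqrt{1/w}$ inside each block gives $\sum_{t\in\text{block}}|(f_h^t-\mathcal{T}_h^t f_{h+1}^t)(x_h^t,a_h^t)|\lesssim \sqrt{w}+\sqrt{d\,w\,(\beta+H^2\sup_{t\le k}\Delta_P^w(t,h))}$; summing over the $\lceil k/w\rceil$ blocks and over $h\in[H]$, substituting $\beta=cH^2\log(KH|\mathcal{G}|/\delta)$, and adding an Azuma martingale term to replace the realized point $\delta_{(x_h^t,a_h^t)}$ by the expectation $\mathbb{E}_{\pi^t,P^t}[\cdot]$ (which is dominated by the main terms), produces $\frac{H^2k}{\sqrt{w}}\sqrt{d\log[KH|\mathcal{G}|/\delta]}+\frac{H^2k}{\sqrt{w}}\sqrt{d\sup_t\Delta_P^w(t,h)}$ plus the lower-order $\mathcal{O}(\sqrt{w})$ from the sub-threshold contributions; combining with Step 2 gives the claimed bound.

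\textbf{Main obstacle.} I expect Step 1 to be the hard part: deliberately reusing the freshest reward $r_h^k$ across the entire window (to suppress reward drift inside the window) breaks the obvious martingale structure, so one has to design the correct auxiliary variables, account exactly for the deterministic bias injected by the transition and reward mismatch, and check that the correction $2H^2\Delta_P^w(k,h)$ is simultaneously large enough to keep $Q_{\stark}^{\star}$ inside $\mathcal{B}^k$ and small enough not to blow up the regret. Keeping the three budget quantities $\Delta_P^w,\Delta_P,\Delta_R$ consistent between the optimism side and the Eluder side, and ensuring that the sliding-window blocking costs only a $\sqrt{w}$ factor, is the delicate accounting the whole proof hinges on.
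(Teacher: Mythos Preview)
Your proposal is correct and follows essentially the same architecture as the paper: Freedman concentration with the auxiliary centering variable $\#_{k,h}$ plus the distribution-shift lemma for Step~1, the telescoping/optimism decomposition for Step~2, and the sliding-window Eluder argument with $\lceil k/w\rceil$ blocks for Step~3. One small correction to make Step~3 go through as written: in your companion concentration statement the Bellman operator must be \emph{fixed} at $\mathcal{T}_h^{t-1}$ (the paper's Lemma~\ref{lemma:concentration}), not varying as $\mathcal{T}_h^s$, so that a single residual $\phi_t=f_h^t-\mathcal{T}_h^{t-1}f_{h+1}^t$ is evaluated at all window points---this is exactly the hypothesis the distributional Eluder lemma needs, and the paper obtains it by applying the distribution-shift lemma one more time after Freedman; also note that since the loss uses $r_h^k$ throughout the window there is no $(r_h^k-r_h^t)$ bias term in Step~1.
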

Note that the last term depends on the sliding window size $w$, and we can further optimize $w$ if an upper bound of the local variation budget $\Delta_P^w(t,h)$ is given. Below we give an example for optimizing sliding window size $w$. 

Before we proceed, we first define the average variation budget $L$ as
\begin{align}
  L = \max_{h\in[H],t<k} \frac{\sum_{s=t}^{k-1}\sup_{x,a}\|(P_h^{s+1} - P_h^{s})(\cdot|x,a)\|_{1}}{k-t}. \label{eqn:avg-pathlength}
\end{align}
Clearly, we have $L\leq 1$ and $\Delta_P^w(k,h)\leq Lw^2$, and $L$ can be viewed as the the greatest average variation of transition kernels across adjacent episodes over any period of episodes maximized over step $h\in[H]$. Then the following corollary characterizes the dynamic regret by optimizing the window size $w$ based on $L$.
\begin{corollary}\label{coro:thm}
Under the condition of Theorem~\ref{thm:DR} and $|\Gc|>10$, with probability at least $1-\delta$, the following argument holds: if $\sqrt{L}> \frac{1}{K}\left(\sqrt{\log|\Gc|}-\frac{1}{H\sqrt{d}}\right)$, select $w=\lceil\frac{\sqrt{\log|\Gc|}}{\sqrt{L}+\frac{1}{HK\sqrt{d}}}\rceil$ and the dynamic regret is bounded by
{
\begin{align}
    &\widetilde{\Oc}\Bigl(H^{\frac{3}{2}}K^{\frac{1}{2}}d^{\frac{1}{4}}(\log|\Gc|)^{\frac{1}{4}}+H^2Kd^{\frac{1}{2}}L^{\frac{1}{4}}(\log|\Gc|)^{\frac{1}{4}}   +\Delta_R+H\Delta_P\Bigl); \label{eqn:coro-1}
\end{align}
}otherwise, select $w=K$ and the dynamic regret is bounded by $\widetilde{O}\left(H^2K^{\frac{1}{2}}d^{\frac{1}{2}}(\log|\Gc|)^{\frac{1}{2}}\right)$, where $d=\GBE(\Fc,\mathcal{D}_{\Delta,h},\sqrt{1/w})$.
\end{corollary}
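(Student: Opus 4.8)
The plan is to derive Corollary~\ref{coro:thm} as a direct consequence of Theorem~\ref{thm:DR} by plugging in the bound $\Delta_P^w(t,h)\le Lw^2$ and then optimizing over the window size $w$. First I would take the dynamic regret bound for $k=K$ from Theorem~\ref{thm:DR},
\begin{align*}
\DR(K) = \Delta_R + H\Delta_P + \widetilde{\Oc}\Bigl(\sqrt{w} + \frac{H^2 K}{\sqrt{w}}\sqrt{d\log[KH|\Gc|/\delta]} + \frac{H^2 K}{\sqrt{w}}\sqrt{d\sup_{t}\Delta_P^w(t,h)}\Bigr),
\end{align*}
and substitute the crude bound $\sup_{t\in[K]}\Delta_P^w(t,h)\le Lw^2$ that follows from the definition of $L$ in \eqref{eqn:avg-pathlength} (each of the at most $w$ consecutive one-step transition variations inside the window is at most $Lw$ by definition of the averaged budget, so their sum is at most $Lw^2$). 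This converts the last term into $\widetilde{\Oc}(H^2 K \sqrt{d L w^2}/\sqrt{w}) = \widetilde{\Oc}(H^2 K \sqrt{dL}\,\sqrt{w})$, so that the three $w$-dependent terms become $\widetilde{\Oc}(\sqrt{w})$, $\widetilde{\Oc}(H^2 K\sqrt{d\log(KH|\Gc|/\delta)}/\sqrt{w})$, and $\widetilde{\Oc}(H^2 K\sqrt{dL}\sqrt{w})$.

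Next I would balance these terms. Grouping the two terms that grow in $w$ against the one that decays in $w$, the regret as a function of $w$ has the shape $A\sqrt{w} + B/\sqrt{w}$ with $A = \Theta(1 + H^2K\sqrt{dL})$ (hiding logs) and $B = \Theta(H^2K\sqrt{d\log|\Gc|})$; more precisely, since the $\sqrt{w}$ coefficient is $1 + H^2K\sqrt{dL}$ and the $1/\sqrt{w}$ coefficient is $H^2K\sqrt{d}\sqrt{\log|\Gc|}$ up to log factors, the optimal $w^\star$ equates these, giving $w^\star \asymp \frac{H^2K\sqrt{d\log|\Gc|}}{1 + H^2K\sqrt{dL}} \asymp \frac{\sqrt{\log|\Gc|}}{\sqrt{L} + \frac{1}{H^2K\sqrt d}}$ after dividing numerator and denominator by $H^2K\sqrt d$ — this matches the stated choice $w = \lceil \frac{\sqrt{\log|\Gc|}}{\sqrt{L} + \frac{1}{HK\sqrt d}}\rceil$ up to the constant in the second denominator term (I would reconcile the $H$ versus $H^2$ discrepancy by tracking constants more carefully or absorbing it, since the paper's statement suggests the relevant lower-order term is $\tfrac{1}{HK\sqrt d}$ — it is a subleading correction that only matters to decide whether $w^\star < K$). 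The condition $\sqrt{L} > \frac1K(\sqrt{\log|\Gc|} - \frac{1}{H\sqrt d})$ is exactly the condition ensuring $w^\star \le K$ (rearranging $w^\star \le K$): when it holds, plug $w = w^\star$ back in to get the regret
$$\widetilde{\Oc}\bigl(\sqrt{A B}\bigr) = \widetilde{\Oc}\bigl(\sqrt{(1 + H^2K\sqrt{dL})\cdot H^2K\sqrt{d\log|\Gc|}}\bigr),$$
and expanding $\sqrt{a+b}\le\sqrt a+\sqrt b$ splits this into $\widetilde{\Oc}(H K^{1/2} d^{1/4}(\log|\Gc|)^{1/4})$ from the "$1$" part and $\widetilde{\Oc}(H^2 K d^{1/2} L^{1/4}(\log|\Gc|)^{1/4})$ from the "$H^2K\sqrt{dL}$" part, recovering \eqref{eqn:coro-1} after adding back $\Delta_R + H\Delta_P$. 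Wait — the first term in \eqref{eqn:coro-1} has $H^{3/2}$, not $H$; this comes from being slightly more careful and noting the $\sqrt w$ term from Theorem~\ref{thm:DR} is really $\widetilde\Oc(H\sqrt w)$ or from a constant in $w^\star$, so I would recheck that the leading constant in the $\sqrt w$ term carries an extra $\sqrt H$.

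When the condition fails, i.e. $L$ is tiny (essentially stationary regime), the unconstrained optimum exceeds $K$, so monotonicity forces the choice $w = K$; substituting $w=K$ and noting $\sup_t \Delta_P^K(t,h)\le LK^2$ is then dominated (the $\sqrt{dL}\sqrt w = \sqrt{dL}\sqrt K \le \sqrt d/K^{1/2}\cdot$const is negligible by the failed condition), the dominant term is $\frac{H^2K}{\sqrt K}\sqrt{d\log[KH|\Gc|/\delta]} = \widetilde\Oc(H^2 K^{1/2} d^{1/2}(\log|\Gc|)^{1/2})$, and the $\sqrt w = \sqrt K$ term is lower order, yielding the stated stationary-regime bound.

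The main obstacle I expect is not conceptual but bookkeeping: precisely matching the claimed exponents (the $H^{3/2}$ versus $H^2$ split, and the exact lower-order term $\frac{1}{HK\sqrt d}$ inside the definition of $w$) requires carefully retaining constants through the AM-GM balancing rather than using the coarse $\widetilde\Oc$ that suppresses powers of $H$; I would handle this by keeping the $H$-dependence explicit in $A$ and $B$ throughout the optimization and only collapsing to $\widetilde\Oc$ at the very end, and by verifying that the ceiling $\lceil\cdot\rceil$ in the choice of $w$ (which ensures $w\ge 1$, needed for the theorem's validity condition $k\ge\min\{w+1,d\}$ and for $\Delta_P^w\le Lw^2$ to make sense) does not change the order of the bound.
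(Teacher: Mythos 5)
Your proposal is correct and takes essentially the same route as the paper's own proof: substitute $\sup_{t}\Delta_P^w(t,h)\le Lw^2$ into Theorem~\ref{thm:DR}, balance the $\sqrt{w}$ and $1/\sqrt{w}$ terms, and case-split on whether the balancing window size exceeds $K$ (with $|\Gc|>10$ guaranteeing $w\ge 1$). The bookkeeping discrepancy you flagged resolves exactly as you guessed: in the paper's full proof the additive term is $H\sqrt{w}$ (the theorem statement suppresses the $H$), which is why the prescribed window has $\tfrac{1}{HK\sqrt{d}}$ rather than $\tfrac{1}{H^2K\sqrt{d}}$ in its denominator and why the first term of \eqref{eqn:coro-1} is $H^{\frac{3}{2}}K^{\frac{1}{2}}d^{\frac{1}{4}}(\log|\Gc|)^{\frac{1}{4}}$ instead of $HK^{\frac{1}{2}}d^{\frac{1}{4}}(\log|\Gc|)^{\frac{1}{4}}$.
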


We remark that $|\Gc|$ appearing in the $\log$ term can be replaced by its $\epsilon$-covering number $\Nc_{\Gc}(\epsilon)$ to handle the classes with infinite cardinality. In both Theorem~\ref{thm:DR} and Corollary~\ref{coro:thm}, we do not omit $\log|\Gc|$ in $\widetilde{\Oc}$ since for many function classes, $\log |\Gc|$ (or $\log\Nc_{\Gc}(\epsilon)$) can contribute to a polynomial factor. For example, for $\widetilde{d}$ dimensional linear function class, $\log\Nc_{\Gc}(\epsilon)=\widetilde{\Oc}(\widetilde{d})$ where $\widetilde{d}$ is the linear feature dimension. 

Our first term in (\ref{eqn:coro-1}) 
corresponds to the regret of the static MDP 
while the remaining term arises due to the non-stationarity. As a result, when transitions and rewards remain the same over time, our result reduces to $\widetilde{\Oc}\left(H^2K^{\frac{1}{2}}d^{\frac{1}{2}}(\log|\Gc|)^{\frac{1}{2}}\right)$, which matches the static regret of GOLF in \citet{Chi_Jin:bellman_eluder:2020}\footnote{The additional $H$ here is due to the definition of $r_h\in[0,1]$, whereas \citet{Chi_Jin:bellman_eluder:2020} assumes $\sum_h r_h\leq 1$.}.

{\bf Advantage of \algg:} To understand the advantage of \alg over the UCB-based algorithms, we take non-stationary linear MDPs as an example. When specializing to non-stationary linear and tabular MDPs, our result becomes $\widetilde{\Oc}\left(H^{\frac{3}{2}}T^{\frac{1}{2}}\widetilde{d}+HT\widetilde{d}^{\frac{3}{4}}L^{\frac{1}{4}}+TL_{\theta}\right)$ where $T=HK$, $\widetilde{d}$ is the feature dimension for linear MDPs and $\widetilde{d}=|\Sc||\Ac|$ for tabular MDPs, and $L_\theta$ is the average variation budget in rewards. For non-stationary linear MDPs, the result in~\citet{Huozhi_Zhou:adversarial_MDP:2022} is not comparable to ours due to the different definitions of the variation budget of transition kernels. To make a fair comparison, we convert their bound on the dynamic regret\footnote{They consider bandit feedback. By adapting their algorithm and analysis, it turns out that the dynamic regret does not benefit from full information feedback in non-stationary linear MDPs.} to be for tabular MDPs, which gives $\widetilde{\Oc}\left(H^{\frac{3}{2}}T^{\frac{1}{2}}\widetilde{d}^{\frac{3}{2}}+H^{\frac{4}{3}}\widetilde{d}^{\frac{3}{2}}T\widetilde{L}^{\frac{1}{3}}+H^{\frac{4}{3}}\widetilde{d}^{\frac{4}{3}}TL_{\theta}^{\frac{1}{3}}\right)$. The first term corresponds to the regret of static linear MDPs and our result has better dependency on the feature dimension $\widetilde{d}$. For the second term due to the non-stationarity of transition kernels, our bound is better in terms of the horizon $H$ and feature dimension $\widetilde{d}$ while worse in terms of the average variation budget of transitions $L$ (note that $L\leq 1$). For the last term caused by the non-stationary of rewards, our result performs better in the variation budget of rewards, horizon $H$ as well as the feature dimension $\widetilde{d}$. 

It also interesting to compare our result with the minimax dynamic regret lower bound $\Omega\left(H^{\frac{1}{2}}T^{\frac{1}{2}}\widetilde{d}+H^{\frac{1}{3}}T\widetilde{d}^{\frac{2}{3}}\widetilde{L}^{\frac{1}{3}}\right)$ developed in \citet{Huozhi_Zhou:adversarial_MDP:2022}  for linear MDPs with non-stationary transitions. For such a case, our result becomes $\widetilde{\Oc}\left(H^{\frac{3}{2}}T^{\frac{1}{2}}\widetilde{d}+HT\widetilde{d}^{\frac{3}{4}}L^{\frac{1}{4}}\right)$. The first term is the regret under stationary MDPs and the second term arises due to the non-stationarity of transitions. We can see that our first term corresponding to static MDPs matches the lower bound both in terms of $T$ and $\tilde d$, whereas the upper bound in \citet{Huozhi_Zhou:adversarial_MDP:2022} matches the lower bound only in $T$.
For the non-stationarity term, our dependency on $H$  and  $\tilde d$ is closer to the lower bound than that in \citet{Huozhi_Zhou:adversarial_MDP:2022}, whereas our dependency on the variation budget is close but does not match the lower bound. 
Overall, these comparisons suggest that our confidence-set based algorithm performs better than UCB-type algorithms in small variation budget scenario under non-stationary linear MDPs.

When the state-action set forms a metric space, \citet{Domingues:adversarial_MDP:2020} proposed a kernel-based approach in nonstationary RL. Ignoring term regarding static MDPs, their result renders $\widetilde{O}\left(SA^{\frac{1}{2}}H^{\frac{4}{3}}TL^{\frac{1}{3}}+SA^{\frac{1}{2}}H^{\frac{4}{3}}TL_\theta^{\frac{1}{3}}\right)$ regret bound in the tabular case while our result becomes $\widetilde{O}\left((SA)^{\frac{3}{4}}HTL^{\frac{1}{4}}+TL_\theta\right)$. For the first term caused by the non-stationarity of transition kernels, our result has better dependency on step $H$, but is worse in the average variation budget of transitions. For the second term caused by the non-stationarity of rewards, the dependency on the variation budget of rewards, horizon $H$ as well as the cardinality of state and action spaces is improved. The comparison suggests our confidence-set based algorithm is advantageous over the kernel-based algorithm in small variation budget and small action space scenario under non-stationary MDPs.

\subsection{Proof Sketch of \Cref{thm:DR}}\label{subsec:proof-sketch}
In this section, we provide a sketch of the proof for Theorem~\ref{thm:DR} and defer all the details to Appendix~\ref{app:thm-DR}.

The preliminary step is to decompose the dynamic regret of \alg into three terms as follows: 
{
\begin{align}
    \DR(k)\leq &H+\underbrace{\sum_{t=1}^{k}\sum_{h=1}^{H}\underset{(x_h,a_h)\sim(\pi^t,(*,t-1))}{\Eb}[(r_h^{t-1}-r_h^t)(x_h,a_h)]}_{\URM{1}} \nonumber \\   &+\hspace{-0.02in}\underbrace{\sum_{t=1}^{k}\sum_{h=1}^{H}\hspace{-0.02in}\left[\underset{(x_h,a_h)\sim(\pi^t,(*,t-1))}{\Eb}\hspace{-0.05in}-\hspace{-0.05in}\underset{(x_h,a_h)\sim(\pi^t,(*,t))}{\Eb}\right]\hspace{-0.04in}[r_h^{t}(x_h,a_h)]}_{\URM{2}} \nonumber 
    \\
    &+\underbrace{\sum_{t=1}^{k}\left(V_{1;(*,t-1)}^{\pi^{(*,t-1)}}-V_{1;(*,t-1)}^{\pi^t}\right)(x_1)}_{\URM{3}} . \label{eqn:dr-decom}
\end{align}
}Term $\URM{1}$ can be bounded by $\Delta_R(K)$ by the definition of the variation budget of rewards (\ref{eqn:pathlength-1}). In the sequel, we aim to bound $\URM{2}$ in step II and bound $\URM{3}$ in the remaining steps.

{\bf Step I}: We introduce a novel auxiliary MDP to help bound term $\URM{2}$. For a fixed tuple $(k,h)\in[K]\times[H]$, we design an episodic MDP $(\mathcal{S},\mathcal{A},H,P^k,\widetilde{r},x_1)$ with reward $\widetilde{r}_{h'}=r_h^k(x,a)\mathbf{1}\{h'=h\}$ and the corresponding state value function of policy $\{\pi_{h'}\}_{h'\in[H]}$ is defined as 
$\widetilde{V}_{h';,(*,k)}^{\pi}$. Then, we show in Lemma~\ref{lemma:transition_dif} that
{
\begin{align*}
    &\left(\underset{(x_h,a_h)\sim(\pi^k,(*,k-1))}{\Eb}-\underset{(x_h,a_h)\sim(\pi^k,(*,k))}{\Eb}\right) [r_h^{k}(x_h,a_h)] \\
    &=\left[\widetilde{V}_{1;(*,k-1)}^{\pi^k}-\widetilde{V}_{1;(*,k)}^{\pi^k}\right](x_1) \leq\sum_{i=1}^{h-1}\sup_{x,a}\norm{(P_{h}^{k}-P_{h}^{k-1})(\cdot|x,a)}_{1}.
\end{align*}
}Replacing $k$ by $t$, and summing over $t\in[k]$, $h\in[H]$ gives
{
\begin{align*}
    &\URM{2} \leq \sum_{t=1}^{k}\sum_{h=1}^{H}\sup_{x,a}\sum_{i=1}^{h-1}\norm{(P_i^{t-1}-P_i^{t})(\cdot|x,a)}_1 \leq \sum_{h=1}^{H} \left(\sum_{t=1}^{k}\sum_{i=1}^{H}\sup_{x,a}\norm{(P_i^{t-1}-P_i^{t})(\cdot|x,a)}_1\right) 
    \leq H\Delta_P(k).
\end{align*}
}

{\bf Step II}: This step together with the next step establishes important properties to bound term $\URM{3}$ in step IV. 

First, we develop the following crucial probability distribution shift lemma, which will handle the transition kernel variation in non-stationary MDPs.
\begin{lemma}[Probability distribution shift lemma]\label{lemma:change-transition}
Suppose $P$ and $Q$ are two probability distributions of a random variable $x$ and define $f_m=\sup_x |f(x)|$. Then we have
{
\begin{align*}
    & \textstyle \left|\left(\underset{x\sim P}{\Eb}f(x)-C\right)^2-\left(\underset{x\sim Q}{\Eb}f(x)-C\right)^2\right|  \textstyle \leq (2f_m+2|C|)f_m\cdot\TV(P,Q).
\end{align*}
}
\end{lemma}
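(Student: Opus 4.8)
The idea is to use the elementary algebraic identity for a difference of squares and then control each factor separately. Write $a=\Eb_{x\sim P}f(x)$ and $b=\Eb_{x\sim Q}f(x)$, so that the left-hand side equals $\bigl|(a-C)^2-(b-C)^2\bigr|=\bigl|(a-b)\bigr|\cdot\bigl|(a-C)+(b-C)\bigr|$. The second factor is bounded by $|a-C|+|b-C|\le |a|+|b|+2|C|\le 2f_m+2|C|$, using $|a|=|\Eb_{x\sim P}f(x)|\le \sup_x|f(x)|=f_m$ and likewise for $b$. So it remains to bound the first factor $|a-b|=\bigl|\Eb_{x\sim P}f(x)-\Eb_{x\sim Q}f(x)\bigr|$ by $f_m\cdot\TV(P,Q)$.

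For the first factor, the plan is to invoke the standard variational characterization of total variation distance: for any bounded measurable $f$, $\bigl|\Eb_{x\sim P}f(x)-\Eb_{x\sim Q}f(x)\bigr|\le \bigl(\sup_x f(x)-\inf_x f(x)\bigr)\cdot\TV(P,Q)$, and in particular $\bigl|\Eb_{x\sim P}f(x)-\Eb_{x\sim Q}f(x)\bigr|\le \|f\|_\infty\cdot\TV(P,Q)=f_m\cdot\TV(P,Q)$ (depending on the normalization of $\TV$; with the convention $\TV(P,Q)=\tfrac12\|P-Q\|_1$ one uses the oscillation bound, and with $\TV(P,Q)=\|P-Q\|_1$ one uses the $\|f\|_\infty$ bound — the paper elsewhere works with the $\ell_1$ norm $\|\cdot\|_1$, so I would state it as $|a-b|\le f_m\cdot\TV(P,Q)$ consistent with their usage). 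Concretely, one writes $a-b=\int f(x)\,(dP-dQ)(x)$ and bounds $\bigl|\int f\,(dP-dQ)\bigr|\le \sup_x|f(x)|\cdot\int|dP-dQ|=f_m\cdot\TV(P,Q)$.

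Combining the two factor bounds gives $\bigl|(a-C)^2-(b-C)^2\bigr|\le (2f_m+2|C|)\,f_m\cdot\TV(P,Q)$, which is exactly the claim. There is no real obstacle here — the only thing to be careful about is matching the normalization convention for $\TV$ used throughout the rest of the paper so that the constant in front is correct, and making sure the crude bounds $|a|,|b|\le f_m$ are applied (rather than a tighter but unnecessary estimate). The lemma is essentially a packaging of the difference-of-squares identity together with the dual formulation of total variation, tailored so that in the later application $f$ plays the role of a value function and $C$ the role of a target/regression value.
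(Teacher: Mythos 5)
Your proposal is correct and matches the paper's own argument essentially verbatim: both factor the difference of squares, bound the sum factor by $2f_m+2|C|$, and bound $\bigl|\Eb_{x\sim P}f(x)-\Eb_{x\sim Q}f(x)\bigr|=\bigl|\int_x f(x)(\dif P-\dif Q)\bigr|\leq f_m\cdot\TV(P,Q)$, where (as you correctly flag) the paper uses the $\ell_1$-norm convention for $\TV$. Nothing further is needed.
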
 
The proof can be found in Appendix~\ref{app:aux-lemmas}. 

Next, we show in Lemma~\ref{lemma:Qstar} that $Q_{\stark}^*$, the optimal state-action value function at step $h$, lies in the confidence set $\Bc^k$ for all $k\in[K]$ with high probability. The argument is proved by the martingale concentration and the confidence set we design. Technically, we define
{
\begin{align*}
    &\#_{k,h}(x_h^t,a_h^t) =r_h^{k}(s_h^t,a_h^t)  +\underset{x'\sim P_h^t(\cdot|x_h^t,a_h^t)} {\Eb}\max_{a'\in\mathcal{A}}Q_{h+1;(*,k)}(x',a'), 
\end{align*}
}to form an appropriate martingale difference, which is similar to the $h$-th step Bellman update of the state-action value function in episode $k$ except that the expectation is taken with respect to $P_h^t$ instead of $P_h^k$. By Lemma~\ref{lemma:change-transition}, the cumulative mismatch during the sliding window between $\#_{k,h}(x_h^t,a_h^t)$ and the $h$-step Bellman update of state-action value function in episode $k$ is captured by the local pathlength $\Delta_P^w(k,h)$. Finally, by the design of confidence set $\Bc^k$, we can show that $Q_{\stark}^*\in\Bc^k$.

Given $Q_{\stark}^*\in\Bc^k$ for all $k\in[K]$, the optimistic planning step (Line~\ref{alg-line:1}) guarantees $V_{1;(*,k-1)}^*(x_1)\leq \sup_a f_1^k(x_1,a)$ for every episode $k\in[K]$. Combining the optimism and the generalized policy loss decomposition (see Lemma~\ref{lemma:dr-decom-2}), we have
{
\begin{align}
    &\URM{3}\leq \sum_{t=1}^{k}\left(\max_{a\in\mathcal{A}}f_{1}^t(x_1,a)-V_{1;(*,t-1)}^{\pi^t}(x_1) \right) \leq \sum_{h=1}^{H}\sum_{t=1}^{k}\underset{(x_h,a_h)\sim(\pi^t,(*,t-1))}{\Eb}[(f_h^t-\Tc_h^{t-1}f_{h+1}^t)(x_h,a_h)]. \label{eqn:sketch:1}
\end{align}
}

{\bf Step III}: 
We will show the sharpness of our confidence set $\Bc^k$. Under the construction of $\Bc^k$, $f^k$ selected from $\Bc^{k-1}$ is guaranteed to have small loss $\Lc_{\Dc_h}(f_h^k,f_h^{h+1})$. Note that data used in episode $k$ are collected by executing $\pi^i$ for one episode for all $i\in[1\lor(k-w),k]$, by the martingale concentration and the completeness assumption. We can show in Lemma~\ref{lemma:concentration} that with high probability, for all $(k,h)\in[K]\times[H]$,

{
\begin{align}
    &\sum_{t=1\lor (k-w-1)}^{k-1}\Bigl[f_h^{k}(s_h^t,a_h^t)-r_h^{k-1}(s_h^t,a_h^t) -\underset{x'\sim P_h^{k-1}(x_h^t,a_h^t)}{\Eb}\max_{a'\in\mathcal{A}}f_{h+1}^k(s',a')\Bigl]^2 \leq 6H^2\Delta_P^w(k-1,h)+\Oc(\beta). \label{eqn:sketch:2}
\end{align}
}Technically, we define the following helpful random variable
{
\begin{align*}
    &\#_{k,h}^f(x_h^t,a_h^t) = r_h^{k}(s_h^t,a_h^t)+\underset{x'\sim P_h^t(x_h^t,a_h^t)}{\Eb}\max_{a'\in\mathcal{A}}f_{h+1}(s',a')
\end{align*}
}to form an appropriate martingale and obtain the martingale concentration result. Then, applying our probability distribution shift lemma (Lemma~\ref{lemma:change-transition}), the definition of $\Bc^k$ and the completeness assumption gives (\ref{eqn:sketch:2}).

{\bf Step IV}: We establish the relationship between (\ref{eqn:sketch:1}) and (\ref{eqn:sketch:2}). Specifically, we aim to upper bound (\ref{eqn:sketch:1}) given (\ref{eqn:sketch:2}) holds. Note that their forms are very similar except that the latter is the squared Bellman error, and the data $(s_t,a_t)$ is taken under policy $\pi^i$ for $i\in[1\lor (k-w):k-1]$. It turns out that the dynamic Bellman Eluder dimension plays an important role in connecting these two terms, as summarized in the following lemma.
\begin{lemma}\label{lemma:GDE}
Given a function class $\Phi$ defined on $\mathcal{X}$ with $|\phi(x)|\leq C$ for all $(g,x)\in\Phi\times\mathcal{X}$, and a family of probability measures $\Pi$ over $\mathcal{X}$. Suppose $\{\phi_k\}_{k\in[K]}\subseteq \Phi$ and $\{\mu_k\}_{k\in[K]}\subseteq \Pi$ satisfy that for all $k\in[K]$, $\sum_{t=1\lor(k-w-1)}^{k-1}(\Eb_{x\sim\mu_t}[\phi_k(x)])^2\leq \beta$. Then for all $k\in[K]$ and $\omega>0$,
{
\begin{align*}
    &\sum_{t=1\lor (k-w)}^{k}\hspace{-0.08in}|\Eb_{x\sim\mu_t}[\phi_t(x)]| \\
    &\leq \mathcal{O}\Bigl(\sqrt{\GDE(\Phi,\Pi,\theta)\beta [k\land (w+1)]}  +\min\{w+1,k,\GDE(\Phi,\Pi,\theta)\}C+[k\land (w+1)]\theta\Bigl).
\end{align*}
}
\end{lemma}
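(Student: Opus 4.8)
The plan is to mimic the classical ``low Eluder dimension implies small cumulative error'' argument (as in \citet{Russo:Eluder:2013} and \citet{Chi_Jin:bellman_eluder:2020}, Lemma 41), but carefully restricted to the sliding window of the most recent $w+1$ episodes. Fix $k$ and write $\tau = k \land (w+1)$ for the window length, and let $S_k = \{1\lor(k-w),\ldots,k\}$ index the window. The quantity we want to control is $\sum_{t\in S_k} |\Eb_{x\sim\mu_t}[\phi_t(x)]|$. For a threshold $\theta>0$, call an index $t\in S_k$ ``large'' if $|\Eb_{x\sim\mu_t}[\phi_t(x)]| > \theta$; the contribution of the non-large indices to the sum is at most $\tau\theta$, matching the last term in the bound. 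So it suffices to bound the sum over the large indices.

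For the large indices I would run the standard covering/independence counting. Order the large indices as $t_1 < t_2 < \cdots < t_m$ (all inside $S_k$). For each fixed dyadic level $\alpha = \theta, 2\theta, 4\theta, \ldots$ up to $C$, consider the sub-list of large indices $t_j$ with $|\Eb_{\mu_{t_j}}[\phi_{t_j}]| \in (\alpha, 2\alpha]$. The key combinatorial claim is that within such a sub-list, no index $t_j$ can be ``$\alpha$-independent'' (in the sense of Definition~\ref{defn:indpt}) of more than $\lceil \beta/\alpha^2 \rceil$ of its predecessors in the sub-list: indeed, if $t_j$ were $\alpha$-independent of $L$ predecessors $\mu_{t_{i_1}},\ldots,\mu_{t_{i_L}}$, then since all these predecessors lie in $S_k$ and are earlier than $t_j$, and $t_j \in S_k$ with $t_j \le k$, they all lie in the window $\{1\lor(t_j - w - 1),\ldots,t_j-1\}$ used in the hypothesis $\sum_{t=1\lor(t_j - w -1)}^{t_j-1}(\Eb_{x\sim\mu_t}[\phi_{t_j}(x)])^2 \le \beta$ (this is where the window in the hypothesis being of length $w+1$ is exactly calibrated to the window in the conclusion being of length $w+1$). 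Hence $L\alpha^2 \le \sum_{\text{those }t}(\Eb_{\mu_t}[\phi_{t_j}])^2 \le \beta$, giving $L \le \beta/\alpha^2$. A standard pigeonhole/greedy argument (partition the sub-list into blocks; in each block the last element is $\alpha$-independent of all prior elements of that block, so by the definition of $\GDE$ each block has length at most $\GDE(\Phi,\Pi,\alpha) \le \GDE(\Phi,\Pi,\theta)$) then shows the sub-list at level $\alpha$ has length at most $O\!\left(\left(\tfrac{\beta}{\alpha^2}+1\right)\GDE(\Phi,\Pi,\theta)\right)$.

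The remaining step is the arithmetic to assemble the pieces. Denote $d_\theta = \GDE(\Phi,\Pi,\theta)$. The number of large indices $m$ is at most $\sum_{\text{dyadic }\alpha\le C} O\!\left((\beta/\alpha^2+1)d_\theta\right)$, but the cleanest route is the usual one: sort the large values $v_1 \ge v_2 \ge \cdots \ge v_m$ in decreasing order; the counting above shows $v_j \le \max\{\theta,\ \sqrt{\beta d_\theta / j}\}$ (if the $j$-th largest value exceeds $\sqrt{\beta d_\theta/j}$, then the first $j$ values all exceed it and lie in $O(1)$ dyadic levels each of bounded length, forcing $j \le O(\beta d_\theta/v_j^2)$), and also trivially $v_j \le C$, and $m \le \tau$. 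Summing, $\sum_{j=1}^m v_j \le \sum_{j=1}^{\min\{m,\, d_\theta\}} C \ \wedge\ \sum_{j=1}^{\tau}\sqrt{\beta d_\theta/j} + \tau\theta = O\!\left(\min\{m,d_\theta\}\,C + \sqrt{\beta d_\theta \tau} + \tau\theta\right)$, and since $m \le \tau \le \min\{w+1,k\}$ we get $\min\{m,d_\theta\} \le \min\{w+1,k,d_\theta\}$, which is exactly the stated bound with $\tau = k\land(w+1)$.

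The main obstacle, and the one place genuine care is needed, is verifying that the window of predecessors available to each large index $t_j$ in the counting argument is indeed a subset of the window $\{1\lor(t_j-w-1),\ldots,t_j-1\}$ over which the hypothesis $\sum (\Eb_{\mu_t}[\phi_{t_j}])^2 \le \beta$ is assumed; the off-by-one between ``$w$'' in the conclusion's sliding window and ``$w+1$'' in the hypothesis is exactly what makes this work, and one must track it precisely so that every predecessor $t_i < t_j$ with $t_i \ge 1\lor(k-w)$ and $t_j \le k$ satisfies $t_i \ge 1\lor(t_j - w - 1)$. Everything else is the routine Eluder-dimension bookkeeping.
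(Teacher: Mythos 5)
Your overall architecture coincides with the paper's proof of Lemma~\ref{lemma:GDE}: threshold the window at $\theta$ (contributing $[k\land(w+1)]\theta$), bound the number of in-window indices whose value exceeds a level $\alpha\ge\theta$ by $(\beta/\alpha^2+1)\GDE(\Phi,\Pi,\theta)$ using the hypothesis together with the DE dimension, then sort the large values in decreasing order and integrate. Your verification that every in-window predecessor of $t_j$ lies inside the hypothesis window, i.e. $t_i\ge 1\lor(k-w)\ge 1\lor(t_j-w-1)$ whenever $t_i<t_j\le k$, is precisely the sliding-window adaptation of Lemma 41 of \citet{Chi_Jin:bellman_eluder:2020} that the paper carries out (its Lemma~\ref{lemma:GDE-1}), so the route is the same.

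However, your central combinatorial claim is stated incorrectly and is false as written. You assert that if $t_j$ is $\alpha$-\emph{independent} of $L$ predecessors then $L\alpha^2\le\sum_t\left(\Eb_{x\sim\mu_t}[\phi_{t_j}(x)]\right)^2\le\beta$; independence in the sense of Definition~\ref{defn:indpt} gives no such lower bound, since it only asserts the existence of \emph{some} witness $g$ with small squared sums on the predecessors and a large value under the new measure. The inequality you need runs through $\alpha$-\emph{dependence on disjoint subsequences}: because $|\Eb_{\mu_{t_j}}[\phi_{t_j}]|>\alpha$, if $\mu_{t_j}$ is $\alpha$-dependent on a subsequence of its in-window predecessors then, taking $g=\phi_{t_j}$ in Definition~\ref{defn:indpt}, that subsequence contributes more than $\alpha^2$ to $\sum_{t=1\lor(t_j-w-1)}^{t_j-1}\left(\Eb_{\mu_t}[\phi_{t_j}]\right)^2\le\beta$, so $\mu_{t_j}$ can be $\alpha$-dependent on at most $\lceil\beta/\alpha^2\rceil-1$ disjoint subsequences (the first claim in the paper's proof of Lemma~\ref{lemma:GDE-1}). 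Your greedy step is likewise garbled: it does not suffice that ``the last element of each block is $\alpha$-independent of the prior elements''; one maintains buckets each of which is an $\alpha$-independent \emph{sequence} (every element independent of its predecessors within its bucket), so each bucket has length at most $\GDE(\Phi,\Pi,\alpha)\le\GDE(\Phi,\Pi,\theta)$, and the procedure can only halt at an element that is $\alpha$-dependent on all buckets, which the first claim caps in number. With these corrections (and dropping the dyadic-level detour, which is unnecessary and whose ``$O(1)$ dyadic levels'' assertion is not right: the count at level $\alpha\uparrow v_j$ directly gives $v_j^2\le \beta\,\GDE(\Phi,\Pi,\theta)/(j-\GDE(\Phi,\Pi,\theta))$ for $j>\GDE(\Phi,\Pi,\theta)$, as in the paper), your sorting-and-integration step yields exactly the stated bound.
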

The proof is adapted from the proof of Lemma~41 in \citep{Chi_Jin:bellman_eluder:2020} for the sliding window scenario and provided in Appendix~\ref{app:GDE}.  

Based on the DBE dimension and Lemma~\ref{lemma:GDE}, we are ready to bound $\URM{3}$ via term (\ref{eqn:sketch:1}) . By choosing $\Phi$ to be the function class of Bellman residuals, and $\mu_k$ to be the distribution under policy $\pi^k$, term $\URM{3}$ is upper bounded by
{
\begin{align*}
    & \sum_{h=1}^{H}\sum_{t=1}^{k}\underset{(x_h,a_h)\sim(\pi^t,(*,t-1))}{\Eb}[(f_h^t-\Tc_h^{t-1}f_{h+1}^t)(x_h,a_h)] \\
    & \leq \mathcal{O}\Bigl(\frac{H^2k}{\sqrt{w}}\sqrt{\GBE(\Fc,\mathcal{D}_{\Delta},\sqrt{1/K})\log[KH|\mathcal{F}|/\delta]}  +\hspace{-0.02in}\frac{Hk}{\sqrt{w}}\hspace{-0.02in}\sqrt{\GBE(\Fc,\mathcal{D}_{\Delta},\sqrt{1/K})}\hspace{-0.02in}\sum_{h=1}^{H}\hspace{-0.04in}\sqrt{\hspace{-0.02in}\sup_{k\in[K]}\hspace{-0.04in}\Delta_P^w(k,h)}\Bigl).
\end{align*}
}

Combining all the steps, the dynamic regret of our algorithm~\alg is 
{
\begin{align*}
    &\DR(k)\leq \Delta_R(k)+H\Delta_P(k)+ \mathcal{O}\Bigl(H\sqrt{w} + \frac{H^2k}{\sqrt{w}}\sqrt{d\log[KH|\mathcal{G}|/\delta]} +\frac{H^2k}{\sqrt{w}}\sqrt{d\sup_{t\in[k]}\Delta_{P}^w(t,h)} \Bigl) 
\end{align*}
}where we suppress the first term $H$ in (\ref{eqn:dr-decom}) since it is dominated by the fourth term herein.

\subsection{Bandit Feedback}\label{sec:extension}
In this section, we extend our algorithm to bandit feedback scenario. We defer all the details to Appendix~\ref{app:bandit}.

In bandit feedback scenario, the reward function $r_h^k(\cdot,\cdot)$ is no long available, and the agent can only get access to the reward obtained from the trajectory. Therefore, we need to capture the non-stationarity of rewards in the construction of the sliding window Bellman error and the confidence set. Specifically, we replace the sliding window squared Bellman error (\ref{eqn:loss}) with 
{
\begin{align}
&\Lc_{\Dc_h}(\xi_h,\zeta_{h+1})= \sum_{t=1\lor (k-w)}^{k}\left(\xi_h(x_h^{t},a_h^{t})-r_h^t -\max_{a'\in\mathcal{A}}\zeta_{h+1}(x_{h+1}^{t},a')\right)^2, \nonumber 
\end{align}
}where $r_h^t$ is the reward obtained at step $h$ in episode $t$. Moreover, the local regression constraint for the confidence set is
{
\begin{align*}
&\Lc_{\Dc_h}(f_h,f_{h+1})\leq \inf_{g\in\Gc_h} \Lc_{\Dc_h}(g,f_{h+1})+\beta  +2H^2\Delta_P^w(k,h)+2H\Delta_R^w(k,h),
\end{align*}
}where $\beta$ is a confidence parameter, $\Delta_P^w$ is the local variation budget in transitions defined in (\ref{eqn:pathlength-3}) and $\Delta_R^w$ is the local variation budget in rewards defined as
{
\begin{align*}
 \Delta_{P}^{w}(k,h) 
    =\sum_{t=1\lor (k-w)}^{k}\sup_{x\in\mathcal{S},a\in\mathcal{A}}|(r_h^{k}-r_h^{t})(x,a)| .
\end{align*}
}

Our main theoretical result for the bandit feedback scenario is provided in the next theorem.
\begin{theorem}\label{thm:DR-2}
Under Assumption~\ref{assump:realizability} and Assumption~\ref{assump:completeness-2}, there exists an absolute constant $c$ such that for any $\delta\in(0,1]$, $K\in\mathbb{N}$, if we choose $\beta=cH^2\log\frac{KH|\Gc|}{\delta}$ in \algg, then with probability at least $1-\delta$, for all $k\in[K]$, when $k\geq \min\{w+1,\GBE(\Fc,\mathcal{D}_{\Delta,h},\sqrt{1/w})\}$ we have
{
\begin{align*}
\DR(k) =&\Delta_R(k)+H\Delta_P(k) \\
& +\mathcal{O}\left(\sqrt{w} +\frac{H^2k}{\sqrt{w}}\sqrt{d\log[KH|\mathcal{G}|/\delta]} 
    +\frac{H^2k}{\sqrt{w}}\sqrt{d\sup_{t\in[k]}\Delta_{P}^w(t,h)} +\frac{H^{3/2}k}{\sqrt{w}}\sqrt{d\sup_{t\in[k]}\Delta_{R}^w(t,h)}\right),
\end{align*}
}where $d=\GBE(\Fc,\mathcal{D}_{\Delta,h},\sqrt{1/w})$.
\end{theorem}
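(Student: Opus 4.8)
The plan is to follow the four-step structure of the proof sketch of Theorem~\ref{thm:DR} almost verbatim, the only genuinely new ingredient being that in the bandit version the sliding-window squared Bellman error regresses on the \emph{observed} rewards $r_h^t$ rather than on the current reward function $r_h^k$, so the analysis must additionally absorb the reward non-stationarity through the local budget $\Delta_R^w$. First I would reuse the three-term decomposition \eqref{eqn:dr-decom}: term $\URM{1}$ is bounded by $\Delta_R(k)$ exactly as before, and term $\URM{2}$ is bounded by $H\Delta_P(k)$ by Step~I, i.e.\ by the auxiliary single-reward MDP of Lemma~\ref{lemma:transition_dif} combined with the probability distribution shift lemma (Lemma~\ref{lemma:change-transition}); neither term sees the reward-feedback model, so nothing changes there. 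All the new work is in term $\URM{3}$.

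Next I would establish the bandit-feedback analogue of Lemma~\ref{lemma:Qstar}: with high probability $Q_{h;\stark}^*\in\Bc^k$ for every $(k,h)$. Fixing $(k,h)$ and inspecting the in-sample loss of $Q_{h;\stark}^*$ over the window $[1\lor(k-w),k]$, the randomness of the recorded next states $x_{h+1}^t\sim P_h^t$ is removed by a Freedman-type martingale concentration, and the resulting population residuals are compared to the Bellman update under $(P_h^k,r_h^k)$ in two pieces. The transition mismatch $P_h^t$ versus $P_h^k$ is handled by Lemma~\ref{lemma:change-transition} with $f_m\le H$, $|C|\le H$, contributing at most $2H^2\Delta_P^w(k,h)$ after summing over $t$ and using $\TV=\tfrac12\|\cdot\|_1$; the reward mismatch $r_h^t$ versus $r_h^k$ is new, but writing the per-episode squared-error gap as $(a-b)(a+b)$ with $|a-b|=|r_h^t-r_h^k|$ and $|a+b|=\Oc(H)$ gives at most $2H\Delta_R^w(k,h)$ over the window. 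Since the bandit confidence set inflates its threshold by exactly $2H^2\Delta_P^w(k,h)+2H\Delta_R^w(k,h)$, both errors are absorbed. Optimism from the optimistic planning step together with the generalized policy-loss decomposition (Lemma~\ref{lemma:dr-decom-2}) then gives, as in \eqref{eqn:sketch:1},
\begin{align*}
\URM{3}\le \sum_{h=1}^{H}\sum_{t=1}^{k}\underset{(x_h,a_h)\sim(\pi^t,(*,t-1))}{\Eb}\bigl[(f_h^t-\Tc_h^{t-1}f_{h+1}^t)(x_h,a_h)\bigr].
\end{align*}

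The third step is the bandit analogue of Lemma~\ref{lemma:concentration}: with high probability, for all $(k,h)$,
\begin{align*}
\sum_{t=1\lor(k-w-1)}^{k-1}\Bigl[f_h^{k}(x_h^t,a_h^t)-r_h^{k-1}(x_h^t,a_h^t)-\underset{x'\sim P_h^{k-1}(\cdot|x_h^t,a_h^t)}{\Eb}\max_{a'\in\mathcal{A}}f_{h+1}^k(x',a')\Bigr]^2\le 6H^2\Delta_P^w(k-1,h)+6H\Delta_R^w(k-1,h)+\Oc(\beta).
\end{align*}
For this I would introduce the helper variable $\#_{k,h}^f(x_h^t,a_h^t)=r_h^{t}(x_h^t,a_h^t)+\Eb_{x'\sim P_h^t(\cdot|x_h^t,a_h^t)}\max_{a'}f_{h+1}(x',a')$, which is precisely the target the algorithm regresses toward in episode $t$; martingale concentration against $\#_{k,h}^f$ plus the generalized completeness assumption controls the empirical loss of the selected $f^k\in\Bc^{k-1}$, and then the two comparison bounds from the previous step (Lemma~\ref{lemma:change-transition} for $P_h^t\to P_h^{k-1}$ and the $(a-b)(a+b)$ estimate for $r_h^t\to r_h^{k-1}$) re-express this as a bound on the residual under $\Tc_h^{k-1}$, with the $\Delta_P^w$ and $\Delta_R^w$ terms. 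Finally I would invoke Lemma~\ref{lemma:GDE} with $\Phi=(I-\bar{\Tc}_h)\Fc$, $\mu_t$ the state-action distribution at step $h$ under $(\pi^t,(*,t-1))$, $C=\Oc(H)$, $\theta=\sqrt{1/w}$, and effective second moment $\beta_{\mathrm{eff}}=6H^2\Delta_P^w(\cdot,h)+6H\Delta_R^w(\cdot,h)+\Oc(\beta)$. Breaking $\sum_{t=1}^k$ into $\Oc(k/w)$ windows of width $w$, each contributes $\Oc(\sqrt{d\,\beta_{\mathrm{eff}}\,w}+dC+\sqrt{w})$ with $d=\GBE(\Fc,\mathcal{D}_{\Delta,h},\sqrt{1/w})$; summing over windows and then over $h\in[H]$, using $\sum_{h=1}^H\sqrt{a_h}\le H\sqrt{\max_h a_h}$, converts the $\Oc(\beta)$ part into $\tfrac{H^2k}{\sqrt w}\sqrt{d\log[KH|\Gc|/\delta]}$, the $\Delta_P^w$ part into $\tfrac{H^2k}{\sqrt w}\sqrt{d\sup_t\Delta_P^w(t,h)}$, and the $\Delta_R^w$ part into $\tfrac{H^{3/2}k}{\sqrt w}\sqrt{d\sup_t\Delta_R^w(t,h)}$; adding the $\Oc(\sqrt w)$ residual term and combining with $\URM{1},\URM{2}$ yields the stated bound.

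The main obstacle is the confidence-set sharpness step (together with the matching half of the $Q_{h;\stark}^*\in\Bc^k$ step): because the loss regresses on the observed $r_h^t$ rather than on $r_h^k$, the centering $\#_{k,h}^f$ must be chosen so that generalized completeness still applies \emph{and} the residual error splits cleanly into a transition part and a reward part that are exactly matched, respectively, by the $2H^2\Delta_P^w$ and $2H\Delta_R^w$ inflations of $\Bc^k$. Pinning down these constants — in particular the $H$ versus $H^2$ scaling, which is what makes the final reward term scale as $H^{3/2}$ rather than $H^2$ — is where essentially all the care goes; everything else is a routine extension of the full-information argument.
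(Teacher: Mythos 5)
Your proposal follows the paper's proof essentially verbatim: the same regret decomposition, the same two bandit-feedback lemmas (validity of $\Bc^k$ under the inflation $2H^2\Delta_P^w+2H\Delta_R^w$, and sharpness with $6H^2\Delta_P^w+6H\Delta_R^w+\Oc(\beta)$, both built on the centering $\#^f_{k,h}$ that uses the episode-$t$ reward and transition), and the same eluder-dimension step; the only cosmetic difference is that the paper merges your two separate comparisons (transition via Lemma~\ref{lemma:change-transition}, reward via the $(a-b)(a+b)$ estimate) into one generalized shift lemma (Lemma~\ref{lemma:square-dif-2}). One detail to adjust: Lemma~\ref{lemma:GDE} should be instantiated with the point masses $\mu_t=\delta_{(x_h^t,a_h^t)}$ from $\Dc_{\Delta,h}$, since Lemma~\ref{lemma:concentration-2} verifies the squared-residual premise only on the realized samples, and the passage to the roll-in expectations under $(\pi^t,(*,t-1))$ is then made by an Azuma--Hoeffding step costing $\Oc(\sqrt{k\log k})$, exactly as in the full-information proof; instantiating $\mu_t$ directly as the roll-in distribution, as you state, leaves the premise of Lemma~\ref{lemma:GDE} unverified.
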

Besides the average variation budget $L$ in transitions defined in (\ref{eqn:avg-pathlength}), we define the average variation budget $L_\theta$ in rewards
\begin{align}
  L_\theta = \max_{h\in[H],t<k} \frac{\sum_{s=t}^{k-1}\sup_{x,a}|(r_h^{s+1} - r_h^{s})(x,a)|}{k-t}.  \label{eqn:avg-pathlength-rewards}
\end{align}
By optimizing the window size $w$, we have the following corollary.
\begin{corollary}\label{coro:thm-bandit_fb}
Under the condition of Theorem~\ref{thm:DR-2} and $|\Gc|>10$, with probability at least $1-\delta$, the following argument holds: if $\sqrt{L}+\frac{\sqrt{L_\theta}}{\sqrt{H}}>\frac{1}{K}\left(\sqrt{\log|\Gc|}-\frac{1}{H\sqrt{d}}\right), \text{ select }  w=\lceil\frac{\sqrt{\log|\Gc|}}{\sqrt{L}+\frac{\sqrt{L_\theta}}{\sqrt{H}}+\frac{1}{HK\sqrt{d}}}\rceil$, the dynamic regret is upper-bounded by
{
\begin{align}
    &\widetilde{\Oc}\Bigl(H^{\frac{3}{2}}K^{\frac{1}{2}}d^{\frac{1}{4}}(\log|\Gc|)^{\frac{1}{4}}+H^2KL^{\frac{1}{4}}d^{\frac{1}{2}}(\log|\Gc|)^{\frac{1}{4}} +H^{\frac{7}{4}}KL_\theta^{\frac{1}{4}}d^{\frac{1}{2}}(\log|\Gc|)^{\frac{1}{4}}  +\Delta_R+H\Delta_P\Bigl); \nonumber 
\end{align}
}otherwise, select $w=K$ and the dynamic regret is upper-bounded by $\widetilde{O}\left(H^2K^{\frac{1}{2}}d^{\frac{1}{2}}(\log|\Gc|)^{\frac{1}{2}}\right)$, where $d=\GBE(\Fc,\mathcal{D}_{\Delta,h},\sqrt{1/w})$.
\end{corollary}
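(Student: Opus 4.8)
\noindent\textbf{Proof proposal for \Cref{coro:thm-bandit_fb}.} The plan is to reduce the bound of \Cref{thm:DR-2} (specialized to $k=K$) to a one–dimensional optimization in the window length $w$. First I would pass from the \emph{local} to the \emph{average} variation budgets: the triangle inequality $\norm{(P_h^{k}-P_h^{t})(\cdot|x,a)}_1\le\sum_{s=t}^{k-1}\norm{(P_h^{s+1}-P_h^{s})(\cdot|x,a)}_1$, followed by taking $\sup_{x,a}$ and invoking \eqref{eqn:avg-pathlength}, yields $\sup_{t\in[K]}\Delta_P^w(t,h)\le\frac{w(w+1)}{2}L\le Lw^2$, and the same computation with \eqref{eqn:avg-pathlength-rewards} gives $\sup_{t\in[K]}\Delta_R^w(t,h)\le L_\theta w^2$. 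Substituting these into \Cref{thm:DR-2} turns the two variation terms into $H^2K\sqrt{w}\sqrt{dL}$ and $H^{3/2}K\sqrt{w}\sqrt{dL_\theta}$, so that (keeping the stand-alone $H\sqrt w$ term that appears in the combined bound in the proof sketch of \Cref{thm:DR}) the dynamic regret has the form
\begin{align*}
&\DR(K)\le \Delta_R+H\Delta_P+\Oc\left(A\sqrt{w}+\tfrac{B}{\sqrt{w}}\right), \\
&\text{where } A:=H^2K\sqrt{d}\left(\tfrac{1}{HK\sqrt{d}}+\sqrt{L}+\sqrt{L_\theta/H}\right),\quad B:=H^2K\sqrt{d\log|\Gc|}.
\end{align*}

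\noindent Next I would minimize $A\sqrt w+B/\sqrt w$ over $w\ge1$. Its unconstrained minimizer is $w^\star=B/A=\sqrt{\log|\Gc|}\big/\big(\sqrt{L}+\sqrt{L_\theta}/\sqrt{H}+1/(HK\sqrt{d})\big)$, which is exactly the window size prescribed in the corollary, and the optimal value is $\Theta(\sqrt{AB})$. Clearing denominators shows that the stated threshold $\sqrt{L}+\frac{\sqrt{L_\theta}}{\sqrt{H}}>\frac1K\big(\sqrt{\log|\Gc|}-\frac1{H\sqrt{d}}\big)$ is precisely $w^\star<K$; the hypothesis $|\Gc|>10$ ensures $\sqrt{\log|\Gc|}>1/(H\sqrt{d})$, so the threshold is well posed. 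When it holds I would take $w=\lceil w^\star\rceil\in[1,K]$, bound $A\sqrt{w}+B/\sqrt{w}\le 2\sqrt{AB}+A$ via $\lceil w^\star\rceil\le w^\star+1$, observe that the additive slack $A$ is dominated by the target bound (using $L,L_\theta\le 1\le\log|\Gc|$), and finally expand $\sqrt{AB}=H^2Kd^{1/2}(\log|\Gc|)^{1/4}\sqrt{\tfrac{1}{HK\sqrt{d}}+\sqrt{L}+\sqrt{L_\theta/H}}$ with the subadditivity $\sqrt{a+b+c}\le\sqrt a+\sqrt b+\sqrt c$; the three resulting summands are exactly $H^{3/2}K^{1/2}d^{1/4}(\log|\Gc|)^{1/4}$, $H^{2}KL^{1/4}d^{1/2}(\log|\Gc|)^{1/4}$, and $H^{7/4}KL_\theta^{1/4}d^{1/2}(\log|\Gc|)^{1/4}$, matching the claimed bound together with the leftover $\Delta_R+H\Delta_P$.

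\noindent In the complementary regime the negated threshold gives $LK^2\le\log|\Gc|$ and $L_\theta K^2\le H\log|\Gc|$ (each from the corresponding half of $\sqrt L+\sqrt{L_\theta/H}\le\frac1K\sqrt{\log|\Gc|}$). I would then simply set $w=K$, so the sliding window covers all episodes; plugging $w=K$ together with $\sup_t\Delta_P^K(t,h)\le LK^2\le\log|\Gc|$ and $\sup_t\Delta_R^K(t,h)\le L_\theta K^2\le H\log|\Gc|$ into \Cref{thm:DR-2} makes both variation terms $\Oc(H^2\sqrt{K}\sqrt{d\log|\Gc|})$, which coincides with the main concentration term $\frac{H^2K}{\sqrt{K}}\sqrt{d\log[KH|\Gc|/\delta]}$ up to logarithms; the residual $H\sqrt{K}$ and the global budgets $\Delta_R\le HKL_\theta$, $H\Delta_P\le H^2KL$ are absorbed by the same two inequalities, giving $\widetilde{\Oc}(H^2K^{1/2}d^{1/2}(\log|\Gc|)^{1/2})$.

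\noindent The telescoping budget bounds and the $A\sqrt w+B/\sqrt w$ calculus are routine; the step that needs care is the exact bookkeeping — matching the prescribed $w$ to $w^\star$ including the $1/(HK\sqrt{d})$ correction (which is why the stand-alone $H\sqrt w$ term must be retained before optimizing), controlling the ceiling slack, and verifying that in the $w=K$ branch \emph{every} non-stationarity-dependent contribution collapses into the leading term. A minor additional point is that $d=\GBE(\Fc,\Dc_{\Delta,h},\sqrt{1/w})$ itself varies with $w$; since it enters only under a square root and inside logarithms (and, e.g., for linear MDPs grows only as $\widetilde{\Oc}(\tilde d)$ with an extra $\log w$ factor by \Cref{prop:linear-DBE}), one evaluates $d$ at the selected window size and hides its mild $w$-dependence in $\widetilde{\Oc}$, so the optimization above is unaffected to leading order.
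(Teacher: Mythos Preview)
Your proposal is correct and follows essentially the same approach as the paper: bound the local variation budgets by $Lw^2$ and $L_\theta w^2$, collect the resulting expression into the form $H^2K\sqrt{d}\bigl(\frac{\sqrt{\log|\Gc|}}{\sqrt{w}}+(\sqrt{L}+\sqrt{L_\theta/H}+\frac{1}{HK\sqrt{d}})\sqrt{w}\bigr)$, and then optimize over $w$ with the threshold $w^\star\lessgtr K$ determining the two cases. Your treatment is in fact slightly more careful than the paper's (explicit handling of the ceiling slack, the role of $|\Gc|>10$, and the mild $w$-dependence of $d$), but the argument is the same.
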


\section{Conclusion and Future Work}\label{sec:conclusion}
In this paper, we proposed a new complexity metric named Dynamic Bellman Eluder (DBE) dimension for non-stationary MDPs, which extends the Bellman Eluder (BE) dimension for static MDPs. When the variations in transition kernels and rewards are relatively small compared to a universal gap, we show that the DBE dimension is exactly the BE dimension of one MDP instance in the non-stationary MDPs. 
We then incorporated the sliding window mechanism and a novel design for the confidence set into our confidence-set based algorithm~\algg, and provided its theoretical upper bound on the dynamic regret. We further demonstrate the advantage of our algorithm by comparing our dynamic regret bound to that of previously proposed algorithms for non-stationary linear and tabular MDPs.
One interesting future direction is to further improve the dependency of the dynamic regret on the average variation $L$.

\section*{Acknowledgements}
The work of Y. Liang was supported in part by the U.S. National Science Foundation under the grants DMS-2134145 and RINGS-2148253. The work of R. Huang and J. Yang was supported by the U.S. National Science Foundation under the grants CNS-1956276 and CNS-2003131. M. Yin and Y. Wang were partially supported by National Science Foundation grants \#2007117 and  \#2003257.

\bibliography{example_paper,Ref.bib}
\bibliographystyle{icml2023}

\newpage
\appendix
\onecolumn

\clearpage
\begin{center}
	{\LARGE \textbf{Appendix}}
\end{center}

\section{Proof of~\Cref{propo:dbe}}\label{app:discuss-DBE}
In this section, we extend Bellman Eluder (BE) dimension to dynamic Bellman Eluder dimension (DBE) under the setting of small variations in transitions and rewards in the following steps.

The {\bf First Step} is to generalize the class of Bellman residues considered in Bellman Eluder dimension. We restate the definition of Bellman Eluder dimension~\citep{Chi_Jin:bellman_eluder:2020}.
\begin{definition}[Bellman Eluder dimension (BE)]
Let $(I-\Tc_h)\Fc:=\{f_h-\Tc_h f_{h+1}:f\in\mathcal{F}\}$ be the set of Bellman residuals in all episodes induced by $\mathcal{F}$ at step $h$, and $\Pi=\{\Pi_h\}_{h\in[H]}$ be a collection of $H$ probability measure families over $\mathcal{S}\times\mathcal{A}$. The $\epsilon$-Bellman Eluder dimension of $\mathcal{F}$ with respect to $\Pi$ is defines as
\begin{align*}
    \BE(\mathcal{F},\Pi,\epsilon):=\max_{h\in[H]}\GDE\left((I-\Tc_h)\mathcal{F},\Pi_h,\epsilon\right).
\end{align*}
\end{definition}

For ease of presentation, we use $(f,\Tc_h)$ to denote the element $f_h-\Tc_hf_{h+1}$ in the set $(I-\Tc_h)\Fc$. 
For any $(f,\Tc_h)$ pair, 
we introduce the complement of $(f,\Tc_h)$, denoted  by $(-f,-\Tc_h)$, where $-\Tc_h f' = -r_h + P_h f'$ for any $f'$.  Let $-(I-\Tc_h)\Fc$ be the set of all complements of $(f,\Tc_h)\in (I-\Tc_h)\Fc$. Then, we define the extended class of Bellman residuals
\begin{align*}
    \BE(\widetilde{\Fc},\Pi,\epsilon):=\max_{h\in[H]}\GDE\left((I-\widetilde{\Tc}_h)\widetilde{\mathcal{F}},\Pi_h,\epsilon\right),
\end{align*}
where $(I-\widetilde{\Tc}_h)\widetilde{\mathcal{F}}=((I-\Tc_h)\mathcal{F})\cup (-(I-\Tc_h)\mathcal{F})$. 

We first show that the BE dimension of the extended class of Bellman residuals equals to that of the original class of Bellman residuals, as formalized in the following lemma.
\begin{lemma}\label{lemma:discuss-1}
Let $\tilde{\Fc}$ be defined in the above context, then we have $\BE(\widetilde{\Fc},\Pi,\epsilon)=\BE(\mathcal{F},\Pi,\epsilon)$.
\end{lemma}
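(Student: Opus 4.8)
The plan is to prove the two inequalities $\BE(\widetilde{\Fc},\Pi,\epsilon)\le\BE(\Fc,\Pi,\epsilon)$ and $\BE(\widetilde{\Fc},\Pi,\epsilon)\ge\BE(\Fc,\Pi,\epsilon)$ separately, both essentially by comparing $\GDE$ of the enlarged residual class $(I-\widetilde{\Tc}_h)\widetilde{\Fc}=((I-\Tc_h)\Fc)\cup(-(I-\Tc_h)\Fc)$ against that of $(I-\Tc_h)\Fc$ for a fixed $h$, then taking the max over $h$. The key observation driving everything is that for the notion of $\epsilon$-independence in Definition~\ref{defn:indpt}, a function $g$ and its negation $-g$ are interchangeable: since $\sum_i(\Eb_{x\sim\mu_i}[g(x)])^2=\sum_i(\Eb_{x\sim\mu_i}[-g(x)])^2$ and $|\Eb_{x\sim\nu}[g(x)]|=|\Eb_{x\sim\nu}[-g(x)]|$, a distribution $\nu$ is $\epsilon'$-independent of $\{\mu_1,\dots,\mu_n\}$ with respect to $g$ if and only if it is with respect to $-g$.

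The direction $\BE(\widetilde{\Fc},\Pi,\epsilon)\ge\BE(\Fc,\Pi,\epsilon)$ is immediate: since $(I-\Tc_h)\Fc\subseteq(I-\widetilde{\Tc}_h)\widetilde{\Fc}$, every witnessing sequence $\{\rho_1,\dots,\rho_n\}$ for $\GDE((I-\Tc_h)\Fc,\Pi_h,\epsilon)$ is also a valid witnessing sequence for $\GDE((I-\widetilde{\Tc}_h)\widetilde{\Fc},\Pi_h,\epsilon)$ (the certifying function $g$ and scale $\epsilon'$ remain the same, as $\epsilon$-independence only requires \emph{some} $g$ in the larger class). Hence $\GDE$ is monotone under enlarging the function class, so the inequality follows and then we take $\max_h$.

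For the reverse direction $\BE(\widetilde{\Fc},\Pi,\epsilon)\le\BE(\Fc,\Pi,\epsilon)$, fix $h$ and let $\{\rho_1,\dots,\rho_n\}\subseteq\Pi_h$ be a longest sequence witnessing $\GDE((I-\widetilde{\Tc}_h)\widetilde{\Fc},\Pi_h,\epsilon)$, so for each $i$ there is $\epsilon_i'\ge\epsilon$ and some $\widetilde g_i\in(I-\widetilde{\Tc}_h)\widetilde{\Fc}$ with $\sum_{j<i}(\Eb_{x\sim\rho_j}[\widetilde g_i(x)])^2\le(\epsilon_i')^2$ and $|\Eb_{x\sim\rho_i}[\widetilde g_i(x)]|>\epsilon_i'$. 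Each $\widetilde g_i$ is either in $(I-\Tc_h)\Fc$ or is the negation of such an element; in the latter case replace $\widetilde g_i$ by $-\widetilde g_i\in(I-\Tc_h)\Fc$, which by the sign-invariance observation above certifies $\epsilon_i'$-independence of $\rho_i$ from $\{\rho_1,\dots,\rho_{i-1}\}$ with respect to the class $(I-\Tc_h)\Fc$ equally well. Thus the very same sequence $\{\rho_1,\dots,\rho_n\}$ witnesses $\GDE((I-\Tc_h)\Fc,\Pi_h,\epsilon)\ge n$, giving $\GDE((I-\widetilde{\Tc}_h)\widetilde{\Fc},\Pi_h,\epsilon)\le\GDE((I-\Tc_h)\Fc,\Pi_h,\epsilon)$; taking $\max_h$ finishes this direction.

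Combining the two inequalities yields $\BE(\widetilde{\Fc},\Pi,\epsilon)=\BE(\Fc,\Pi,\epsilon)$. There is no real obstacle here — the only point requiring mild care is being explicit that the notion of $\epsilon'$-independence is defined by \emph{existence} of a certifying $g$ in the relevant class (so monotonicity in the class is automatic) and that the certifying $g$ can be freely negated without affecting either the $\le\epsilon'$ condition on the $\mu_j$'s or the $>\epsilon'$ condition on $\nu$. One should also note in passing that $-\Tc_h$ as defined (via $-\Tc_h f'=-r_h+P_h f'$) does not literally make $(-f,-\Tc_h)$ equal to $-(f_h-\Tc_h f_{h+1})$ unless one is careful about what the "complement" is meant to denote; the intended reading, which is what the proof uses, is that $-(I-\Tc_h)\Fc=\{-(f_h-\Tc_h f_{h+1}):f\in\Fc\}$, i.e. the set of pointwise negations of Bellman residuals, and this is what makes the sign-invariance argument go through cleanly.
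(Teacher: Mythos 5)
Your proof is correct and follows essentially the same route as the paper's: one direction by monotonicity of $\GDE$ under enlarging the function class, the other by observing that a certifying function from $-(I-\Tc_h)\Fc$ can be replaced by its pointwise negation in $(I-\Tc_h)\Fc$, since both conditions in the definition of $\epsilon'$-independence are invariant under negation. Your closing caveat about reading $-(I-\Tc_h)\Fc$ as the set of pointwise negations of Bellman residuals is consistent with the paper's own (implicit) treatment, so there is no substantive difference.
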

\begin{proof}
    Since $(I-\Tc_h)\mathcal{F}\subseteq (I-\widetilde{\Tc}_h)\widetilde{\mathcal{F}}$, it is obvious that $\BE(\widetilde{\Fc},\Pi,\epsilon)\geq \BE(\mathcal{F},\Pi,\epsilon)$. Next, we show $\BE(\widetilde{\Fc},\Pi,\epsilon)\leq \BE(\mathcal{F},\Pi,\epsilon)$.  
    Let $\mu$ be independent of $\rho_1,\ldots,\rho_m$ with respect to $(I-\widetilde{\Tc}_h)\widetilde{\mathcal{F}}$. We aim to show $\mu$ is also independent of $\rho_1,\ldots,\rho_m$ with respect to $(I-\Tc_h)\Fc$. 
    
    By the definition of $\epsilon$-independence between distributions, there exists a function $g$ from either $(I-\Tc_h)\mathcal{F}$ or $-(I-\Tc_h)\mathcal{F}$ such that the there exists $\epsilon'\geq \epsilon$ such that $\sqrt{\sum_{i=1}^m\Eb_{\rho_i}[g]}\leq \epsilon$ and $|\Eb_{\mu}[g]|>\epsilon$. If $g$ is from $(I-\Tc_h)\mathcal{F}$, then $\mu$ is obviously independent of $\rho_1,\ldots,\rho_m$ with respect to $(I-\Tc_h)\Fc$. If $g$ is from $-(I-\Tc_h)\mathcal{F}$, i.e., $g$ has form $g=-f_h-(-\Tc_h)(-f_{h+1})$ for some $f$ and $\Tc_h$, we have
    \begin{align*}
    &\sum_{t=1}^{m}(\Eb_{x\sim\rho_i}[-f_h-(-\Tc_h)(-f_{h+1})])^2=\sum_{t=1}^{m}(\Eb_{x\sim\rho_i}[-f_h+r_h+P_hf_{h+1}])^2\leq \epsilon^2, \\
    &\left|\Eb_{x\sim\mu}[-f_h-(-\Tc_h)(-f_{h+1})]\right|=\left|\Eb_{x\sim\mu}[-f_h+r_h+P_hf_{h+1}]\right|>\epsilon,
\end{align*}
which again implies $\mu$ is independent of $\rho_1,\ldots,\rho_m$ with respect to $\BE(\mathcal{F},\Pi,\epsilon)$.

We have shown that if $\mu$ be independent of $\rho_1,\ldots,\rho_m$ with respect to $\BE(\widetilde{\Fc},\Pi,\epsilon)$, then $\mu$ is also independent of $\rho_1,\ldots,\rho_m$ with respect to $\BE(\mathcal{F},\Pi,\epsilon)$. Therefore, the length of the longest independent sequence in $\BE(\mathcal{F},\Pi,\epsilon)$ must be equal or longer than that in $\BE(\widetilde{\Fc},\Pi,\epsilon)$, i.e., $\BE(\widetilde{\Fc},\Pi,\epsilon)\leq \BE(\mathcal{F},\Pi,\epsilon)$.
\end{proof}

The {\bf Second step} is to investigate the difference between two BE dimensions for different Bellman operators. Before we proceed, we define the gap in the definition of $\epsilon$-independence between distributions.

It turns out that if the variation of the transitions and rewards are smaller than the gap $\widetilde{\delta}_\epsilon^u$, which will be defined later, then two BE dimensions induced by difference Bellman operators are comparable, as summarized in the following theorem.

\begin{lemma}\label{lemma:discuss-2}
Suppose there are two MDP instances with Bellman operator $\Tc^1_h$ and $\Tc^2_h$, where $h\in[H]$. Let $\widetilde{\delta}_{\epsilon}^u$ be the universal gap with respect to $(I-\widetilde{\Tc}_h^2)\widetilde{\mathcal{F}}$ (see Definition~\ref{defn:gap}). Then, if the variation of two instances is relatively small compared to the universal gap $\widetilde{\delta}$ satisfying
\begin{align*}
    \max_{h}\sqrt{6m_{\max}H\left(\sup_{x,a}|r_h^1-r_h^2|+H\cdot\TV(P_h^1,P_h^2)\right)}+\left(\sup_{x,a}|r_h^1-r_h^2|+H\cdot\TV(P_h^1,P_h^2)\right)\leq \widetilde{\delta}_{\epsilon}^u,
\end{align*}
where $m_{\max}=\BE((I-\Tc^2_h)\mathcal{F},\Pi,\epsilon)$, then 
\begin{align*}
    &\GDE((I-\Tc^2_h)\mathcal{F},\Pi,\epsilon)\leq \GDE((I-\Tc_h^1)\mathcal{F},\Pi,\epsilon),
\end{align*}
and
\begin{align*}
    &\GDE(((I-\Tc^2_h)\mathcal{F})\cup((I-\Tc_h^1)\mathcal{F}),\Pi,\epsilon)= \GDE((I-\Tc_h^1)\mathcal{F},\Pi,\epsilon),
\end{align*}
\end{lemma}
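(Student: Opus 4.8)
### Proof Plan for Lemma~\ref{lemma:discuss-2}

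\begin{proof}[Proof plan (not the actual proof)]

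The plan is to transfer each $\epsilon'$-independence witness under $\Tc_h^1$ into an $\epsilon'$-independence witness under $\Tc_h^2$ (and vice versa) by controlling how much the Bellman residual $f_h - \Tc_h^i f_{h+1}$ changes when we swap $\Tc_h^1$ for $\Tc_h^2$. The first step is the pointwise estimate: for any fixed $f$ and any $(x,a)$,
\[
\bigl|(f_h - \Tc_h^1 f_{h+1})(x,a) - (f_h - \Tc_h^2 f_{h+1})(x,a)\bigr|
= \bigl|(\Tc_h^2 - \Tc_h^1)f_{h+1}(x,a)\bigr|
\le \sup_{x,a}|r_h^1 - r_h^2| + H\cdot\TV(P_h^1,P_h^2) =: \eta_h,
\]
using $\|f_{h+1}\|_\infty \le H$ and the definition of the TV distance. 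So every Bellman residual of instance $1$ is within $\ell_\infty$-distance $\eta_h$ of the corresponding one of instance $2$, and conversely. Then I would quantify how an $\ell_\infty$ perturbation of size $\eta_h$ affects the quantities $\sum_{i=1}^m (\Eb_{\rho_i}[g])^2$ and $|\Eb_\mu[g]|$ that appear in Definition~\ref{defn:indpt}: the single-expectation term shifts by at most $\eta_h$, and the sum-of-squares shifts by at most $2\sqrt{m}\,(\sup|\Eb_{\rho_i}[g]|)\eta_h + m\eta_h^2 \le 2\sqrt{m}\,\epsilon'\eta_h + m\eta_h^2$ along the relevant sequence (here is where $m_{\max}$ and the $\sqrt{6 m_{\max} H(\cdots)}$ term in the hypothesis come from, roughly bounding $\sqrt{2\sqrt{m}\epsilon' \eta + m\eta^2}$ and folding in extra slack).

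With these perturbation bounds in hand, the second step is the core argument: take a sequence $\{\rho_1,\dots,\rho_n\}$ realizing $\GDE((I-\Tc_h^2)\Fc,\Pi,\epsilon)$, so each $\rho_i$ is $\epsilon'_i$-independent of its predecessors with respect to some $g^{(i)} = f^{(i)}_h - \Tc_h^2 f^{(i)}_{h+1}$, $\epsilon'_i \ge \epsilon$. Replacing $g^{(i)}$ by the instance-$1$ residual $\tilde g^{(i)} = f^{(i)}_h - \Tc_h^1 f^{(i)}_{h+1}$, the sum-of-squares over predecessors is still $\le (\epsilon'_i)^2 + (\text{small})$ and the single expectation $|\Eb_{\rho_i}[\tilde g^{(i)}]|$ is still $> \epsilon'_i - \eta_h$. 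Because $\eta_h$ together with the $\sqrt{6 m_{\max}H\cdots}$ slack is by hypothesis bounded by the \emph{universal gap} $\widetilde\delta_\epsilon^u$ — which is precisely the infimal amount by which $|\Eb_\nu[g]|$ strictly exceeds $\epsilon'$ over \emph{all} valid independence configurations of $(I-\widetilde\Tc_h^2)\widetilde\Fc$ — the perturbed quantities still certify $\epsilon$-independence under $\Tc_h^1$. This gives $\GDE((I-\Tc_h^2)\Fc,\Pi,\epsilon)\le \GDE((I-\Tc_h^1)\Fc,\Pi,\epsilon)$; I use the extended class $(I-\widetilde\Tc_h^2)\widetilde\Fc$ and Lemma~\ref{lemma:discuss-1} so that the gap definition sees both signs of the residual and the argument is symmetric.

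For the second display, $\GDE(((I-\Tc_h^2)\Fc)\cup((I-\Tc_h^1)\Fc),\Pi,\epsilon) = \GDE((I-\Tc_h^1)\Fc,\Pi,\epsilon)$: the ``$\ge$'' direction is immediate since $(I-\Tc_h^1)\Fc$ is contained in the union. For ``$\le$'', take a longest independent sequence for the union; at each step the witnessing $g$ comes from either $(I-\Tc_h^1)\Fc$ or $(I-\Tc_h^2)\Fc$, and in the latter case the perturbation argument above (swapping $\Tc_h^2 \to \Tc_h^1$) produces a witness in $(I-\Tc_h^1)\Fc$ certifying the same independence up to the universal-gap slack, so the whole sequence is independent with respect to $(I-\Tc_h^1)\Fc$ and its length is bounded by $\GDE((I-\Tc_h^1)\Fc,\Pi,\epsilon)$.

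The main obstacle I anticipate is bookkeeping the slack correctly: the independence levels $\epsilon'_i$ can exceed $\epsilon$, the sequence length $m$ appearing in the sum-of-squares perturbation is itself bounded only by $m_{\max} = \BE((I-\Tc_h^2)\Fc,\Pi,\epsilon)$ (which is why the hypothesis is stated in terms of $m_{\max}$), and one must check that applying the swap simultaneously to all elements $\rho_1,\dots,\rho_{i}$ of a prefix does not accumulate error beyond what $\widetilde\delta_\epsilon^u$ absorbs. Making the constant $6$ and the $\sqrt{m_{\max}H(\cdots)}$ form come out exactly — rather than just ``for sufficiently small variation'' — requires being careful that the perturbation of $\sqrt{\sum_i(\Eb_{\rho_i}[g])^2}$ is bounded by something of order $\sqrt{m_{\max}}\cdot\sqrt{\epsilon'\eta_h}$ plus lower-order terms, and then comparing against the strict gap rather than against $\epsilon$ itself.
\end{proof}
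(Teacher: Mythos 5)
Your plan follows essentially the same route as the paper's proof: bound the change in the Bellman residual by $\sup_{x,a}|r_h^1-r_h^2|+H\cdot\TV(P_h^1,P_h^2)$, perturb the two quantities in the independence certificate (the sum of squares over predecessors and the single expectation), and invoke the universal-gap hypothesis together with the extended class and Lemma~\ref{lemma:discuss-1} to transfer every independent sequence for the $\Tc_h^2$-induced (resp.\ union) residual class into one for the $\Tc_h^1$-induced class, which yields both displays. The only real difference is cosmetic: you control the sum-of-squares shift via Cauchy--Schwarz as $2\sqrt{m}\,\epsilon'\eta_h+m\eta_h^2$, whereas the paper bounds each squared term by $6H\eta_h$ (Lemma~\ref{lemma:discuss:support-1}), which is exactly where the stated $\sqrt{6m_{\max}H(\cdot)}$ slack comes from.
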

\begin{proof}
    Fix $h\in[H]$. Let $\mu_1,\ldots,\mu_m$ be independent sequence with respect to $(I-\widetilde{\Tc}_h^2)\widetilde{\mathcal{F}}$. By the definition of BE dimension, $m\leq \BE((I-\widetilde{\Tc}_h^2)\widetilde{\Fc},\Pi_h,\epsilon)$. If we can show $\mu_1,\ldots,\mu_m$ is also an independent sequence with respect to $(I-\widetilde{\Tc}_h^1)\widetilde{\mathcal{F}}$, then the longest independent sequence with respect to $(I-\widetilde{\Tc}_h^1)\widetilde{\mathcal{F}}$ must be equal or longer than that with respect to $(I-\widetilde{\Tc}_h^2)\widetilde{\mathcal{F}}$ and the proof is complete. In the following, we will focus on proving this argument.

    By the condition, there exists $\epsilon'\geq \epsilon$ such that for all $i\in[m]$ we have
    \begin{align*}
        &\sum_{t=1}^{i-1}(\Eb_{\mu_t}[f_h^i-\Tc_h^2f_{h+1}^i])^2\leq \epsilon'^2, \\
        &|\Eb_{\mu_i}[f_h^i-\Tc_h^2f_{h+1}^i]|\geq \epsilon'+\widetilde{\delta}_{i;\mu_1,\ldots,\mu_i}.
    \end{align*}
    Here, with a little abuse of notation, the subscript $i$ of $\widetilde{\delta}_{i;\mu_1,\ldots,\mu_i}$ represents the function $f_h^i-\Tc_h^2f_{h+1}^i$.
    
    By Lemma~\ref{lemma:discuss:support-1}, we have
    \begin{align}
        \sum_{t=1}^{i-1}(\Eb_{x\sim \mu_t} [f_h^i-\Tc_h^1 f_h^i])^2&\leq \epsilon'^2+6mH\left(\sup_{x,a}|r_h^1-r_h^2|+H\cdot\TV(P_h^1,P_h^2)\right) \nonumber \\
        &\leq \left(\epsilon'+\sqrt{6mH\left(\sup_{x,a}|r_h^1-r_h^2|+H\cdot\TV(P_h^1,P_h^2)\right)}\right)^2. \label{eqn:app-1}
\end{align}
    We point it out that both the $(f^i,\Tc_h^1)$ from $(I-\Tc_h)\Fc$ and $(-f^i,-\Tc_h^i)$ from $-(I-\Tc_h)\Fc$ satisfy the above inequality.

Next, consider 
\begin{align*}
    &\min\bigg\{\bigg| \big|\Eb_{x\sim P} [f_h-r_h^2-P_h^2f_{h+1} ] \big|  -\big| \Eb_{x\sim P} [f_h-r_h^1-P_h^1f_{h+1} ]\big| \bigg|,  \\
    &\qquad \quad \bigg| \big|\Eb_{x\sim P}[-f_h-(-r_h^2)-P_h^2(-f_{h+1})]\big|  -\big|\Eb_{x\sim P}[f_h-r_h^1-P_h^1f_{h+1}]\big|\bigg|\bigg\}.
\end{align*}
The first argument in the $\min$ function corresponds to the difference between pair $(f,\Tc_h^1)$ and $(f,\Tc_h^2)$ while the second one is the difference between pair $(f,\Tc_h^1)$ and $(-f,-\Tc_h^2)$. 

If $(\Eb_{x\sim P}[f_h-r_h^2-P_h^2f_{h+1})(\Eb_{x\sim P}[f_h-r_h^1-P_h^1f_{h+1}])\geq 0$, then by Lemma~\ref{lemma:discuss:support-2}, the first argument in the $\min$ function is upper bounded by $\sup_{x,a}|r_h^1-r_h^2|+\TV(P_h^1,P_h^2)$. 

If $(\Eb_{x\sim P}[f_h-r_h^2-P_h^2f_{h+1})(\Eb_{x\sim P}[f_h-r_h^1-P_h^1f_{h+1}])< 0$, then by Lemma~\ref{lemma:discuss:support-2}, the second argument is upper bounded by $\sup_{x,a}|r_h^1-r_h^2|+H\cdot\TV(P_h^1,P_h^2)$. 

Therefore, the quantity we considered is upper bounded by $\sup_{x,a}|r_h^1-r_h^2|+H\cdot\TV(P_h^1,P_h^2)$. By triangle inequality, 
either
\begin{align}
    \left|\Eb_{x\sim \mu_1}[f_h^i-\Tc_h^1 f_h^i]\right|&\geq \left|\Eb_{x\sim \mu_1}[f_h^i-\Tc_h^2 f_h^i]\right| -\left(\sup_{x,a}|r_h^1-r_h^2|+H\cdot\TV(P_h^1,P_h^2)\right) \nonumber \\
    &\geq\epsilon'+\widetilde{\delta}_{i;\mu_1,\ldots,\mu_i}-\left(\sup_{x,a}|r_h^1-r_h^2|+H\cdot\TV(P_h^1,P_h^2)\right) \label{eqn:app-2}
\end{align}
holds or 
\begin{align}
    \left|\Eb_{x\sim \mu_1}[-f_h^i-(-\Tc_h^1) (-f_h^i)]\right|&\geq \left|\Eb_{x\sim \mu_1}[f_h^i-\Tc_h^2 f_h^i]\right| -\left(\sup_{x,a}|r_h^1-r_h^2|+H\cdot\TV(P_h^1,P_h^2)\right) \nonumber  \\
    &\geq\epsilon'+\widetilde{\delta}_{i;\mu_1,\ldots,\mu_i}-\left(\sup_{x,a}|r_h^1-r_h^2|+H\cdot\TV(P_h^1,P_h^2)\right) \label{eqn:app-3}
\end{align}
holds.

Recall $\widetilde{\delta}_{\epsilon}^u$ is the universal gap with respect to $(I-\widetilde{\Tc}_h^2)\widetilde{\mathcal{F}}$ (see Definition~\ref{defn:gap}), and $m_{\max}=\BE((I-\widetilde{\Tc}_h^2)\widetilde{\Fc},\Pi_h,\epsilon)$. If it holds that
\begin{align*}
    \max_{h}\sqrt{6m_{\max}H\left(\sup_{x,a}|r_h^1-r_h^2|+H\cdot\TV(P_h^1,P_h^2)\right)}+\left(\sup_{x,a}|r_h^1-r_h^2|+H\cdot\TV(P_h^1,P_h^2)\right)\leq \widetilde{\delta}_{\epsilon}^u,
\end{align*}
which implies
\begin{align*}
    \gamma&=\left(\epsilon'+\sqrt{6mH\left(\sup_{x,a}|r_h^1-r_h^2|+H\cdot\TV(P_h^1,P_h^2)\right)}\right) -
    \left(\epsilon'+\widetilde{\delta}_{i;\mu_1,\ldots,\mu_i}-\left(\sup_{x,a}|r_h^1-r_h^2|+H\cdot\TV(P_h^1,P_h^2)\right)\right) \\
    &\geq 0.
\end{align*}
The above inequality together with (\ref{eqn:app-1})-(\ref{eqn:app-3}) shows that for the sequence $\mu_1,\ldots,\mu_i$, there exists a function $g$ from $(I-\widetilde{\Tc}_h^1)\widetilde{\mathcal{F}}$, and a $\widetilde{\epsilon}\in[\epsilon',\epsilon'+\gamma)$ satisfying $\widetilde{\epsilon}\geq \epsilon$ such that 
\begin{align*}
    &\sum_{t=1}^{i-1}(\Eb_{\mu_t}[g])^2\leq \widetilde{\epsilon}^2, \\
    &|\Eb_{\mu_i}[g]|>\widetilde{\epsilon}.
\end{align*}
The above argument holds for all $i\in[m]$ and we conclude that $\mu_1,\ldots,\mu_m$ is again an independent sequence with respect to $(I-\widetilde{\Tc}_h^1)\widetilde{\mathcal{F}}$. The proof for the first inequality is complete by noting that $\BE(\widetilde{\Fc},\Pi,\epsilon)=\BE(\mathcal{F},\Pi,\epsilon)$ by Lemma~\ref{lemma:discuss-1}.

For the second inequality, we are left to show $\GDE(((I-\Tc^2_h)\mathcal{F})\cup((I-\Tc_h^1)\mathcal{F}),\Pi,\epsilon)\leq \GDE((I-\Tc_h^1)\mathcal{F},\Pi,\epsilon)$. The proof is by showcasing every independence sequence with respect to $((I-\Tc^2_h)\mathcal{F})\cup((I-\Tc_h^1)\mathcal{F})$ must also be independent with respect to $(I-\Tc_h^1)\mathcal{F}$, which follows exactly the same argument as above and is omitted here.
\end{proof}

The {\bf Step three} is to build connection between BE dimension to DBE dimension when the variations in transitions and rewards are small.

In general, DBE dimension could be substantially larger than BE dimension of one MDP instance in the non-stationary MDPs. However, if the variation of all instances are small enough compared to the universal gap $\widetilde{\delta}_{k;\epsilon}^u$ with respect to $(I-\Tc_h^k)\Fc$ for all $k\in[2:K]$, DBE dimension is indeed equal to BE dimension. The following proposition is an immediate result from Lemma~\ref{lemma:discuss-2}.
\begin{proposition}\label{app-prop-1}
If it holds that for all $k$,
\begin{align*}
    \max_h\sqrt{6m_k H\left(\sup_{x,a}|r_h^1-r_h^k|+H\cdot\TV(P_h^1,P_h^k)\right)}+\left(\sup_{x,a}|r_h^1-r_h^k|+H\cdot\TV(P_h^1,P_h^k)\right)\leq \widetilde{\delta}_{k;\epsilon}^u,
\end{align*}
where $m_k=\BE((I-\Tc_h^k)\Fc,\Pi,\epsilon)$, and $\widetilde{\delta}_{k;\epsilon}^u$ is the universal gap with respect to function class $(I-\Tc_h^k)\Fc$. Then 
\begin{align*}
        \GBE(\mathcal{F},\Pi,\epsilon)= \GDE((I-\Tc_h^1)\Fc,\Pi,\epsilon),
\end{align*}
where the latter is exactly the BE dimension for the first MDP instance. 
\end{proposition}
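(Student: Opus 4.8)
The plan is to reduce the statement to a single step $h$ and the $K$-fold union of Bellman-residual classes, and then to iterate the two-instance comparison of Lemma~\ref{lemma:discuss-2}. Fix $h\in[H]$. Since $\GBE(\Fc,\Pi,\epsilon)=\max_{h\in[H]}\GDE((I-\bar{\Tc}_h)\Fc,\Pi_h,\epsilon)$ and, by Definition~\ref{def:DBE}, $(I-\bar{\Tc}_h)\Fc=\bigcup_{k=1}^{K}(I-\Tc_h^k)\Fc$, it suffices to prove
\[
\GDE\Big(\textstyle\bigcup_{k=1}^{K}(I-\Tc_h^k)\Fc,\ \Pi_h,\ \epsilon\Big)=\GDE\big((I-\Tc_h^1)\Fc,\ \Pi_h,\ \epsilon\big)
\]
and then take $\max_{h}$ (the right-hand side of the stated identity, as in the informal Proposition~\ref{propo:dbe}, is understood with an outer $\max_{h\in[H]}$). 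The inequality ``$\ge$'' is trivial from $(I-\Tc_h^1)\Fc\subseteq\bigcup_k(I-\Tc_h^k)\Fc$; the work is in ``$\le$''.

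For ``$\le$'' I would first record three structural facts. (a) By Lemma~\ref{lemma:discuss-1}, every class $(I-\Tc_h^k)\Fc$, and the union, may be replaced by its complement-closed extension $(I-\widetilde{\Tc}_h^k)\widetilde{\Fc}$ without changing its distributional Eluder dimension. (b) A sub-sequence of an $\epsilon$-independent sequence is again $\epsilon$-independent (discarding measures only removes non-negative terms from the defining sum in Definition~\ref{defn:gap}). (c) Independence is local: a function $g$ witnessing independence at position $i$ of a sequence with respect to $\bigcup_k(I-\widetilde{\Tc}_h^k)\widetilde{\Fc}$ actually lies in $(I-\widetilde{\Tc}_h^{k_i})\widetilde{\Fc}$ for one instance $k_i$. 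Then I would induct on the number of instances: given the claim for the first $K-1$ instances, take any $\epsilon$-independent sequence $\mu_1,\dots,\mu_m$ for $\bigcup_{j\le K}(I-\widetilde{\Tc}_h^j)\widetilde{\Fc}$; split its positions according to whether the witness comes from $\bigcup_{j\le K-1}(I-\widetilde{\Tc}_h^j)\widetilde{\Fc}$ or from $(I-\widetilde{\Tc}_h^K)\widetilde{\Fc}$, apply the induction hypothesis to the first sub-sequence and the first inequality of Lemma~\ref{lemma:discuss-2} (with the pair $(1,K)$) to the second to control $m$; and finally convert each witness to one in $(I-\widetilde{\Tc}_h^1)\widetilde{\Fc}$ via the mechanism inside the proof of Lemma~\ref{lemma:discuss-2} — the two auxiliary estimates that transport the sum of squared expectations and the single large expectation across Bellman operators that differ by $\sup_{x,a}|r_h^1-r_h^j|+H\cdot\TV(P_h^1,P_h^j)$, which is precisely where the smallness hypothesis (variation below the universal gap $\widetilde{\delta}_{j;\epsilon}^u$ with $m_j=\BE((I-\Tc_h^j)\Fc,\Pi,\epsilon)$) enters. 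This exhibits $\mu_1,\dots,\mu_m$ as $\epsilon$-independent for $(I-\widetilde{\Tc}_h^1)\widetilde{\Fc}$, hence $m\le\GDE((I-\widetilde{\Tc}_h^1)\widetilde{\Fc},\Pi_h,\epsilon)=\GDE((I-\Tc_h^1)\Fc,\Pi_h,\epsilon)$ by fact (a); taking the supremum over such sequences yields ``$\le$''.

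The main obstacle I anticipate is the bookkeeping of the slack terms in the witness conversion: transferring a witness from $\Tc_h^j$ to $\Tc_h^1$ incurs an additive error that grows with the sequence length $m$ (through the $\sqrt{6mH(\,\cdot\,)}$ term), so one must guarantee that $m$ never outruns the quantities $m_j$ used in the hypothesis. This is exactly why the argument should be run as an induction that compares only two classes at a time and re-establishes a length bound before each conversion, rather than unioning all $K$ residual classes simultaneously; ensuring that the estimates are applied in the right order, so that the universal-gap condition of Lemma~\ref{lemma:discuss-2} is literally available at every position — including positions inherited from the $(K-1)$-instance union, which may be witnessed by an intermediate instance $j\in[2:K-1]$ and therefore still require conversion — is the delicate part. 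Everything else (the purely combinatorial manipulation of Eluder sequences and the elementary distribution-shift bound of Lemma~\ref{lemma:change-transition} underlying the two auxiliary estimates) is routine.
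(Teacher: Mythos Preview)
The paper's own proof of Proposition~\ref{app-prop-1} is a single sentence: ``The following proposition is an immediate result from Lemma~\ref{lemma:discuss-2}.'' There is no induction on $K$, no splitting of witness positions, and no separate bookkeeping --- the authors simply assert that the two-instance union identity in Lemma~\ref{lemma:discuss-2} (its second conclusion) extends verbatim to the $K$-fold union $(I-\bar{\Tc}_h)\Fc=\bigcup_{k}(I-\Tc_h^k)\Fc$, applying the hypothesis of the proposition instance-by-instance. Your proposal therefore agrees with the paper on the only substantive ingredient, namely the witness-conversion mechanism inside the proof of Lemma~\ref{lemma:discuss-2}; everything beyond that is additional detail you are supplying to plug a gap the paper leaves implicit.

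Where your plan diverges is in the architecture. You propose to induct on $K$ and split an $\epsilon$-independent sequence for $\bigcup_{j\le K}(I-\widetilde{\Tc}_h^j)\widetilde{\Fc}$ into two sub-sequences according to whether the witness at each position lies in $\bigcup_{j\le K-1}$ or in the last class. This split only yields the crude bound $m\le m_1+m_K$, not the per-position bound $i-1\le m_{k_i}$ that the conversion step actually requires (the error term is $\sqrt{6(i-1)H(\cdot)}$, and the hypothesis supplies only $\sqrt{6m_{k_i}H(\cdot)}\le\widetilde{\delta}^u_{k_i;\epsilon}$). Passing to sub-sequences also discards the original ordering, so even after converting each sub-sequence you have not exhibited $\mu_1,\dots,\mu_m$ itself as independent for class~$1$. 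A position-by-position induction on $i$ --- convert $\mu_1,\dots,\mu_{i-1}$ first, deduce $i-1\le m_1$, then handle position $i$ --- is both closer to the ``same argument'' the paper invokes for the second conclusion of Lemma~\ref{lemma:discuss-2} and avoids the sub-sequence detour; but as you correctly anticipate, it still needs $i-1\le m_{k_i}$ rather than $i-1\le m_1$, and the stated hypothesis (with $m_k$ rather than $\max_j m_j=m_1$) does not literally furnish that. You have put your finger on a genuine looseness in the paper's one-line argument; your proposed fix, however, does not close it either.
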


\subsection{Supporting Lemmas}
\begin{lemma}\label{lemma:discuss:support-1}
Suppose $f_h\leq H$ for all $h$, and $r,r'\leq 1$, we have
\begin{align*}
    \left|\left(\Eb_{x\sim P}[f_h-r-Pf_{h+1}]\right)^2-\left(\Eb_{x\sim P}[f_h-r'-P'f_{h+1}]\right)^2\right|\leq 6H\left(\sup_{x,a}(r-r')+\TV(P,P')\right).
\end{align*}
\end{lemma}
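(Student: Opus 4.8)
The plan is to reduce the two squared quantities to a product via the identity $a^2-b^2=(a-b)(a+b)$. Write $\Delta_1:=\Eb_{x\sim P}[f_h-r-Pf_{h+1}]$ and $\Delta_2:=\Eb_{x\sim P}[f_h-r'-P'f_{h+1}]$, where the outer $\Eb_{x\sim P}$ is a fixed expectation over state--action pairs $x=(s,a)$ and the inner $P,P'$ are the two transition kernels. Then the left-hand side equals $|\Delta_1^2-\Delta_2^2|=|\Delta_1-\Delta_2|\cdot|\Delta_1+\Delta_2|$, and I would bound the two factors separately, charging the value-function range to the ``sum'' factor and the model mismatch to the ``difference'' factor.

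For the sum factor I would use only the ranges of the integrands. Since $0\le f_h\le H$, $0\le r,r'\le 1$, and $0\le Pf_{h+1},P'f_{h+1}\le H$ (because $g:=\max_{a'}f_{h+1}(\cdot,a')\in[0,H]$ and $P,P'$ are probability kernels), each of $\Delta_1,\Delta_2$ lies in $[-(H+1),H]$, so $|\Delta_1+\Delta_2|\le 2(H+1)\le 6H$ for $H\ge 1$; this is the origin of the factor $6H$ in the stated bound. For the difference factor the common $f_h$ cancels, giving $\Delta_1-\Delta_2=\Eb_{x\sim P}[(r'-r)+(P'-P)f_{h+1}]$, and the triangle inequality yields $|\Delta_1-\Delta_2|\le\sup_{x,a}|r-r'|+\sup_{x,a}|((P'-P)f_{h+1})(x,a)|$. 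The reward part is at most $\sup_{x,a}|r-r'|$ (the $\sup_{x,a}(r-r')$ of the statement understood in absolute value), and after multiplying by $|\Delta_1+\Delta_2|\le 6H$ it produces the first summand $6H\sup_{x,a}|r-r'|$ exactly.

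The remaining, and decisive, step is the transition term $\sup_{x,a}|((P'-P)f_{h+1})(x,a)|$. Writing it as $|\int g\,\dif(P'-P)(\cdot\mid x,a)|$ and invoking the duality between $L^\infty$ and total variation (under the paper's convention $\TV(\cdot,\cdot)=\norm{\cdot}_1$) gives the bound $\norm{g}_\infty\,\TV(P,P')$. This is exactly where the main obstacle lies: under the sole hypothesis $f_h\le H$ one only has $\norm{g}_\infty=\norm{\max_{a'}f_{h+1}}_\infty\le H$, so the transition contribution naturally enters as $\norm{g}_\infty\,\TV(P,P')$, and matching the stated second summand $6H\,\TV(P,P')$ requires the coefficient $\norm{g}_\infty$ multiplying $\TV$ to be controlled at the level of a constant rather than $H$. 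Pinning this coefficient down is the crux of the argument; it is transparent, for instance, under the normalization $\sum_h r_h\le 1$ (so that $g\le 1$), and otherwise demands a sharper accounting of how the value-function range $H$ is split between the factors $|\Delta_1-\Delta_2|$ and $|\Delta_1+\Delta_2|$.
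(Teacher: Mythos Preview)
Your approach via $a^2-b^2=(a+b)(a-b)$ is exactly what the paper does, and your bookkeeping is correct. The obstacle you flag in the last paragraph is not a gap in your argument but a typo in the stated lemma: the paper's own proof bounds the transition contribution by $\|f_{h+1}\|_\infty\cdot\TV(P,P')\le H\cdot\TV(P,P')$ and concludes with
\[
6H\Bigl(\sup_{x,a}|r-r'|+H\cdot\TV(P,P')\Bigr),
\]
not the displayed $6H\bigl(\sup_{x,a}(r-r')+\TV(P,P')\bigr)$. This is confirmed by how the lemma is actually invoked in the surrounding argument (the small-variation hypothesis there reads $\sup_{x,a}|r_h^1-r_h^2|+H\cdot\TV(P_h^1,P_h^2)$, with the extra $H$ present). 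So you should stop trying to shave that factor of $H$ off the $\TV$ term; under the sole assumption $f_{h+1}\le H$ it is genuinely there, and your derivation already matches the paper's.
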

\begin{proof}
Note that
\begin{align*}
    &\left|\left(\Eb_{x\sim P}[f_h-r-Pf_{h+1}]\right)^2-\left(\Eb_{x\sim P}[f_h-r'-P'f_{h+1}]\right)^2\right| \\
    &\leq 6H\left|\Eb_{x\sim P}[r-r']+\Eb_{x\sim P}[(P-P')f_{h+1}]\right| \\
    &\leq 6H\left(\left|\Eb_{x\sim P}[r-r']\right|+H\left|\Eb_{x\sim P}[(P-P')f_{h+1}]\right|\right) \\
    &\leq6H\left(\sup_{x,a}(r-r')+H\cdot \TV(P,P')\right).
\end{align*}
\end{proof}

\begin{lemma}\label{lemma:discuss:support-2}
Suppose $f_h\leq H$ for all $h$, and $r,r'\leq 1$, we have
\begin{align*}
    \left|\Eb_{x\sim P}[f_h-r-Pf_{h+1}]-\Eb_{x\sim P}[f_h-r'-P'f_{h+1}]\right|\leq \sup_{x,a}(r-r')+H\cdot \TV(P,P').
\end{align*}
\end{lemma}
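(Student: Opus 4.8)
The plan is to observe that the quantity inside the absolute value decomposes into a pure ``reward-difference'' piece plus a pure ``transition-difference'' piece, each controlled by an elementary bound, exactly as in the companion Lemma~\ref{lemma:discuss:support-1} but now without the $a^2-b^2$ linearization step. Since the sampling expectation $\Eb_{x\sim P}$ and the leading term $f_h$ are common to both expressions, linearity of expectation gives
\begin{align*}
  \Eb_{x\sim P}[f_h-r-Pf_{h+1}]-\Eb_{x\sim P}[f_h-r'-P'f_{h+1}] = \Eb_{x\sim P}\bigl[(r'-r)\bigr] + \Eb_{x\sim P}\bigl[(P'-P)f_{h+1}\bigr],
\end{align*}
whose absolute value is at most $|\Eb_{x\sim P}[r-r']| + |\Eb_{x\sim P}[(P-P')f_{h+1}]|$ by the triangle inequality. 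So it suffices to bound these two terms separately.

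For the first term I would simply dominate the expectation by the supremum: $|\Eb_{x\sim P}[r-r']|\le \sup_{x,a}|(r-r')(x,a)|$, which is the first summand on the right-hand side (written $\sup_{x,a}(r-r')$ in the statement's notation). For the second term, the one step requiring an actual idea is the variational characterization of total variation distance applied pointwise in $(x,a)$: $\bigl|\int f_{h+1}(x')\,(P(dx'|x,a)-P'(dx'|x,a))\bigr| \le \|f_{h+1}\|_\infty\cdot \TV\bigl(P(\cdot|x,a),P'(\cdot|x,a)\bigr)$. Invoking the hypothesis $f_{h+1}\le H$ (so $\|f_{h+1}\|_\infty\le H$) and taking the supremum over $(x,a)$ bounds $|((P-P')f_{h+1})(x,a)|$ by $H\cdot\TV(P,P')$ uniformly; hence $|\Eb_{x\sim P}[(P-P')f_{h+1}]|$ inherits the same bound. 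Adding the two pieces yields the claimed inequality.

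I expect no genuine obstacle: unlike Lemma~\ref{lemma:discuss:support-1} there is no square to expand, so the only points needing care are (i) keeping straight which $P$ plays the role of the sampling law versus the transition kernel inside $Pf_{h+1}$ — the estimate holds for any fixed sampling distribution, so this is purely cosmetic — and (ii) matching the paper's convention for $\TV$, namely the quantity for which $|\int f\,d(P-P')|\le \|f\|_\infty\,\TV$, consistent with its use in Lemma~\ref{lemma:discuss:support-1}. The lemma is then applied in the proof of Lemma~\ref{lemma:discuss-2} to compare a Bellman residual $f_h-\Tc_h^1 f_{h+1}$ (and its complement) with $f_h-\Tc_h^2 f_{h+1}$ under a common measure.
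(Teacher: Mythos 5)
Your proposal is correct and follows essentially the same route as the paper's proof: cancel the common $f_h$ term, split by the triangle inequality into a reward-difference term bounded by $\sup_{x,a}(r-r')$ and a transition-difference term bounded by $\|f_{h+1}\|_\infty\cdot\TV(P,P')\leq H\cdot\TV(P,P')$. Your write-up merely spells out the variational characterization of $\TV$ that the paper leaves implicit.
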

\begin{proof}
Note that
\begin{align*}
    &\left|\Eb_{x\sim P}[f_h-r-Pf_{h+1}]-\Eb_{x\sim P}[f_h-r'-P'f_{h+1}]\right| \\
    &\leq \left|\Eb_{x\sim P}[r-r']+\Eb_{x\sim P}[(P-P')f_{h+1}]\right| \\
    &\leq \left|\Eb_{x\sim P}[r-r']\right|+\left|\Eb_{x\sim P}[(P-P')f_{h+1}]\right| \\
    &\leq \sup_{x,a}(r-r')+H\cdot \TV(P,P').
\end{align*}
\end{proof}

\section{Proof of Propostion~\ref{prop:linear-DBE}}\label{app:eluder-dim-linear}
In this section, we show the DBE dimension of non-stationary linear MDP is $\widetilde{\Oc}(d)$ where $d$ is the feature dimension.

Define $m=\GBE((I-\Tc_h)\Fc,\Dc_{\Fc,h},\epsilon)$ and let $h=\arg\max_{h\in[H]}\GBE((I-\Tc_h)\Fc,\Dc_{\Fc},\epsilon)$.

Let $\mu_1,\ldots,\mu_m$ be an independent sequence with respect to $(I-\Tc_h)\Fc$. By definition, there exists $(f^1,\Tc^1),\ldots,(f^i,\Tc^i)$ such that for all $i\in[m]$, we have
\begin{align*}
    &\sum_{t=1}^{i-1}\left(\underset{(x,a)\sim\mu_t}{\Eb}\left[(f_h^i-\Tc_h^{i-1}f_{h+1}^i)(x,a)\right]\right)^2\leq \epsilon^2, \quad \mbox{and} \\
    &\left|\underset{(x,a)\sim\mu_i}{\Eb}\left[(f_h^i-\Tc_h^{i-1}f_{h+1}^i)(x,a)\right]\right|>\epsilon.
\end{align*}
We aim to show $m=\widetilde{\Oc}(d)$.

For linear MDP, a natural function class $\Fc_h$ is
\begin{align*}
    \{f\in ((\mathcal{S}\times\mathcal{A})\mapsto [0,H-h+1]):\phi(x,a)^\top w_{h}, \norm{\phi(x,a)}\leq 1, \forall (x,a) \mbox{ and }
\norm{w_{h}}\in 2(H-h+1)\sqrt{d}\}.
\end{align*}
Note that
\begin{align*}
    (f_h^i-\Tc_h^{i-1}f_{h+1}^i)(x,a)=\phi(x,a)^\top(w_{h_i}-\widetilde{w}_{h,i}),
\end{align*}
where $\widetilde{w}_{h,i}=\theta_{h,i-1}+\int_{x'}\mu_{h,i-1}(x')\max_{a}f_{h+1}^i(x',a)$ and we have $\max\{\norm{w_{h,i}},\norm{\widetilde{w}_{h,i}}\}\leq 2H\sqrt{d}$ for all $h\in[H]$.

Therefore, for all $i\in[m]$
\begin{align*}
    &\sum_{t=1}^{i-1}\left(\underset{(x,a)\sim\mu_t}{\Eb}\left[\phi(x,a)^\top(\widetilde{w}_h^i-w_h^{i-1})\right]\right)^2\leq \epsilon^2, \quad \mbox{and} \\
    &\left|\underset{(x,a)\sim\mu_i}{\Eb}\left[\phi(x,a)^\top(\widetilde{w}_h^i-w_h^{i-1})\right]\right|>\epsilon.
\end{align*}

For ease of exposition, we set
\begin{align*}
    &\mathbf{x}_i=\widetilde{w}_h^i-w_h^{i-1}, \quad \mathbf{z}_i=\underset{(x,a)\sim\mu_i}{\Eb}[\phi(x,a)],\quad \mathbf{V}_i=\sum_{t=1}^{i-1}\mathbf{z_t}\mathbf{z_t}^\top+\frac{\epsilon^2}{\zeta}\cdot I, 
\end{align*}
where $\zeta=4H\sqrt{d}$.

The previous argument implies that for all $i\in[m]$,
\begin{align*}
    &\norm{\mathbf{x}_i}_{\mathbf{V}_i}\leq \sqrt{2}\epsilon, \\   
    &\norm{\mathbf{x}_i}_{\mathbf{V}_i}\cdot\norm{\mathbf{z}_i}_{\mathbf{V}_i^{-1}} >\epsilon.
\end{align*}
Therefore, we have $\norm{\mathbf{z}_i}_{\mathbf{V}_i^{-1}}\geq \frac{1}{\sqrt{2}}$.

By matrix determinant lemma,
\begin{align*}
    \det[\mathbf{V}_m]=\det[\mathbf{V}_{m-1}]\left(1+\norm{\mathbf{z}_m}_{\mathbf{V}_m^{-1}}^2\right)\geq \cdots\geq (\frac{3}{2})^{m-1}(\frac{\epsilon^2}{\zeta})^d.
\end{align*}
Moreover,
\begin{align*}
    \det[\mathbf{V}_m]\leq \left(\frac{\trace[\mathbf{V}_m]}{d}\right)^d\leq \left(\frac{\zeta (m-1)}{d}+\frac{\epsilon^2}{\zeta}\right)^d.
\end{align*}
Therefore,
\begin{align*}
    (\frac{3}{2})^{m-1}\leq \left(\frac{\zeta^2(m-1)}{d\epsilon^2}+1\right)^d. \label{eqn:linear-eluder-dim-1}
\end{align*}

Taking logarithm on both sides gives
\begin{align*}
    m\leq 4\left[1+d\log\left(\frac{\zeta^2(m-1)}{d\epsilon^2}+1\right)\right],
\end{align*}
which implies
\begin{align*}
    m\leq \Oc\left(1+d\log\left(\frac{\zeta^2}{\epsilon^2}+1\right)\right).
\end{align*}

\section{Proofs of \algg}\label{app:thm-DR}
In this section, we provide the formal Proof of Theorem~\ref{thm:DR}.

\subsection{Proof of Theorem~\ref{thm:DR}}
We decompose the dynamic regret in the following way
\begin{align*}
    &\DR(k)=\sum_{t=1}^{k}\left(V_{1;(*,t)}^{\pi^{(*,t)}}-V_{1;(*,t)}^{\pi^t}\right)(x_1) \\
    &=\sum_{t=1}^{k}\left(V_{1;(*,t)}^{\pi^{(*,t)}}-V_{1;(*,t-1)}^{\pi^{(*,t-1)}}+V_{1;(*,t-1)}^{\pi^{(*,t-1)}}-V_{1;(*,t-1)}^{\pi^t}+V_{1;(*,t-1)}^{\pi^t}-V_{1;(*,t)}^{\pi^t}\right)(x_1) \\
    &=\left(V_{1;(*,k)}^{\pi^{(*,k)}}-V_{1;(*,0)}^{\pi^{(*,0)}}\right)(x_1)+\sum_{t=1}^{k}\left(V_{1;(*,t-1)}^{\pi^{(*,t-1)}}-V_{1;(*,t-1)}^{\pi^t}\right)(x_1) \\
    &\qquad \qquad \qquad \qquad +\sum_{t=1}^{k}\sum_{h=1}^{H}\left(\underset{(x_h,a_h)\sim(\pi^t,(*,t-1))}{\Eb}[r_h^{t-1}(x_h,a_h)]-\underset{(x_h,a_h)\sim(\pi^t,(*,t))}{\Eb}[r_h^{t}(x_h,a_h)]\right) \\
    &\leq H+\underbrace{\sum_{t=1}^{k}\sum_{h=1}^{H}\left(\underset{(x_h,a_h)\sim(\pi^t,(*,t-1))}{\Eb}[(r_h^{t-1}-r_h^t)(x_h,a_h)]\right)}_{\URM{1}} \\
    &\qquad +\underbrace{\sum_{t=1}^{k}\sum_{h=1}^{H}\left(\left(\underset{(x_h,a_h)\sim(\pi^t,(*,t-1))}{\Eb}-\underset{(x_h,a_h)\sim(\pi^t,(*,t))}{\Eb}\right)[r_h^{t}(x_h,a_h)]\right)}_{\URM{2}} \\
    &\qquad +\underbrace{\sum_{t=1}^{k}\left(V_{1;(*,t-1)}^{\pi^{(*,t-1)}}-V_{1;(*,t-1)}^{\pi^t}\right)(x_1)}_{\URM{3}}.
\end{align*}

By the definition of variation in rewards (\ref{eqn:pathlength-1}), we have $\URM{1}\leq \Delta_R(k)$. 

We bound $\URM{2}$ using the following lemma.
\begin{lemma}\label{lemma:transition_dif}
Fix $(k,h)\in[K]\times[H]$, we have
\begin{align*}
    \left(\underset{(x_h,a_h)\sim(\pi^k,(*,k-1))}{\Eb}-\underset{(x_h,a_h)\sim(\pi^k,(*,k))}{\Eb}\right)[r_h^{k}(x_h,a_h)]
    \leq \sum_{i=1}^{h-1}\norm{\Pb_i^{k-1}-\Pb_i^{k}}_\infty.
\end{align*}
Moreover,
\begin{align*}
    \sum_{t=1}^{k}\sum_{h=1}^{H}\left(\underset{(x_h,a_h)\sim(\pi^t,(*,t-1))}{\Eb}-\underset{(x_h,a_h)\sim(\pi^t,(*,t))}{\Eb}\right)[r_h^{t}(x_h,a_h)]\leq H\Delta_P(k).
\end{align*}
\end{lemma}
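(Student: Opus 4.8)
The plan is to prove Lemma~\ref{lemma:transition_dif} by a telescoping/coupling argument on the trajectory distributions, comparing the MDP with transition kernels $\{P_i^{k-1}\}$ against the one with $\{P_i^{k}\}$, both run under the same policy $\pi^k$. Observe first that $\underset{(x_h,a_h)\sim(\pi^k,(*,k-1))}{\Eb}[r_h^k(x_h,a_h)]$ depends only on the marginal distribution of $(x_h,a_h)$, which in turn depends only on the transition kernels $P_1^{k-1},\dots,P_{h-1}^{k-1}$ at the first $h-1$ steps (the policy $\pi^k$ is held fixed in both expectations, and the reward $r_h^k$ is the same in both). So the difference of the two expectations is exactly the difference in the step-$h$ marginals induced by two transition sequences that differ only at steps $1,\dots,h-1$.

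First I would set up the auxiliary MDP already introduced in Step~I of the proof sketch: for the fixed pair $(k,h)$, consider the episodic MDP $(\mathcal{S},\mathcal{A},H,P^{k},\widetilde{r},x_1)$ with $\widetilde{r}_{h'}(x,a)=r_h^k(x,a)\mathbf{1}\{h'=h\}$, so that $\widetilde{V}_{1;(*,k)}^{\pi^k}(x_1)=\underset{(x_h,a_h)\sim(\pi^k,(*,k))}{\Eb}[r_h^k(x_h,a_h)]$ and the analogous statement with $P^{k-1}$ gives the other expectation. Then I would telescope the difference $\widetilde{V}_{1;(*,k-1)}^{\pi^k}(x_1)-\widetilde{V}_{1;(*,k)}^{\pi^k}(x_1)$ over the $h-1$ transition steps where the two kernels differ: swap $P_i^{k-1}$ for $P_i^{k}$ one index $i$ at a time, and at step $i$ the contribution to the value difference is $\Eb_{(x,a)\sim d_i^{\pi^k}}\big[((P_i^{k-1}-P_i^{k})\widetilde{V}_{i+1}^{\pi^k,\mathrm{mixed}})(x,a)\big]$ for some intermediate value function bounded in $[0,1]$ (since only one unit of reward $r_h^k\in[0,1]$ is collected, $\widetilde V\in[0,1]$). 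Each such term is at most $\sup_{x,a}\norm{(P_i^{k-1}-P_i^k)(\cdot|x,a)}_1\cdot\sup|\widetilde V_{i+1}|\le \norm{\Pb_i^{k-1}-\Pb_i^{k}}_\infty$ by Hölder's inequality, and summing over $i\in[h-1]$ yields the first inequality. The steps where $i\ge h$ contribute nothing because the reward has already been collected by step $h$.

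For the second (aggregate) inequality, I would replace $k$ by $t$ in the first bound, sum over $t\in[k]$ and $h\in[H]$, and then swap the order of summation: $\sum_{t=1}^k\sum_{h=1}^H\sum_{i=1}^{h-1}\norm{\Pb_i^{t-1}-\Pb_i^t}_\infty\le \sum_{t=1}^k\sum_{h=1}^H\sum_{i=1}^{H}\norm{\Pb_i^{t-1}-\Pb_i^t}_\infty = H\sum_{t=1}^k\sum_{i=1}^H\norm{\Pb_i^{t-1}-\Pb_i^t}_\infty$. Recognizing $\sum_{t=1}^k\sum_{i=1}^H\sup_{x,a}\norm{(P_i^{t-1}-P_i^t)(\cdot|x,a)}_1$ as (a truncation of) $\Delta_P(k)$ from the definition in~\eqref{eqn:pathlength-2} — using the convention $P_h^0=P_h^1$ — gives the claimed bound $H\Delta_P(k)$.

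The main obstacle I anticipate is making the one-step-swap telescoping fully rigorous: I need to be careful that the intermediate value functions arising in the hybrid argument are genuinely bounded by $1$ (not $H$), which hinges on the auxiliary reward $\widetilde r$ being supported on a single step, and that the occupancy measures $d_i^{\pi^k}$ at the swapped step are legitimate distributions over $\mathcal{S}\times\mathcal{A}$ so that the $\norm{\cdot}_{1,\infty}$-type bound applies uniformly. Everything else — Hölder's inequality, the reindexing of sums, and matching to the definition of $\Delta_P$ — is routine bookkeeping.
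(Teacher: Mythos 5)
Your proposal is correct and follows essentially the same route as the paper: the paper also introduces the auxiliary MDP with single-step reward $\widetilde r_{h'}=r_h^k\mathbf{1}\{h'=h\}$, telescopes the value difference over the $h-1$ transition steps (written there via the operators $\Jb_{k,i}$ and $\Pb_i$, which is exactly your one-step-swap hybrid with occupancy under $P^{k-1}$ and continuation value under $P^k$), bounds each term by $\sup_{x,a}\|(P_i^{k-1}-P_i^k)(\cdot|x,a)\|_1$ using $\widetilde V\in[0,1]$, and then sums and reindexes to get $H\Delta_P(k)$. No gaps beyond routine bookkeeping.
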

The proof of \ref{lemma:transition_dif} is provided in Appendix~\ref{app:lemma:transition_dif}.

Therefore, 
\begin{align*}
    &\DR(k)=H+\Delta_R(k)+H\Delta_P(k)+\underbrace{\sum_{t=1}^{k}\left(V_{1;(*,t-1)}^{\pi^{(*,t-1)}}-V_{1;(*,t-1)}^{\pi^t}\right)(x_1)}_{\URM{3}}.
\end{align*}

Before we proceed, we present the next two lemmas.
\begin{lemma}\label{lemma:Qstar}
If $\beta=cH^2\log\frac{KH|\Gc|}{\delta}$, then with probability at least $1-\delta$, we have $Q_{\stark}^*\in\Bc^{k}$ for all $k\in[K]$.
\end{lemma}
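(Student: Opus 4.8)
The plan is to show that $Q^*_{(*,k)} = (Q^*_{1;(*,k)}, \ldots, Q^*_{H;(*,k)})$ satisfies the local regression constraint defining $\Bc^k$ at every step $h \in [H]$, with high probability, uniformly over $k$. Since $\Fc$ is realizable, $Q^*_{(*,k)} \in \Fc$, so it suffices to prove $\Lc_{\Dc_h}(Q^*_{h;(*,k)}, Q^*_{h+1;(*,k)}) \le \inf_{g \in \Gc_h} \Lc_{\Dc_h}(g, Q^*_{h+1;(*,k)}) + \beta + 2H^2 \Delta_P^w(k,h)$. The obvious comparator on the right is $g = \Tc_h^k Q^*_{h+1;(*,k)} = Q^*_{h;(*,k)}$ itself (which lies in $\Gc_h$ by generalized completeness, \Cref{assump:completeness-2}), so the goal reduces to bounding the excess squared loss of $Q^*_{h;(*,k)}$ against the best-fit regressor; equivalently, controlling $\Lc_{\Dc_h}(Q^*_{h;(*,k)}, Q^*_{h+1;(*,k)}) - \Lc_{\Dc_h}(Q^*_{h;(*,k)} \text{'s conditional-mean surrogate})$.

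First I would fix $(k,h)$ and write out $\Lc_{\Dc_h}(Q^*_{h;(*,k)}, Q^*_{h+1;(*,k)}) = \sum_{t = 1 \vee (k-w)}^{k} \big( Q^*_{h;(*,k)}(x_h^t, a_h^t) - r_h^k(x_h^t,a_h^t) - \max_{a'} Q^*_{h+1;(*,k)}(x_{h+1}^t, a') \big)^2$. The key issue is that for $t < k$ the next-state $x_{h+1}^t$ was drawn from $P_h^t(\cdot | x_h^t, a_h^t)$, not from $P_h^k$, so the conditional mean of $\max_{a'} Q^*_{h+1;(*,k)}(x_{h+1}^t, a')$ given $(x_h^t,a_h^t)$ is $(\Pb_h^t Q^*_{h+1;(*,k)})(x_h^t,a_h^t)$ rather than $(\Pb_h^k Q^*_{h+1;(*,k)})(x_h^t,a_h^t) = Q^*_{h;(*,k)}(x_h^t,a_h^t) - r_h^k(x_h^t,a_h^t)$. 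I would therefore introduce the auxiliary random variable from the proof sketch, $\#_{k,h}(x_h^t,a_h^t) = r_h^k(x_h^t,a_h^t) + (\Pb_h^t \max_{a'} Q^*_{h+1;(*,k)})(x_h^t,a_h^t)$, which is the correct conditional mean for the $t$-th summand, and expand each squared term around $\#_{k,h}$. The cross term, summed over $t$ in the window, is a martingale difference sequence (conditioning on the filtration generated by the trajectories up to the point $x_{h+1}^t$ is revealed), bounded by $O(H)$ per term; a Freedman/Azuma-type concentration (union-bounded over $k \in [K]$, $h \in [H]$, and over an $\epsilon$-cover of $\Gc$ — this is where the $\log(KH|\Gc|/\delta)$ and the choice $\beta = cH^2 \log\frac{KH|\Gc|}{\delta}$ enter) gives that this term is $O(\beta)$ with probability $\ge 1 - \delta$. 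The remaining deterministic discrepancy is $\sum_t \big| (\#_{k,h}(x_h^t,a_h^t) - r_h^k - \max_{a'}Q^*_{h+1;(*,k)})^2 - (Q^*_{h;(*,k)} - r_h^k - \max_{a'}Q^*_{h+1;(*,k)})^2 \big|$-type terms arising from replacing $\Pb_h^t$ by $\Pb_h^k$; here I would invoke \Cref{lemma:change-transition} with $P = P_h^t(\cdot|x_h^t,a_h^t)$, $Q = P_h^k(\cdot|x_h^t,a_h^t)$, $f = \max_{a'} Q^*_{h+1;(*,k)}(\cdot, a')$ (so $f_m \le H$) and $C$ the appropriate offset ($|C| \le H$), yielding a per-$t$ bound of $O(H^2) \cdot \TV(P_h^t(\cdot|x_h^t,a_h^t), P_h^k(\cdot|x_h^t,a_h^t)) \le O(H^2) \sup_{x,a}\|(P_h^k - P_h^t)(\cdot|x,a)\|_1$; summing over the window gives exactly (up to constants) the $2H^2 \Delta_P^w(k,h)$ term in the definition of $\Bc^k$, by \eqref{eqn:pathlength-3}. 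Combining, $\Lc_{\Dc_h}(Q^*_{h;(*,k)}, Q^*_{h+1;(*,k)}) \le \inf_{g} \Lc_{\Dc_h}(g, Q^*_{h+1;(*,k)}) + O(\beta) + 2H^2\Delta_P^w(k,h)$, so $Q^*_{(*,k)} \in \Bc^k$.

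The main obstacle is the concentration step: forming the right martingale so that the window-truncation does not break the adaptedness, and carrying the union bound over all $(k,h)$ and over a cover of $\Gc$ (needed because $Q^*_{h+1;(*,k)}$ — the "regression target" inside $\max_{a'}$ — is itself data-dependent through $k$, and one wants a bound uniform over the relevant functions) while keeping the variance/range parameters tight enough that the second-order Freedman term is absorbed into $\beta = cH^2 \log\frac{KH|\Gc|}{\delta}$ for a large enough absolute constant $c$. Everything else is bookkeeping: realizability for membership in $\Fc$, generalized completeness to license the comparator $g = Q^*_{h;(*,k)} \in \Gc_h$, and \Cref{lemma:change-transition} for the transition-shift term. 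I would defer the detailed concentration argument to a separate lemma (the analogue of \Cref{lemma:concentration}) and present here only the decomposition and the assembly.
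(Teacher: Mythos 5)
Your skeleton matches the paper's (the auxiliary conditional-mean variable $\#_{k,h}$, Freedman's inequality with a union bound at scale $\log(KH|\Gc|/\delta)$, Lemma~\ref{lemma:change-transition} for the transition mismatch, realizability for membership in $\Fc$), but the step that carries the entire content of the lemma is missing from your assembly: a lower bound on $\inf_{g\in\Gc_h}\Lc_{\Dc_h}(g,Q^*_{h+1;\stark})$. Your opening move --- ``the obvious comparator is $g=\Tc_h^kQ^*_{h+1;\stark}=Q^*_{h;\stark}$'' --- points the wrong way: exhibiting one $g$ only makes the right-hand side larger, whereas membership in $\Bc^k$ requires that \emph{no} $g\in\Gc_h$ beats $Q^*_{h;\stark}$ by more than $\beta+2H^2\Delta_P^w(k,h)$, and a flexible $g$ can fit the realized noise. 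Writing $y_t:=r_h^k(x_h^t,a_h^t)+\max_{a'}Q^*_{h+1;\stark}(x_{h+1}^t,a')$, the two steps you actually describe give only (i) $\sum_t\bigl[(Q^*_{h;\stark}-y_t)(x_h^t,a_h^t)\bigr]^2\le\sum_t\bigl[(\#_{k,h}-y_t)(x_h^t,a_h^t)\bigr]^2+2H^2\Delta_P^w(k,h)$, which is purely deterministic via the shift lemma (so the cross-term martingale you form by expanding $(Q^*_{h;\stark}-y_t)^2$ around $\#_{k,h}$ is redundant for it), and (ii) nothing that relates either side to $\inf_g$. What is needed --- and what the paper does --- is the $g$-indexed family $W_t(h,g)=(g_h(x_h^t,a_h^t)-y_t)^2-(\#_{k,h}(x_h^t,a_h^t)-y_t)^2$, with $\Eb[W_t\mid\Fc_{t,h}]=\bigl[(g_h-\#_{k,h})(x_h^t,a_h^t)\bigr]^2\ge 0$ and $\Var[W_t\mid\Fc_{t,h}]\le O(H^2)\,\Eb[W_t\mid\Fc_{t,h}]$; Freedman plus AM--GM then yields $\sum_t(\#_{k,h}-y_t)^2\le\sum_t(g_h-y_t)^2+O\bigl(H^2\log(KH|\Gc|/\delta)\bigr)$ simultaneously for all $g\in\Gc_h$. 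This comparator union is the real reason $\log|\Gc|$ enters (not, as you claim, the data-dependence of $Q^*_{h+1;\stark}$), and your single cross-term martingale between $Q^*_{h;\stark}-\#_{k,h}$ and the noise cannot substitute for it. Chaining (i) with this uniform bound and taking the infimum gives the constraint; realizability then finishes the proof, and generalized completeness is not actually needed in this lemma (it is used in the companion Lemma~\ref{lemma:concentration}).

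Two smaller points. The slack in $\Bc^k$ is exactly $2H^2\Delta_P^w(k,h)$ and cannot be enlarged by tuning $\beta$, so ``up to constants'' is not good enough there; fortunately Lemma~\ref{lemma:change-transition} with $f_m\le H$, $|C|\le H$ and $\TV=\tfrac12\|\cdot\|_1$ gives the per-term bound $2H^2\sup_{x,a}\|(P_h^k-P_h^t)(\cdot|x,a)\|_1$, i.e., exactly the constant $2$ after summing over the window, as in the paper. Also, the cross term you mention is of size $O(H^2)$ per summand, not $O(H)$, and an Azuma bound on it scales like $H^2\sqrt{w}$, which is absorbed by neither $\beta$ nor $2H^2\Delta_P^w$; even its Freedman bound comes with an uncontrolled constant multiplying $H^2\Delta_P^w$ --- one more reason to organize the argument around $W_t(h,g)$ as above rather than around that cross term.
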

\begin{lemma}\label{lemma:concentration}
If $\beta=cH^2\log\frac{KH|\Gc|}{\delta}$, then with probability at least $1-\delta$, for all $(k,h)\in[K]\times[H]$, we have
\begin{align*}
\sum_{t=1\lor (k-w-1)}^{k-1}\left[f_h^{k}(s_h^t,a_h^t)-r_h^{k-1}(s_h^t,a_h^t) -\underset{x'\sim P_h^{k-1}(x_h^t,a_h^t)}{\Eb}\max_{a'\in\mathcal{A}}f_{h+1}^k(s',a')\right]^2 \leq 6H^2\Delta_P^w(k-1,h)+\Oc(\beta).
\end{align*}
\end{lemma}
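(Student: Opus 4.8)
\textbf{Proof proposal for Lemma~\ref{lemma:concentration}.} The plan is to establish a uniform concentration bound over the function class $\Gc$ via a martingale argument, then transfer the resulting empirical squared-loss control from the episode-$t$ transition kernels (which generated the data) to the episode-$(k-1)$ kernel appearing in the statement, paying the distribution-shift cost $\Delta_P^w(k-1,h)$.

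First I would fix $(k,h)$ and introduce the auxiliary random variable $\#_{k,h}^f(x_h^t,a_h^t) = r_h^{k}(s_h^t,a_h^t)+\Eb_{x'\sim P_h^t(x_h^t,a_h^t)}\max_{a'}f_{h+1}(s',a')$ mentioned in Step III of the proof sketch. Conditioning on the filtration generated by the trajectories up through step $h$ of episode $t$, the quantity $f_{h+1}(x_{h+1}^t,\cdot)$-based increment $\bigl(f_h(s_h^t,a_h^t)-r_h^{t}(s_h^t,a_h^t)-\max_{a'}f_{h+1}(x_{h+1}^t,a')\bigr)^2-\bigl(f_h(s_h^t,a_h^t)-\#_{k,h}^f(x_h^t,a_h^t)\bigr)^2$ has conditional expectation equal to the conditional variance of $\max_{a'}f_{h+1}(x_{h+1}^t,a')$ under $P_h^t$, so that summing over $t$ in the sliding window $[1\lor(k-w-1):k-1]$ and applying a Freedman/Bernstein-type martingale concentration (as in Lemma~41-style arguments of \citet{Chi_Jin:bellman_eluder:2020}), together with a union bound over $f\in\Gc$ (this is why $\log|\Gc|$ enters $\beta$) and over $(k,h)\in[K]\times[H]$, yields: with probability at least $1-\delta$, for every $f\in\Gc$,
\begin{align*}
\sum_{t}\Bigl[\bigl(f_h(s_h^t,a_h^t)-r_h^{t}-\max_{a'}f_{h+1}(x_{h+1}^t,a')\bigr)^2 - \bigl(f_h(s_h^t,a_h^t)-\#_{k-1,h}^f(x_h^t,a_h^t)\bigr)^2\Bigr] \geq -\Oc(\beta).
\end{align*}
The subtlety is that the empirical loss $\Lc_{\Dc_h}$ in the algorithm at episode $k$ uses the \emph{latest} reward $r_h^k$ over the whole window, while the data were generated with rewards $r_h^t$; however, since $f^k$ is selected from $\Bc^{k-1}$, it satisfies the regression constraint built at episode $k-1$ with reward $r_h^{k-1}$, so the reward bookkeeping stays self-consistent once we track which episode's constraint we invoke.

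Second, I would combine this concentration with the definition of $\Bc^{k-1}$. Since $f^k\in\Bc^{k-1}$, we have $\Lc_{\Dc_h}(f_h^k,f_{h+1}^k)\leq \inf_{g\in\Gc_h}\Lc_{\Dc_h}(g,f_{h+1}^k)+\beta+2H^2\Delta_P^w(k-1,h)$, and by the generalized completeness assumption (Assumption~\ref{assump:completeness-2}) the infimum is achieved (up to the concentration slack) by $g = \Tc_h^{k-1}f_{h+1}^k$, whose population squared error relative to the $\#$-variable vanishes in expectation. Subtracting the two regression objectives and invoking the martingale bound above converts the empirical-loss gap into a bound on $\sum_{t}\bigl(f_h^k(s_h^t,a_h^t)-\#_{k-1,h}^{f^k}(x_h^t,a_h^t)\bigr)^2$ of order $\Oc(\beta)$ — still in terms of $\Eb_{P_h^t}$, not $\Eb_{P_h^{k-1}}$.

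Third, the final step is to replace $\Eb_{x'\sim P_h^t}\max_{a'}f_{h+1}^k$ by $\Eb_{x'\sim P_h^{k-1}}\max_{a'}f_{h+1}^k$ inside each squared term. This is exactly where Lemma~\ref{lemma:change-transition} enters: with $C = f_h^k(s_h^t,a_h^t)-r_h^{k-1}(s_h^t,a_h^t)$, $f = \max_{a'}f_{h+1}^k(\cdot,a')$ (so $f_m\leq H$, $|C|\leq H$), and $P,Q$ the two kernels $P_h^t(\cdot|x_h^t,a_h^t), P_h^{k-1}(\cdot|x_h^t,a_h^t)$, the lemma gives $\bigl|(\Eb_{P_h^{k-1}}f - C)^2 - (\Eb_{P_h^t}f-C)^2\bigr|\leq 6H^2\cdot\TV(P_h^t(\cdot|x_h^t,a_h^t),P_h^{k-1}(\cdot|x_h^t,a_h^t)) \le 6H^2\sup_{x,a}\|(P_h^t-P_h^{k-1})(\cdot|x,a)\|_1$. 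Summing over $t$ in the window produces exactly $6H^2\Delta_P^w(k-1,h)$, and adding the $\Oc(\beta)$ from the martingale/concentration step gives the claimed bound. The main obstacle I anticipate is the careful martingale bookkeeping in the first step — correctly identifying the predictable part, ensuring the increments are bounded (so Freedman applies) with the right $H^2$ scaling, and threading the union bound over $\Gc$, $[K]$, $[H]$ through while keeping $\beta = cH^2\log\frac{KH|\Gc|}{\delta}$; the distribution-shift transfer in the third step is mechanical once Lemma~\ref{lemma:change-transition} is in hand.
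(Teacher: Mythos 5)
Your proposal follows essentially the same route as the paper's proof: the same auxiliary variable $\#_{k-1,h}^{f}$, Freedman's inequality with a union bound over $\Gc\times[K]\times[H]$, the regression constraint defining $\Bc^{k-1}$ combined with generalized completeness (Assumption~\ref{assump:completeness-2}), and the distribution-shift lemma (Lemma~\ref{lemma:change-transition}) to pass from $P_h^t$ to $P_h^{k-1}$.

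Two pieces of bookkeeping are off, though neither requires a new idea. First, in the full-information setting the loss entering $\Bc^{k-1}$ uses the single reward $r_h^{k-1}$ over the entire window, so your concentration display should carry $r_h^{k-1}$, not $r_h^t$: as written it mixes $r_h^t$ in the realized loss with $r_h^{k-1}$ inside $\#_{k-1,h}^{f}$, which breaks the conditional-expectation identity you invoke, and if you genuinely kept $r_h^t$ you would be forced to pay a reward-variation term $\Delta_R^w$ that does not appear in this lemma (that is the bandit-feedback variant, Lemma~\ref{lemma:concentration-2}). Second, your claim that the population squared error of the comparator $g=\Tc_h^{k-1}f_{h+1}^k$ relative to $\#_{k-1,h}^{f^k}$ ``vanishes in expectation'' is not correct: $\Tc_h^{k-1}f_{h+1}^k$ averages the next-state value over $P_h^{k-1}$ while $\#_{k-1,h}^{f^k}$ averages over $P_h^t$, so this is itself a distribution-shift term that requires a second application of Lemma~\ref{lemma:change-transition}, costing another $2H^2\Delta_P^w(k-1,h)$. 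In the paper the final $6H^2\Delta_P^w(k-1,h)$ arises as the sum of three such $2H^2\Delta_P^w$ contributions: the slack built into the definition of $\Bc^{k-1}$, the comparator conversion just described, and the final conversion of $\sum_t(f_h^k-\#_{k-1,h}^{f^k})^2$ into $\sum_t(f_h^k-\Tc_h^{k-1}f_{h+1}^k)^2$. Your accounting pays the shift only once (with an inflated per-application constant) and lands on the right headline constant somewhat by accident; with the reward index corrected and the second shift application inserted, your argument coincides with the paper's.
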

The proofs of Lemma~\ref{lemma:Qstar} and \ref{lemma:concentration} are based on martingale concentration and provided in Appendix~\ref{app:concentration}.

By Lemma~\ref{lemma:Qstar}, with probability at least $1-\delta$, we have
\begin{align*}
    &\sum_{k=1}^{K}\left(V_{1;(*,k-1)}^{\pi^{(*,k-1)}}-V_{1;(*,k-1)}^{\pi^k}\right)(x_1) \\
    &\leq \sum_{k=1}^{K}\left(\max_{a\in\mathcal{A}}f_{1}^k(x_1,a)-V_{1;(*,k-1)}^{\pi^k}(x_1) \right)\\
    &\leq \sum_{h=1}^{H}\sum_{k=1}^{K}\underset{(x_h,a_h)\sim(\pi^k,(*,k-1))}{\Eb}[(f_h^k-\Tc_h^{k-1}f_{h+1}^k)(x_h,a_h)],
\end{align*}
where the first inequality follows from Lemma~\ref{lemma:Qstar} and the optimistic planning step (line 3) in Algorithm~\ref{Alg:1} which guarantees that $V_{1;(*,k-1)}^*\leq \sup_{a}f_1^k(x_1,a)$ for every episode $k$, the last inequality follows from generalized policy loss decomposition (Lemma~\ref{lemma:dr-decom-2}) and the fact that $\pi^k=\pi_{f^k}$ (line 3 in Algorithm~\ref{Alg:1}).

The next lemma is adapted from (\cite{Chi_Jin:bellman_eluder:2020}) and the proof can be found in Appendix~\ref{app:GDE}.
\begin{lemma}
Given a function class $\Phi$ defined on $\mathcal{X}$ with $|\phi(x)|\leq C$ for all $(g,x)\in\Phi\times\mathcal{X}$, and a family of probability measures $\Pi$ over $\mathcal{X}$. Suppose $\{\phi_k\}_{k\in[K]}\subseteq \Phi$ and $\{\mu_k\}_{k\in[K]}\subseteq \Pi$ satisfy that for all $k\in[K]$, $\sum_{t=1}^{k-1}(\Eb_{x\sim\mu_t}[\phi_k(x)])^2\leq \beta$. Then for all $k\in[K]$ and $\omega>0$,
\begin{align*}
    &\sum_{t=1\lor (k-w)}^{k}|\Eb_{x\sim\mu_t}[\phi_t(x)]| \\
    &\leq \mathcal{O}\left(\sqrt{\GDE(\Phi,\Pi,\theta)\beta [k\land (w+1)]}+\min\{w+1,k,\GDE(\Phi,\Pi,\theta)\}C+[k\land (w+1)]\theta\right).
\end{align*}
\end{lemma}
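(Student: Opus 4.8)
\noindent\textbf{Proof plan for Lemma~\ref{lemma:GDE}.}
Write $W_k:=\{t:\,1\lor(k-w)\le t\le k\}$ for the sliding window at episode $k$, so that $|W_k|=k\land(w+1)$, abbreviate $d:=\GDE(\Phi,\Pi,\theta)$ and $a_t:=|\Eb_{x\sim\mu_t}[\phi_t(x)]|$, and recall $a_t\le C$. By the layer--cake identity,
\[
\sum_{t\in W_k}a_t=\int_0^{C}N(\alpha)\,d\alpha,\qquad N(\alpha):=|\{t\in W_k:\,a_t>\alpha\}|,
\]
so the whole argument reduces to controlling $N(\alpha)$. Besides the trivial bound $N(\alpha)\le|W_k|$, the key estimate I would establish is: for every $\alpha\ge\theta$,
\[
N(\alpha)\ \le\ \bigl(\tfrac{\beta}{\alpha^{2}}+1\bigr)\GDE(\Phi,\Pi,\alpha)\ \le\ \bigl(\tfrac{\beta}{\alpha^{2}}+1\bigr)d,
\]
where the second step uses that $\GDE(\Phi,\Pi,\cdot)$ is non-increasing in its precision argument.

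For this key estimate the plan is to adapt the distributional-Eluder pigeonhole from the proof of Lemma~41 in \citet{Chi_Jin:bellman_eluder:2020} to the windowed setting. Fix $\alpha\ge\theta$ and enumerate $\{t\in W_k:\,a_t>\alpha\}$ as $t_1<\cdots<t_{N(\alpha)}$. First I would record the one new bookkeeping step: whenever $t_i<t_j$ both lie in $W_k$ we have $t_i\ge 1\lor(k-w)\ge 1\lor(t_j-w-1)$ (using $t_j\le k$), so $t_i$ falls inside the summation range $[\,1\lor(t_j-w-1),\,t_j-1\,]$ of the hypothesis for index $t_j$; hence $\sum_{i:\,t_i<t_j}(\Eb_{\mu_{t_i}}[\phi_{t_j}])^{2}\le\beta$. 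Now set $N=\lfloor\beta/\alpha^{2}\rfloor+1$ and greedily distribute $\mu_{t_1},\mu_{t_2},\dots$ among $N$ blocks, placing $\mu_{t_j}$ into any block $B$ with $\sum_{s\in B}(\Eb_{\mu_s}[\phi_{t_j}])^{2}\le\alpha^{2}$; since $a_{t_j}>\alpha$, this makes $\mu_{t_j}$ $\alpha$-independent of $B$ with witness $\phi_{t_j}$. If no such block existed, every block $B$ would satisfy $\sum_{s\in B}(\Eb_{\mu_s}[\phi_{t_j}])^{2}>\alpha^{2}$, and summing over the $N$ disjoint blocks---all contained in $\{t_i:\,i<j\}$---would give $\beta\ge\sum_{i:\,t_i<t_j}(\Eb_{\mu_{t_i}}[\phi_{t_j}])^{2}>N\alpha^{2}=(\lfloor\beta/\alpha^{2}\rfloor+1)\alpha^{2}>\beta$, a contradiction. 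Thus every $\mu_{t_j}$ is placed; each block, read in insertion order, is an $\alpha$-independent sequence (common precision $\alpha\ge\theta$), hence of length at most $\GDE(\Phi,\Pi,\alpha)$, and therefore $N(\alpha)\le N\cdot\GDE(\Phi,\Pi,\alpha)\le(\beta/\alpha^{2}+1)\GDE(\Phi,\Pi,\alpha)$, proving the key estimate.

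To conclude I would integrate $\int_0^{C}N(\alpha)\,d\alpha=\int_0^{\theta}+\int_\theta^{C}$. The first piece is at most $\theta|W_k|=\theta[k\land(w+1)]$. For the second I would use $N(\alpha)\le\min\{|W_k|,(\beta/\alpha^2+1)d\}$ with the crossover $\alpha_\star:=\max\{\theta,\sqrt{\beta d/|W_k|}\}$: bounding $\int_\theta^{\alpha_\star}|W_k|\,d\alpha\le|W_k|\alpha_\star\le\sqrt{\beta d|W_k|}$ and $\int_{\alpha_\star}^{C}(\beta d/\alpha^{2}+d)\,d\alpha\le\beta d/\alpha_\star+dC\le\sqrt{\beta d|W_k|}+dC$ handles the regime $d<|W_k|$ (there $dC=\min\{w+1,k,d\}C$), while if $d\ge|W_k|$ one simply uses $N(\alpha)\le|W_k|$ on all of $[\theta,C]$, giving $\le|W_k|C=\min\{w+1,k,d\}C$. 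Summing the pieces yields
\[
\sum_{t\in W_k}a_t\ \le\ \Oc\!\left(\sqrt{\GDE(\Phi,\Pi,\theta)\,\beta\,[k\land(w+1)]}+\min\{w+1,k,\GDE(\Phi,\Pi,\theta)\}\,C+[k\land(w+1)]\,\theta\right),
\]
which is exactly the claimed bound. I expect the main obstacle to be precisely the windowed bookkeeping in the key estimate---verifying that the width-$(w+1)$ premise is exactly what makes the Eluder pigeonhole close on a window of width $w+1$, i.e.\ tracking which earlier $\mu_s$ are ``visible'' to $\phi_{t_j}$ through the hypothesis---together with the routine but slightly delicate optimization of $\alpha_\star$ and the case split on $d$ versus $|W_k|$; everything else is the standard block-decomposition argument.
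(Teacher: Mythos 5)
Your proposal is correct and follows essentially the same route as the paper: your key estimate $N(\alpha)\le(\beta/\alpha^{2}+1)\GDE(\Phi,\Pi,\alpha)$ is exactly the paper's Lemma~\ref{lemma:GDE-1}, proved by the same Eluder pigeonhole/bucket argument with the same windowed bookkeeping (every $t_i<t_j$ in the window satisfies $t_i\ge 1\lor(k-w)\ge 1\lor(t_j-w-1)$, so the hypothesis controls the cross terms). The only difference is cosmetic: you finish via a layer-cake integral with a crossover threshold $\alpha_\star$, whereas the paper sorts the window values and bounds the $t$-th largest by $\sqrt{d\beta/(t-d)}$ before summing --- two equivalent standard ways to turn the counting bound into the stated estimate.
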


We invoke Lemma~\ref{lemma:GDE} and Lemma~\ref{lemma:concentration} with
\begin{align*}
    &\theta=\sqrt{\frac{1}{w}}, C=H, \\
    &\mathcal{X}=\mathcal{S}\times\mathcal{A},\Phi=(I-\Tc_h)\Fc, \mbox{and } \Pi=\Dc_{\Delta,h}, \\
    &\phi_k=f_h^k-\Tc_h^{k-1}f_{h+1}^k, \mu_k=\mathbf{1}\{\cdot=(x_h^k,a_h^k)\}
\end{align*}
and obtain 
\begin{align*}
    &\sum_{t=1}^{k}\underset{(x_h,a_h)\sim(\pi^t,(*,t-1))}{\Eb}[(f_h^t-\Tc_h^{t-1}f_{h+1}^t)(x_h,a_h)] \\
    &\leq \sum_{t=1}^{k}(f_h^t-\Tc_h^{t-1} f_{h+1}^t)(x_h^t,a_h^t)+\mathcal{O}\left(\sqrt{k\log(k)}\right) \\
    &\leq \mathcal{O}\left(\frac{k}{w}\sqrt{w\cdot\GBE(\Fc,\mathcal{D}_{\Delta,h},\sqrt{1/w})\left(H^2\log[KH|\mathcal{G}|/\delta]+H^2\sup_{t\in[k]}\Delta_{P}^w(t,h)\right)}+\sqrt{w}\right) \\
    &\leq  \mathcal{O}\left(\frac{Hk}{\sqrt{w}}\sqrt{d\log[kH|\mathcal{G}|/\delta]}+\frac{Hk}{\sqrt{w}}\sqrt{d\sup_{t\in[k]}\Delta_{P}^w(t,h)}+\sqrt{w}\right),
\end{align*}
where the second inequality follows from Azuma-Hoeffding inequality, and in the last inequality, we use $\sqrt{a+b}\leq \sqrt{a}+\sqrt{b}$ for any positive $a,b\geq 0$ and we define $d=\GBE(\Fc,\mathcal{D}_{\Delta,h},\sqrt{1/w})$.

Summing over step $h\in[H]$ gives
\begin{align*}
    &\sum_{h=1}^{H}\sum_{k=1}^{k}\underset{(x_h,a_h)\sim(\pi^k,(*,k-1))}{\Eb}[(f_h^k-\Tc_h^{k-1}f_{h+1}^k)(x_h,a_h)] \\
    &\leq \mathcal{O}\left(\frac{H^2k}{\sqrt{w}}\sqrt{d\log[KH|\mathcal{G }|/\delta]}+\frac{H^2k}{\sqrt{w}}\sqrt{d\sup_{t\in[k]}\Delta_{P}^w(t,h)}+H\sqrt{w}\right),
\end{align*}
which completes the proof.

\subsection{Proof of \Cref{coro:thm}}
For ease of exposition, let $d=\GBE(\Fc,\mathcal{D}_{\Delta,h},\sqrt{1/w})$. We adopt average variation $L$ defined in~(\ref{eqn:avg-pathlength}). Then we have 
\begin{align*}
    &\sum_{h=1}^H\sum_{t=1}^{K}\underset{(x_h,a_h)\sim(\pi^t,(*,t-1))}{\Eb}[(f_h^t-\Tc_h^{t-1}f_{h+1}^t)(x_h,a_h)] \\
    &\leq \widetilde{\mathcal{O}}\left(\frac{H^2K}{\sqrt{w}}\sqrt{d}\sqrt{\log|\Gc|}+\frac{H^2K}{\sqrt{w}}\sqrt{dLw^2}+H\sqrt{w}\right) \\
    &\leq \widetilde{\mathcal{O}}\left(H^2K\sqrt{d}\left(\frac{\sqrt{\log|\Gc|}}{\sqrt{w}}+(\sqrt{L}+\frac{1}{HK\sqrt{d}})\sqrt{w}\right)\right).  
\end{align*}
Note first that $\frac{\sqrt{\log|\Gc|}}{\sqrt{L}+\frac{1}{HK\sqrt{d}}}>1$ when $|\Gc|>10$. 

If $\frac{\sqrt{\log|\Gc|}}{\sqrt{L}+\frac{1}{HK\sqrt{d}}}\geq K$, i.e., $\sqrt{L}\leq \frac{1}{K}\left(\sqrt{\log|\Gc|}-\frac{1}{H\sqrt{d}}\right)$, we select $w=K$ and we have
\begin{align*}
&\sum_{h=1}^H\sum_{t=1}^{K}\underset{(x_h,a_h)\sim(\pi^t,(*,t-1))}{\Eb}[(f_h^t-\Tc_h^{t-1}f_{h+1}^t)(x_h,a_h)] \leq \widetilde{O}\left(H^2K^{\frac{1}{2}}d^{\frac{1}{2}}(\log|\Gc|)^{\frac{1}{2}}\right).
\end{align*}

If $\frac{\sqrt{\log|\Gc|}}{\sqrt{L}+\frac{1}{HK\sqrt{d}}}< K$, i.e., $\sqrt{L}> \frac{1}{K}\left(\sqrt{\log|\Gc|}-\frac{1}{H\sqrt{d}}\right)$, we select $w=\lceil\frac{\sqrt{\log|\Gc|}}{\sqrt{L}+\frac{1}{HK\sqrt{d}}}\rceil$ and we have
\begin{align*}
&\sum_{h=1}^H\sum_{t=1}^{K}\underset{(x_h,a_h)\sim(\pi^t,(*,t-1))}{\Eb}[(f_h^t-\Tc_h^{t-1}f_{h+1}^t)(x_h,a_h)] \leq \widetilde{\mathcal{O}}\left(H^2KL^{\frac{1}{4}}d^{\frac{1}{2}}(\log|\Gc|)^{\frac{1}{4}}+H^{\frac{3}{2}}K^{\frac{1}{2}}d^{\frac{1}{4}}(\log|\Gc|)^{\frac{1}{4}}\right).
\end{align*}

\subsection{Proof of Lemma~\ref{lemma:transition_dif}}\label{app:lemma:transition_dif}
\begin{proof}
Fix $(k,h)\in[K]\times[H]$, define reward function $\widetilde{r}_{h'}=r_h^k(x,a)\mathbf{1}\{h'=h\}$ for all $h'\in[H]$. For an episodic MDP $(\mathcal{S},\mathcal{A},H,P^k,\widetilde{r},x_1)$ where $\{P_{h'}^k\}_{h'\in[H]}$ and $\{\widetilde{r}_{h'}\}_{h'\in[H]}$, the state value function and state-action value function of policy $\{\pi_{h'}\}_{h'\in[H]}$ are $\widetilde{V}_{h';,(*,k)}^{\pi}$ and $\widetilde{Q}_{h';,(*,k)}^{\pi}$. Clearly, we have
\begin{align*}
    \left(\underset{(x_h,a_h)\sim(\pi^k,(*,k-1))}{\Eb}-\underset{(x_h,a_h)\sim(\pi^k,(*,k))}{\Eb}\right)[r_h^{k}(x_h,a_h)]= \left(\widetilde{V}_{1;(*,k-1)}^{\pi^k}-\widetilde{V}_{1;(*,k)}^{\pi^k}\right)(x_1).
\end{align*}    

For any function $f : \Sc \times \Ac \rightarrow \Rb$ and any $(k,h,x) \in [K] \times [H] \times \Sc$, define the following operator
\begin{align*}
    (\Jb_{k,h} f)(x)= \langle f(x,\cdot),\pi_h^{k}(\cdot|x)\rangle.
\end{align*}
Note that
\begin{align*}
    &\widetilde{V}_{1;(*,k-1)}^{\pi^k}-\widetilde{V}_{1;(*,k)}^{\pi^k} \\
    &=\Jb_{k,1}\left(\widetilde{Q}_{1;(*,k-1)}^{\pi^k}-\widetilde{Q}_{1;(*,k)}^{\pi^k}\right) \\
    &=\Jb_{k,1}\left(\Pb_{1}^{k-1}\widetilde{V}_{2;(*,k-1)}^{\pi^k}-\Pb_{1}^{k}\widetilde{V}_{2;(*,k)}^{\pi^k}\right)  \\
    &=\Jb_{k,1}\Pb_{1}^{k-1}\left(\widetilde{V}_{2;(*,k-1)}^{\pi^k}-\widetilde{V}_{2;(*,k)}^{\pi^k}\right)+\Jb_{k,1}\left(\Pb_{1}^{k-1}-\Pb_{1}^{k}\right)\widetilde{V}_{2;(*,k)}^{\pi^k} \\
    &=\prod_{i=1}^{h}\left(\Jb_{k,i}\Pb_i^{k-1}\right)\underbrace{\left(\widetilde{V}_{h+1;(*,k-1)}^{\pi^k}-\widetilde{V}_{h+1;(*,k)}^{\pi^k}\right)}_{=0}+\sum_{i=1}^{h}\prod_{\ell=1}^{i-1}\left(\Jb_{k,\ell}\Pb_{\ell}^{k-1}\right)\Jb_{k,i}\left(\Pb_i^{k-1}-\Pb_i^k\right)\widetilde{V}_{i+1,(*,k)}^{\pi^k} \\
    &=\sum_{i=1}^{h-1}\prod_{\ell=1}^{i-1}\left(\Jb_{k,\ell}\Pb_{\ell}^{k-1}\right)\Jb_{k,i}\left(\Pb_i^{k-1}-\Pb_i^k\right)\widetilde{V}_{i+1,(*,k)}^{\pi^k}. 
\end{align*}
where in the second equality we use the fact that reward $\widetilde{r}$ is identical. I.e.,
\begin{align*}
    &\left(\widetilde{V}_{1;(*,k-1)}^{\pi^k}-\widetilde{V}_{1;(*,k)}^{\pi^k}\right)(x_1) \\
    &=\sum_{i=1}^{h-1}\underset{(x_i,a_i)\sim (\pi^k,(*,k-1))}{\Eb}\left[\left(\left(\Pb_i^{k-1}-\Pb_i^k\right)\widetilde{V}_{i+1,(*,k)}^{\pi^k}\right)(x_i,a_i)\right] \\
    &\leq \sum_{i=1}^{h-1} \sup_{x,a}\norm{P_i^{k-1}(\cdot|x,a)-P_i^{k}(\cdot|x,a)}_1 \\
\end{align*}
Therefore,
\begin{align*}
    &\sum_{k=1}^{k'}\sum_{h=1}^{H}\left(\underset{(x_h,a_h)\sim(\pi^k,(*,k-1))}{\Eb}-\underset{(x_h,a_h)\sim(\pi^k,(*,k))}{\Eb}\right)[r_h^{k}(x_h,a_h)] \\
    &\leq \sum_{k=1}^{k'}\sum_{h=1}^{H}\sum_{i=1}^{h-1}\sup_{x,a}\norm{P_i^{k-1}(\cdot|x,a)-P_i^{k}(\cdot|x,a)}_1 \\
    &\leq \sum_{k=1}^{k'}\sum_{h=1}^{H}\sum_{i=1}^{H}\sup_{x,a}\norm{P_i^{k-1}(\cdot|x,a)-P_i^{k}(\cdot|x,a)}_1 \\ 
    &\leq \sum_{h=1}^{H} \left(\sum_{k=1}^{k'}\sum_{i=1}^{H}\sup_{x,a}\norm{P_i^{k-1}(\cdot|x,a)-P_i^{k}(\cdot|x,a)}_1\right) \\
    &\leq H\Delta_P(k').
\end{align*}
\end{proof}

\subsection{Proofs of concentration lemmas}\label{app:concentration}
The Freedman's inquaulity controls the sum of martingale difference by the sum of their variance.
\begin{lemma}[Freedman's inequality (\cite{Chi_Jin:bellman_eluder:2020})]
    Let $\{Z_t\}_{t\in[T]}$ be a real-valued martingale difference sequence adapted to filtration $\Fc_t$, and let $\Eb_t[\cdot]=\Eb[\cdot|\Fc_t]$. If $|Z_t|\leq R$ almost surely, then for any $\eta\in (0,R)$, it holds that with probability at least $1-\delta$,
    \begin{align*}
        \sum_{t=1}^T Z_t\leq \Oc\left(\eta\sum_{t=1}^{T}\Eb_{t-1}[Z_t^2]+\frac{\log(\delta^{-1})}{\eta}\right).
    \end{align*}
\end{lemma}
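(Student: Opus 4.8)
The plan is to run the classical exponential-supermartingale (Chernoff) argument for martingales. First I would record the scalar Bennett-type bound: writing $g(u):=(e^u-1-u)/u^2$, which is nondecreasing with $g(u)\le e-2<1$ for $0\le u\le 1$, one has $\Eb[e^{\lambda X}]\le \exp\!\big(g(\lambda R)\,\Eb[X^2]\big)$ for any mean-zero $X$ with $|X|\le R$; this follows from $e^u\le 1+u+g(u)u^2$ together with $1+x\le e^x$. Applying it conditionally on $\Fc_{t-1}$ to $Z_t$ — which satisfies $\Eb_{t-1}[Z_t]=0$ and $|Z_t|\le R$ — and using that $\eta$ lies in a range with $\eta R\le 1$, gives $\Eb_{t-1}\big[e^{\eta Z_t}\big]\le \exp\!\big(\eta^2\,\Eb_{t-1}[Z_t^2]\big)$.

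Second, I would set $S_t:=\sum_{s\le t}Z_s$, $W_t:=\sum_{s\le t}\Eb_{s-1}[Z_s^2]$, and $M_t:=\exp\!\big(\eta S_t-\eta^2 W_t\big)$ with $M_0:=1$. Since $W_t$ is $\Fc_{t-1}$-measurable, the previous display yields $\Eb_{t-1}[M_t]\le M_{t-1}$, so $(M_t)_{t=0}^{T}$ is a nonnegative supermartingale and $\Eb[M_T]\le \Eb[M_0]=1$ (integrability is immediate from $|Z_t|\le R$; if $T$ is a stopping time, optional stopping applies unchanged).

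Third, Markov's inequality gives $\Pb\big(M_T\ge 1/\delta\big)\le \delta\,\Eb[M_T]\le\delta$, so with probability at least $1-\delta$ we have $\eta S_T-\eta^2 W_T\le \log(1/\delta)$, i.e. $\sum_{t=1}^{T}Z_t\le \eta\sum_{t=1}^{T}\Eb_{t-1}[Z_t^2]+\log(\delta^{-1})/\eta$, which is the stated bound (the $\Oc(\cdot)$ absorbing the $e-2$ factor). There is no genuine obstacle here; the only point requiring care is the scalar moment bound and the monotonicity of $g$, i.e. checking $g(\eta R)\le 1$ over the admissible range of $\eta$, which is exactly where the hypothesis on $\eta$ enters.
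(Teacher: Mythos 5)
Your proposal is correct: the exponential-supermartingale (Chernoff) argument with the Bennett-type conditional MGF bound, the process $M_t=\exp(\eta S_t-\eta^2 W_t)$, and Markov's inequality is exactly the standard derivation of this form of Freedman's inequality, and each step you give goes through. Note, however, that the paper itself does not prove this lemma at all — it imports it verbatim from \citet{Chi_Jin:bellman_eluder:2020} and uses it as a black box — so there is no in-paper proof to compare against; your write-up would in fact supply the missing argument. Two small points of care: (i) the admissible range as printed in the paper, $\eta\in(0,R)$, is a typo for the standard $\eta\in(0,1/R)$ (equivalently $\eta R\le 1$), and your proof silently uses the correct condition, since the bound $g(\eta R)\le g(1)=e-2<1$ is exactly what is needed for $\Eb_{t-1}[e^{\eta Z_t}]\le\exp(\eta^2\Eb_{t-1}[Z_t^2])$; (ii) your first display drops a factor of $\lambda^2$ — it should read $\Eb[e^{\lambda X}]\le\exp\bigl(g(\lambda R)\,\lambda^2\,\Eb[X^2]\bigr)$ — but the subsequent application carries the $\eta^2$ correctly, so this is only a transcription slip, not a gap.
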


\subsubsection{Proof of Lemma~\ref{lemma:Qstar}}
\begin{proof}
Define
\begin{align*}
    \#_{k,h}(x_h^t,a_h^t):= r_h^{k}(s_h^t,a_h^t)+\underset{x'\sim P_h^t(\cdot|x_h^t,a_h^t)}{\Eb}\max_{a'\in\mathcal{A}}Q_{h+1;(*,k)}(x',a').
\end{align*}
Fix a tuple $(k,h,g)\in[K]\times[H]\times\Gc$. Let
\begin{align*}
W_t(h,f):&=\left[g_h(x_h^t,a_h^t)-r_h^{k}-\max_{a'\in\mathcal{A}}Q_{h+1;(*,k)}(x_{h+1}^t,a'))\right]^2-\left[\#_{k,h}(x_h^t,a_h^t)-r_h^{k}-\max_{a'\in\mathcal{A}}Q_{h+1;(*,k)}(x_{h+1}^t,a'))\right]^2 \\
&=[g_h(x_h^t,a_h^t)-\#_{k,h}(x_h^t,a_h^t)]\left[g_h(x_h^t,a_h^t)+\#_{k,h}(x_h^t,a_h^t)-2\left(r_h^k+\max_{a'\in\mathcal{A}}Q_{h+1;\stark}(x_{h+1}^t,a')\right)\right]
\end{align*}
and $\Fc_{t,h}$ be the filtration induced by $\{x_1^i,a_1^i,\cdots,x_H^i\}_{i\in[t-1]}\cup\{x_1^t,a_1^t,\cdots,x_h^t,a_h^t\}\cup\{r_h^i\}_{h\in[H]}^{i\in[t-1]}$. We have
\begin{align*}
&\Eb[W_t(h,g)|\Fc_{t,h}]=\left[\left({g}_h-\#_{k,h}\right)(x_h^t,a_h^t)\right]^2, \\
&\Var[W_t(h,g)|\Fc_{t,h}]\leq 36H^2\Eb[W_t(h,g)|\Fc_{t,h}].
\end{align*}
By Freedman's inequality, with probability at least $1-\delta$,
\begin{align*}
    &\left|\sum_{t=1\lor (k-w)}^{k}W_t(h,g)-\sum_{t=1\lor (k-w)}^{k}\left[\left({g}_h(x_h^t,a_h^t)-\#_{k,h}\right)(x_h^t,a_h^t)\right]^2\right| \\
    &\qquad \leq \Oc\left(H\sqrt{\log(1/\delta)\sum_{t=1\lor (k-w)}^{k}\left[\left({g}_h(x_h^t,a_h^t)-\#_{k,h}\right)(x_h^t,a_h^t)\right]^2  }+\log(1/\delta)\right).
\end{align*}
Taking union bound over $[K]\times[H]\times\Gc$,
\begin{align*}
    &\left|\sum_{t=1\lor (k-w)}^{k}W_t(h,g)-\sum_{t=1\lor (k-w)}^{k}\left[\left({g}_h(x_h^t,a_h^t)-\#_{k,h}\right)(x_h^t,a_h^t)\right]^2\right| \\
    &\qquad \leq \Oc\left(H\sqrt{\iota\sum_{t=1\lor (k-w)}^{k}\left[\left({g}_h(x_h^t,a_h^t)-\#_{k,h}\right)(x_h^t,a_h^t)\right]^2  }+\iota\right),
\end{align*}
where $\iota=\log(HK|\Gc|/\delta)$. We have
\begin{align*}
&-\sum_{t=1\lor (k-w)}^{k}W_t(h,g)  \\
&\leq -\sum_{t=1\lor (k-w)}^{k}\left[\left({g}_h(x_h^t,a_h^t)-\#_{k,h}\right)(x_h^t,a_h^t)\right]^2  +\Oc\left(H\sqrt{\iota\sum_{t=1\lor (k-w)}^{k}\left[\left({g}_h(x_h^t,a_h^t)-\#_{k,h}\right)(x_h^t,a_h^t)\right]^2  }+\iota\right) \\
&\leq \Oc(H^2\iota). 
\end{align*}
I.e., 
\begin{align*}
&\sum_{t=1\lor (k-w)}^{k}\left[\#_{k,h}(x_h^t,a_h^t)-r_h^k-\max_{a'\in\mathcal{A}}Q_{h+1;\stark}(x_{h+1}^t,a')\right]^2 \\
&\quad \qquad \leq \sum_{t=1\lor (k-w)}^{k}\left[{g}_h(x_h^t,a_h^t)-r_h^k-\max_{a'\in\mathcal{A}}Q_{h+1;\stark}(x_{h+1}^t,a')\right]^2 + \Oc(H^2\iota).
\end{align*}
Therefore,
\begin{align*}
    &\sum_{t=1\lor (k-w)}^{k}\left[Q_{h;(*,k)}(x_h^t,a_h^t)-r_h^k-\max_{a'\in\mathcal{A}}Q_{h+1;\stark}(x_{h+1}^t,a')\right]^2 \\
    &\leq \sum_{t=1\lor (k-w)}^{k}\left[\#_{k,h}(x_h^t,a_h^t)-r_h^k-\max_{a'\in\mathcal{A}}Q_{h+1;\stark}(x_{h+1}^t,a')\right]^2 +2H^2\Delta_P^w(k,h) \\
    &\leq \sum_{t=1\lor (k-w)}^{k}\left[{g}_h(x_h^t,a_h^t)-r_h^k-\max_{a'\in\mathcal{A}}Q_{h+1;\stark}(x_{h+1}^t,a')\right]^2 + 2H^2\Delta_P^w(k,h)+\Oc(H^2\iota),
\end{align*}
where the first inequality follows from Lemma~\ref{lemma:square-dif} and Eqn.~(\ref{eqn:pathlength-3}). By the definition of $\Bc^{k}$ and $\beta=cH^2\log\frac{KH|\Gc|}{\delta}$ with some large absolute constant $c$, we conclude that with probability at least $1-\delta$, $Q_{(*,k)}\in\Bc^{k}$ for all $k\in[K]$.
\end{proof}

\subsubsection{Proof of Lemma~\ref{lemma:concentration}}
\begin{proof}
Define
\begin{align*}
    \#_{k,h}^f(x_h^t,a_h^t)= r_h^{k}(s_h^t,a_h^t)+\underset{x'\sim P_h^t(x_h^t,a_h^t)}{\Eb}\max_{a'\in\mathcal{A}}f_{h+1}(s',a').
\end{align*}
Fix a tuple $(k,h,f)\in[K]\times[H]\times\Gc$. Let
\begin{align*}
W_t(h,f):&=\left[f_h(x_h^t,a_h^t)-r_h^{k}-\max_{a'\in\mathcal{A}}f_{h+1}(x_{h+1}^t,a')\right]^2-\left[\#_{k,h}^f(x_h^t,a_h^t)-r_h^{k}-\max_{a'\in\mathcal{A}}f_{h+1}(x_{h+1}^t,a')\right]^2 \\
&=[f_h(x_h^t,a_h^t)-\#_{k,h}^f(x_h^t,a_h^t)]\left[f_h(x_h^t,a_h^t)+\#_{k,h}^f(x_h^t,a_h^t)-2\left(r_h^k+\max_{a'\in\mathcal{A}}f_{h+1}(x_{h+1}^t,a')\right)\right]
\end{align*}
and $\Fc_{t,h}$ be the filtration induced by $\{x_1^i,a_1^i,\cdots,x_H^i\}_{i\in[t-1]}\cup\{x_1^t,a_1^t,\cdots,x_h^t,a_h^t\}\cup\{r_h^i\}_{h\in[H]}^{i\in[t-1]}$. We have
\begin{align*}
&\Eb[W_t(h,f)|\Fc_{t,h}]=\left[\left({f}_h-\#_{k,h}^f\right)(x_h^t,a_h^t)\right]^2, \\
&\Var[W_t(h,f)|\Fc_{t,h}]\leq 36H^2\Eb[W_t(h,g)|\Fc_{t,h}].
\end{align*}
By Freedman's inequality, we have
\begin{align*}
    &\left|\sum_{t=1\lor (k-w)}^{k}W_t(h,f)-\sum_{t=1\lor (k-w)}^{k}\left[({f}_h-\#_{k,h}^f)(x_h^t,a_h^t)\right]^2\right|  \\
    &\qquad\leq \Oc\left(H\sqrt{\log(1/\delta)\sum_{t=1\lor (k-w)}^{k}\left[({f}_h-\#_{k,h}^f)(x_h^t,a_h^t)\right]^2  }+\log(1/\delta)\right).
\end{align*}
Taking union bound over $[K]\times[H]\times\Gc$, we have
\begin{align*}
    &\left|\sum_{t=1\lor (k-w)}^{k}W_t(h,g)-\sum_{t=1}^{k}\left[({f}_h-\#_{k,h}^f)(x_h^t,a_h^t)\right]^2\right| \leq \Oc\left(H\sqrt{\iota\sum_{t=1\lor (k-w)}^{k}\left[({f}_h-\#_{k,h}^f)(x_h^t,a_h^t)\right]^2  }+\iota\right),
\end{align*}
where $\iota=\log(KH|\Gc|/\delta)$.

Note that
\begin{align*}
    &\sum_{t=1\lor (k-w-1)}^{k-1}W_t(h,f^k) \\
    &=\sum_{t=1\lor (k-w-1)}^{k-1}\left[f_h^k(x_h^t,a_h^t)-r_h^{k-1}-\max_{a'\in\mathcal{A}}f_{h+1}^k(x_{h+1}^t,a')\right]^2 \\
    &\qquad \qquad \qquad \qquad -\sum_{t=1\lor (k-w-1)}^{k-1}\left[\#_{k-1,h}^{f^k}(x_h^t,a_h^t)-r_h^{k-1}-\max_{a'\in\mathcal{A}}f_{h+1}^k(x_{h+1}^t,a')\right]^2 \\
    &\leq\sum_{t=1\lor (k-w-1)}^{k-1}\left[f_h^k(x_h^t,a_h^t)-r_h^{k-1}-\max_{a'\in\mathcal{A}}f_{h+1}^k(x_{h+1}^t,a')\right]^2 \\
    &\qquad \qquad \qquad \qquad-\sum_{t=1\lor (k-w-1)}^{k-1}\left[\Tc_{h}^{k-1}f_{h+1}^k(x_h^t,a_h^t)-r_h^{k-1}-\max_{a'\in\mathcal{A}}f_{h+1}^k(x_{h+1}^t,a')\right]^2  +2H^2\Delta_P^w(k-1,h) \\
    &\leq \sum_{t=1\lor (k-w-1)}^{k-1}\left[f_h^k(x_h^t,a_h^t)-r_h^{k-1}-\max_{a'\in\mathcal{A}}f_{h+1}^k(x_{h+1}^t,a')\right]^2 \\
    &\qquad \qquad \qquad \qquad-\inf_{g\in\Gc}\sum_{t=1\lor (k-w-1)}^{k-1}\left[g_h(x_h^t,a_h^t)-r_h^{k-1}-\max_{a'\in\mathcal{A}}f_{h+1}^k(x_{h+1}^t,a')\right]^2 +2H^2\Delta_P^w(k-1,h)  \\
    &\leq \beta+4H^2\Delta_P^w(k-1,h),
\end{align*}
where the first inequality follows from Lemma~\ref{lemma:square-dif} and Eqn.~(\ref{eqn:pathlength-3}), the second inequality follows from Assumption~\ref{assump:completeness-2}, and the last inequality follows from the definition of $\Bc^{k-1}$.

Therefore, 
\begin{align*}
   \sum_{t=1\lor (k-w-1)}^{k-1}\left[(f_h^k-\#_{k-1,h}^{f^k})(x_h^t,a_h^t)\right]^2\leq \beta+4H^2\Delta_P^w(k-1,h)+\Oc\left(H^2\iota\right).
\end{align*}
Finally, we use Lemma~\ref{lemma:square-dif} once more and obtain
\begin{align*}
    &\sum_{t=1\lor (k-w-1)}^{k-1}\left[(f_{h}^k-\Tc_h^{k-1} f_{h+1}^k)(x_h^t,a_h^t)\right]^2 \\
    &\leq \sum_{t=1\lor (k-w-1)}^{k-1}\left[(f_{h}^k-\#_{k-1,h}^{f^k})(x_h^t,a_h^t)\right]^2+2H^2\Delta_P^w(k-1,h) \\
    &\leq 6H^2\Delta_P^w(k-1,h)+\Oc(\beta).
\end{align*}
\end{proof}

\subsection{Proof of Lemma~\ref{lemma:GDE}}\label{app:GDE}
The proof in the subsection essentially follows the same arguments as in \citep{Chi_Jin:bellman_eluder:2020}, and we adapt it to the sliding window scenario.

\begin{lemma}\label{lemma:GDE-1}
Given a function class $\Phi$ defined on $\mathcal{X}\times \mathcal{Y}$, and a family of probability measures $\Pi$ over $\mathcal{X}$. Suppose sequence $\{\phi_k\}_{k\in[K]}\subseteq \Phi$ and $\{\mu_k\}_{k\in[K]}\subseteq \Pi$ satisfy that for all $k\in [K]$, $\sum_{t=1\lor (k-w-1)}^{k-1}(\Eb_{x\sim\mu_t}[\phi_k(x)])^2\leq \beta$. Then for all $k\in[K]$,
\begin{align*}
    \sum_{t=1\lor (k-w)}^{k}\mathbf{1}\{|\Eb_{x\sim\mu_t}[\phi_t(x)]|>\epsilon\}\leq (\frac{\beta}{\epsilon^2}+1)\GDE(\Phi,\Pi,\epsilon)
\end{align*}
\end{lemma}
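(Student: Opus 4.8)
The plan is to run the classical potential-function / bucketing argument underlying Eluder-type bounds (Russo--Van Roy; Lemma~41 of \citet{Chi_Jin:bellman_eluder:2020}), carefully tracking the sliding-window truncation. Fix $k\in[K]$ and let $t_1<t_2<\cdots<t_N$ enumerate the indices $t\in[1\lor(k-w),\,k]$ with $|\Eb_{x\sim\mu_t}[\phi_t(x)]|>\epsilon$, so the left-hand side of the claimed inequality is exactly $N$; the target is $N\le(\beta/\epsilon^2+1)\GDE(\Phi,\Pi,\epsilon)$. The first observation is that the cumulative-square hypothesis descends to this subsequence: since $t_i\le k$ we have $1\lor(k-w)\ge 1\lor(t_i-w-1)$, so every $t_j$ with $j<i$ lies in $[1\lor(t_i-w-1),\,t_i-1]$, whence by nonnegativity $\sum_{j<i}\big(\Eb_{\mu_{t_j}}[\phi_{t_i}]\big)^2\le\sum_{t=1\lor(t_i-w-1)}^{t_i-1}\big(\Eb_{\mu_t}[\phi_{t_i}]\big)^2\le\beta$.

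The key structural claim I would establish next is: for each $i$, the measure $\mu_{t_i}$ is $\epsilon$-dependent---i.e.\ \emph{not} $\epsilon$-independent in the sense of Definition~\ref{defn:indpt}---on at most $\lfloor\beta/\epsilon^2\rfloor$ pairwise-disjoint sub-collections of $\{\mu_{t_1},\dots,\mu_{t_{i-1}}\}$. Indeed, if $S_1,\dots,S_m$ were disjoint predecessor sub-collections on each of which $\mu_{t_i}$ is $\epsilon$-dependent, then instantiating the definition with $g=\phi_{t_i}$---for which $|\Eb_{\mu_{t_i}}[\phi_{t_i}]|>\epsilon$---forces $\sum_{s\in S_\ell}(\Eb_{\mu_s}[\phi_{t_i}])^2>\epsilon^2$ for every $\ell$; summing over $\ell$ and using disjointness together with the restricted hypothesis from the previous paragraph gives $m\epsilon^2<\beta$, so $m\le\lfloor\beta/\epsilon^2\rfloor$.

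Finally I would execute the greedy bucketing. Process $\mu_{t_1},\dots,\mu_{t_N}$ in order while maintaining $M:=\lfloor\beta/\epsilon^2\rfloor+1$ buckets, each a subsequence; when $\mu_{t_i}$ arrives, assign it to any bucket whose current contents $\mu_{t_i}$ is $\epsilon$-independent of. Such a bucket always exists, because at that moment the $M$ buckets are pairwise-disjoint sub-collections of $\{\mu_{t_1},\dots,\mu_{t_{i-1}}\}$ and, by the key claim, $\mu_{t_i}$ is $\epsilon$-dependent on at most $\lfloor\beta/\epsilon^2\rfloor<M$ of them (the empty-bucket case is covered since $\mu_{t_i}$ is $\epsilon$-independent of $\varnothing$, again via $g=\phi_{t_i}$). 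By construction every bucket, read in insertion order, has each element $\epsilon$-independent of those preceding it, so it realizes an $\epsilon'$-independent sequence with $\epsilon'=\epsilon$ and hence has length at most $\GDE(\Phi,\Pi,\epsilon)$. Therefore $N\le M\cdot\GDE(\Phi,\Pi,\epsilon)\le(\beta/\epsilon^2+1)\GDE(\Phi,\Pi,\epsilon)$.

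I expect the main obstacle to be the interface between the key structural claim and the greedy step under the sliding window: one must ensure buckets only ever hold genuine predecessors within the relevant window so that the restricted cumulative-square bound is legitimately available when bounding $m$, and that the count $M=\lfloor\beta/\epsilon^2\rfloor+1$ is strictly larger than $\lfloor\beta/\epsilon^2\rfloor$ so a free bucket is guaranteed at every step. The index arithmetic around $1\lor(k-w)$ versus $1\lor(k-w-1)$ is routine once this is set up.
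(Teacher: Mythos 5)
Your proposal is correct and follows essentially the same route as the paper's proof: you restrict the cumulative-square hypothesis to the within-window subsequence of indices with $|\Eb_{x\sim\mu_t}[\phi_t(x)]|>\epsilon$, bound the number of disjoint predecessor sub-collections on which each such $\mu_{t_i}$ can be $\epsilon$-dependent by instantiating Definition~\ref{defn:indpt} with $g=\phi_{t_i}$, and then pigeonhole via bucketing against $\GDE(\Phi,\Pi,\epsilon)$. The only difference is presentational --- you run the greedy bucket-filling directly with $\lfloor\beta/\epsilon^2\rfloor+1$ buckets and count, whereas the paper phrases the same combinatorics as two claims (an upper bound and a lower bound on the number of disjoint dependent subsequences) combined into an inequality --- and your handling of the $1\lor(k-w)$ versus $1\lor(t_i-w-1)$ index arithmetic matches the paper's.
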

\begin{proof}
First, suppose for all $k\in[\kappa]$, $\sum_{t=1}^{k-1}(\Eb_{x\sim\mu_t}[\phi_k(x)])^2\leq \beta$, we show that if for some $k\in[\kappa]$ we have $|\Eb_{x\sim\mu_k}[\phi_k(x)]|>\epsilon$, then $\mu_k$ is $\epsilon$-dependent on at most $\lceil\beta/\epsilon^2\rceil-1$ disjoint subsequences in $\{\mu_1,\ldots,\mu_{k-1}\}$. By definition of GDE, if $|\Eb_{x\sim\mu_k}[\phi_k(x)]|>\epsilon$ and $\mu_k$ is $\epsilon$-dependent on a subsequence $\{\nu_1,\ldots,\nu_{\ell}\}$ of $\{\mu_{1},\ldots, \mu_{k-1}\}$, then we should have $\sum_{t=1}^{\ell}(\Eb_{x\sim\nu_t}[\phi_k(x)])^2> \epsilon^2$. It implies that if $\mu_k$ is $\epsilon$-dependent on $L$ disjoint subsequences in $\{\mu_{1},\ldots, \mu_{k-1}\}$, we have
\begin{align*}
    \beta\geq \sum_{t=1\lor (k-w-1)}^{k-1}(\Eb_{x\sim\mu_t}[\phi_k(x)])^2> L\epsilon^2,
\end{align*}
which implies $L\leq\lceil\beta/\epsilon^2\rceil-1$.

Second, we show that for any sequence $\{\nu_1,\ldots,\nu_\kappa\}\subseteq \Pi$, there exists $j\in[\kappa]$ such that $\nu_j$ is $\epsilon$-dependent on at least $L=\lceil (\kappa-1)/\GDE(\Phi,\Pi,\epsilon)\rceil$ disjoint subsequences in $\{\nu_{1},\ldots,\nu_{j-1}\}$. We prove the argument by the following artificial procedure: we start with singleton sequences $B_1=\{\nu_1\}$, $B_2=\{\nu_2\}$, $\ldots$, $B_L=\{\nu_L\}$ and $j=L+1$. For each $j$, if $\nu_j$ is $\epsilon$-dependent on $B_1,B_2,\ldots,B_L$, we already achieved the goal and we stop; otherwise, we pick an $i\in[L]$ such that $\nu_j$ is $\epsilon$-independent of $B_i$ and update $B_i\cup\{\nu_j\}$. Then we increment $j$ by 1 continue this process. By the definition of GDE dimension, the size of each $B_1,B_2,\ldots,B_L$ cannot get bigger than $\GDE(\Phi,\Pi,\epsilon)$ at any point in this process. Therefore, the process stops before or on $j=L\GDE(\Phi,\Pi,\epsilon)+1\leq \kappa$.

Now fix $k\in[K]$ and let $\{\nu_1,\ldots,\nu_\kappa\}$ be the subsequence of $\{\mu_{1\lor (k-w)},\ldots,\mu_k\}$, consisting of elements for which $|\Eb_{x\sim\mu_t}[|\phi_t(x)]|>\epsilon$ and the corresponding bijective function is $\theta:[\kappa]\mapsto [1\lor (k-w):k]$. Note that for all $\ell\in[\kappa]$, we have $|\Eb_{x\sim\nu_\ell}[|\phi_{\theta(\ell)}(x)]|>\epsilon$ and 
\begin{align*}
    \sum_{t=1}^{\ell-1}(\Eb_{x\sim \nu_t}[\phi_{\theta(\ell)}(x)])\leq \sum_{t=1\lor \theta(\ell)-w-1}^{\theta(\ell)-1}(\Eb_{x\sim \mu_t}[\phi_{\theta(\ell)}(x)])\leq \beta.
\end{align*}
Using the first claim, we know that each $\nu_j$ is $\epsilon$-dependent on at most $L<\lceil\beta/\epsilon^2\rceil-1$ disjoint subsequences of $\{\nu_1,\nu_2,\ldots,\nu_{j-1}\}$. Using the second claim, we know that there exists $j\in[\kappa]$ such that $\nu_j$ is $\epsilon$-dependent on at least $\lceil (\kappa-1)/\GDE(\Phi,\Pi,\epsilon)\rceil$ disjoint subsequences of $\{\nu_1,\nu_2,\ldots,\nu_{j-1}\}$. Therefore, we have $\lceil (\kappa-1)/\GDE(\Phi,\Pi,\epsilon)\rceil\leq \lceil\beta/\epsilon^2\rceil-1$, which implies
\begin{align*}
    \kappa<(\frac{\beta}{\epsilon^2}+1)\GDE(\Phi,\Pi,\epsilon).
\end{align*}
\end{proof}

\noindent \emph{Proof of Lemma~\ref{lemma:GDE}.} Fix $k\in[K]$ and let $d=\GDE(\Phi,\Pi,\epsilon)$. Sort the sequence
\begin{align*}
    \left\{|\Eb_{x\sim\mu_{1\lor (k-w)}}[\phi_{1\lor (k-w)}(x)]|, \ldots,|\Eb_{x\sim\mu_k}[\phi_k(x)]|\right\}
\end{align*}
in decreasing order and denote it by $\{e_{1},e_2,\ldots,e_{k\land (w+1)}\}$ ($e_{1}\geq e_2\geq \cdots\geq e_{k\land (w+1)}$). Note that
\begin{align*}
    \sum_{t=1\lor (k-w)}^{k}|\Eb_{x\sim\mu_t}[\phi_t(x,y)]|&=\sum_{t=1}^{k\land (w+1)}e_t=\sum_{t=1}^{k\land (w+1)}e_t\mathbf{1}\{e_t\leq\theta\}+\sum_{t=1}^{k\land (w+1)}e_t\mathbf{1}\{e_t>\theta\} \\
    &\leq [k\land (w+1)]\theta+\sum_{t=1}^{k\land (w+1)}e_t\mathbf{1}\{e_t>\theta\} 
\end{align*}
For $t\in[k]$, we show that if $e_t>\theta$, then we have $e_t\leq\min\{\sqrt{\frac{d\beta}{t-d}},C\}$. Assume $t\in[k]$ satisfies $e_t>\theta$. Then there exists an $\alpha$ such that $e_t>\alpha\geq \theta$. By Lemma~\ref{lemma:GDE-1}, we have
\begin{align*}
    t\leq \sum_{i=1}^{k\land (w+1)}\mathbf{1}\{e_i>\alpha\}\leq (\frac{\beta}{\alpha^2}+1)\GDE(\Phi,\Pi,\alpha)\leq (\frac{\beta}{\alpha^2}+1)\GDE(\Phi,\Pi,\omega),
\end{align*}
which implies $\alpha\leq \sqrt{\frac{d\beta}{t-d}}$. Letting $\alpha\rightarrow e_t$, we have $e_t\leq \sqrt{\frac{d\beta}{t-d}}$. Besides, recall $e_t\leq C$, so we have $e_t\leq \min\{\sqrt{\frac{d\beta}{t-d}},C\}$.

Finally, we have
\begin{align*}
    \sum_{t=1}^{k\land (w+1)}e_t\mathbf{1}\{e_t>\omega\}&\leq \min\{d,k,w+1\}C+\sum_{t=d+1}^{k\land (w+1)}\sqrt{\frac{d\beta}{t-d}} \\
    &\leq \min\{d,k,w+1\}C+\sqrt{d\beta}\int_{0}^{k\land (w+1)}\frac{1}{\sqrt{t}}\dif t \\
    &=\min\{d,k,w+1\}C+2\sqrt{d\beta [k\land (w+1)]}.
\end{align*}

\subsection{Auxiliary Lemmas}\label{app:aux-lemmas}
\begin{lemma}\label{lemma:square-dif}
Suppose $P$ and $Q$ are two probability distributions of a random variable $x$, then 
\begin{align*}
    \left|\left(\underset{x\sim P}{\Eb}f(x)-C\right)^2-\left(\underset{x\sim Q}{\Eb}f(x)-C\right)^2\right|\leq (2f_m+2|C|)f_m\cdot\TV(P,Q), 
\end{align*}
where $f_m=\sup_x |f(x)|$.
\end{lemma}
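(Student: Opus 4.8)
The plan is to reduce the estimate to the elementary difference-of-squares factorization together with a single bound on how much an expectation can change when the underlying law is perturbed. Write $a := \Eb_{x\sim P}f(x)-C$ and $b := \Eb_{x\sim Q}f(x)-C$, so the left-hand side equals $|a^2-b^2| = |a-b|\cdot|a+b|$. Since $|f(x)|\le f_m$ everywhere, both $|\Eb_{x\sim P}f(x)|$ and $|\Eb_{x\sim Q}f(x)|$ are at most $f_m$, hence $|a|,|b|\le f_m+|C|$, and by the triangle inequality $|a+b|\le 2f_m+2|C|$. This handles the ``$|a+b|$'' factor with exactly the constant appearing in the statement.

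For the ``$|a-b|$'' factor, note $a-b = \Eb_{x\sim P}f(x)-\Eb_{x\sim Q}f(x) = \int f\,\dif(P-Q)$, the integral of $f$ against the signed measure $P-Q$; bounding $|f|$ by $f_m$ and using the definition of total variation distance gives $|a-b|\le f_m\cdot\TV(P,Q)$. (Equivalently, this is the standard fact that $|\Eb_P f-\Eb_Q f|$ is controlled by $\|f\|_\infty$ times the $L^1$ distance between $P$ and $Q$, up to the normalization relating that distance to $\TV$.) Multiplying the two displayed bounds yields $|a^2-b^2|\le (2f_m+2|C|)f_m\cdot\TV(P,Q)$, which is the claim; taking absolute values throughout is harmless since every intermediate quantity is already nonnegative.

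I do not expect any genuine obstacle: once the factorization $a^2-b^2 = (a-b)(a+b)$ is written down, the proof is a two-line calculation. The only point deserving attention is the normalization constant in the definition of $\TV$: one should make sure the bound $|\Eb_P f-\Eb_Q f|\le f_m\,\TV(P,Q)$ is invoked with the same convention for $\TV$ that the rest of the paper uses (in particular in Lemma~\ref{lemma:change-transition}, which is the identical statement), so that the factor $(2f_m+2|C|)$ is not inadvertently off by a factor of two; if necessary one uses that, for $f$ taking values in $[0,f_m]$, one has $|\Eb_P f-\Eb_Q f|\le f_m\,\TV(P,Q)$ even with the standard half-$L^1$ normalization. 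A single proof then serves both Lemma~\ref{lemma:change-transition} and Lemma~\ref{lemma:square-dif}.
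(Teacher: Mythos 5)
Your proof is correct and is essentially the paper's own argument: the paper likewise factors the difference of squares, bounds the sum factor by $2f_m+2|C|$, and bounds the difference factor by $\left|\int_x f(x)(\dif P-\dif Q)\right|\leq f_m\cdot\TV(P,Q)$. Your remark on the $\TV$ normalization is a fair caveat but does not change the argument.
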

\begin{proof}
Note that
\begin{align*}
    &\left|\left(\underset{x\sim P}{\Eb}f(x)-C\right)^2-\left(\underset{x\sim Q}{\Eb}f(x)-C\right)^2\right| \\
    &= \left|\left(\underset{x\sim P}{\Eb}f(x)+\underset{x\sim Q}{\Eb}f(x)-2C\right)\left(\underset{x\sim P}{\Eb}f(x)-\underset{x\sim Q}{\Eb}f(x)\right)\right| \\
    &\leq (2f_m+2|C|)\left|\int_x f(x)(\dif P-\dif Q)\right| \\
    &\leq (2f_m+2|C|)f_m\cdot\TV(P,Q).
\end{align*}
\end{proof}

\begin{lemma}[Generalized policy loss decomposition]\label{lemma:dr-decom-2}
For any $t,k$, we have
\begin{align*}
    f_{1}^{t}(x_1,\pi_1^t(x_1))-V_{1;\stark}^{\pi^t}(x_1)&=\sum_{h=1}^{H}\underset{(x_h,a_h)\sim(\pi^{t},(*,k))}{\Eb}\left[(f_{h}^{t}-r_{h;\stark}-\Pb_h^k f_{h+1}^{t})(x_h,a_h)\right],
\end{align*}
where $\pi_t:=\pi_{f^t}$, the greedy policy under function approximation $f^t$.
\end{lemma}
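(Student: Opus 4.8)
The plan is a routine performance-difference (telescoping) argument, in which the only non-trivial ingredient is that $\pi^t$ is the \emph{greedy} policy $\pi_{f^t}$. First I would introduce, for each $h\in[H+1]$ and each $x\in\Sc$, the per-step gap
\[
\delta_h(x):=f_h^t(x,\pi_h^t(x))-V_{h;\stark}^{\pi^t}(x),\qquad \pi_h^t:=\pi_{f^t,h}.
\]
It satisfies $\delta_{H+1}\equiv 0$ since $f_{H+1}^t=0=V_{H+1;\stark}$, while $\delta_1(x_1)$ is exactly the left-hand side of the claimed identity (recall $x_1$ is the fixed initial state).

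Next I would derive the one-step recursion for $\delta_h$. Since $\pi^t=\pi_{f^t}$ is deterministic, $\pi_h^t(\cdot|x)$ is a point mass at $\pi_h^t(x)$, so the Bellman equation for $V^{\pi^t}$ gives $V_{h;\stark}^{\pi^t}(x)=r_h^k(x,\pi_h^t(x))+(\Pb_h^k V_{h+1;\stark}^{\pi^t})(x,\pi_h^t(x))$; moreover, by greediness $\max_{a'}f_{h+1}^t(x',a')=f_{h+1}^t(x',\pi_{h+1}^t(x'))$, so the approximation-side operator obeys $(\Pb_h^k f_{h+1}^t)(x,a)=\Eb_{x'\sim P_h^k(\cdot|x,a)}[f_{h+1}^t(x',\pi_{h+1}^t(x'))]$. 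Adding and subtracting $(\Pb_h^k f_{h+1}^t)(x,\pi_h^t(x))$ inside $\delta_h(x)$ then yields
\[
\delta_h(x)=(f_h^t-r_h^k-\Pb_h^k f_{h+1}^t)(x,\pi_h^t(x))+\Eb_{x'\sim P_h^k(\cdot\,|\,x,\pi_h^t(x))}[\delta_{h+1}(x')].
\]

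Finally I would take expectations under the trajectory law obtained by running $\pi^t$ in the MDP with transition kernels $\{P_h^k\}_h$, namely the distribution denoted $(\pi^t,(*,k))$, so that, using $a_h=\pi_h^t(x_h)$,
\[
\Eb[\delta_h(x_h)]=\underset{(x_h,a_h)\sim(\pi^t,(*,k))}{\Eb}[(f_h^t-r_h^k-\Pb_h^k f_{h+1}^t)(x_h,a_h)]+\Eb[\delta_{h+1}(x_{h+1})].
\]
Unrolling this from $h=1$ down to the terminal identity $\delta_{H+1}\equiv 0$ telescopes to the claimed sum, recalling $r_{h;\stark}=r_h^k$ (and $\Tc_h^k f_{h+1}^t=r_h^k+\Pb_h^k f_{h+1}^t$, so the summands are exactly the Bellman residuals entering the optimism bound). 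There is essentially no obstacle here; the one point requiring care is aligning the $\max$ inside $\Pb_h^k$ applied to $f_{h+1}^t$ with the greedy action of $\pi^t$, which is precisely what makes the added/subtracted term collapse cleanly into $\delta_{h+1}$.
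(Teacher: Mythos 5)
Your proof is correct and is essentially the paper's own argument: both hinge on the greediness of $\pi^t=\pi_{f^t}$ to identify $\max_{a'}f_{h+1}^t(x_{h+1},a')$ (hidden inside $\Pb_h^k f_{h+1}^t$) with $f_{h+1}^t(x_{h+1},a_{h+1})$ along the trajectory generated by $(\pi^t,P^k)$, and then telescope using $f_{H+1}^t=0$ together with $\Eb\bigl[\sum_h r_h^k(x_h,a_h)\bigr]=V_{1;(*,k)}^{\pi^t}(x_1)$. The only difference is organizational: the paper telescopes the summed trajectory expectations directly from the right-hand side, while you unroll the equivalent pointwise recursion for $\delta_h$ backward from $h=H+1$; the mathematical content is identical.
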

\begin{proof}
Note that
\begin{align*}
    &\sum_{h=1}^{H}\underset{(x_h,a_h)\sim(\pi^{t},(*,k))}{\Eb}\left[(f_{h}^{t}-r_{h;\stark}-\Pb_h^k f_{h+1}^{t})(x_h,a_h)\right] \\
    &=\sum_{h=1}^{H}\underset{(x_h,a_h,x_{h+1})\sim(\pi^{t},(*,k))}{\Eb}\left[f_{h}^{t}(x_h,a_h)-r_{h;\stark}(x_h,a_h)-\max_{a\in\mathcal{A}}f_{h+1}^{t}(x_{h+1},a)\right] \\
    &=\sum_{h=1}^{H}\underset{(x_h,a_h)\sim(\pi^{t},(*,k))}{\Eb}\left[f_{h}^{t}(x_h,a_h)-r_{h;\stark}(x_h,a_h)-\underset{(x_{h+1},a_{h+1})\sim(\pi^{t},(*,k))}{\Eb}[f_{h+1}^{t}(x_{h+1},a_{h+1})]\right] \\
    &=\underset{(x_{1:H},a_{1:H})\sim(\pi^{t},(*,k))}{\Eb}\left[\sum_{h=1}^{H}\left(f_{h}^{t}(x_h,a_h)-f_{h+1}^{t}(x_{h+1},a_{h+1})\right)\right] -\underset{(x_{1:H},a_{1:H})\sim(\pi^{t},(*,k))}{\Eb}\left[\sum_{h=1}^{H}r_{h;\stark}(x_h,a_h)\right] \\
    &=f_{1}^t(x_1,\pi_1^t(x_1))-V_{1;\stark}^{\pi^t}(x_1),
\end{align*}
where the second equality follows from $\pi_t=\pi_{f^t}$.
\end{proof}

\section{Bandit Feedback}\label{app:bandit}
We extend our algorithm to bandit feedback scenario, and the pseudocode is presented in Algorithm~\ref{Alg:2}. In bandit feedback scenario, the reward function $r_h^k(\cdot,\cdot)$ is no long available, and the agent can only get access to the reward obtained from the trajectory. Therefore, the non-stationarity of rewards plays an important role in the construction of the sliding window Bellman error and the confidence set. Specifically, we replace the sliding window squared Bellman error (\ref{eqn:loss}) with 
\begin{align}
&\Lc_{\Dc_h}(\xi_h,\zeta_{h+1})= \sum_{t=1\lor (k-w)}^{k}\left(\xi_h(x_h^{t},a_h^{t})-r_h^t -\max_{a'\in\mathcal{A}}\zeta_{h+1}(x_{h+1}^{t},a')\right)^2, \nonumber 
\end{align}
where $r_h^t$ is the reward obtained at step $h$ in episode $t$. Moreover, the local regression constraint is
\begin{align*}
&\Lc_{\Dc_h}(f_h,f_{h+1})\leq \inf_{g\in\Gc_h} \Lc_{\Dc_h}(g,f_{h+1})+\beta    +2H^2\Delta_P^w(k,h)+2H\Delta_R^w(k,h),
\end{align*}
where $\beta$ is a confidence parameter, $\Delta_P^w$ is the local variation budget defined in (\ref{eqn:pathlength-3}) and $\Delta_R^w$ is defined as
\begin{align*}
 \Delta_{P}^{w}(k,h) 
    =\sum_{t=1\lor (k-w)}^{k}\sup_{x\in\mathcal{S},a\in\mathcal{A}}|(r_h^{k}-r_h^{t})(x,a)| .
\end{align*}
\subsection{Algorithm and Theorem}
\begin{algorithm}
\begin{algorithmic}[1]
\caption{\alg (bandit feedback)}\label{Alg:2}
\STATE {{\bf Input:} $\Dc_1,\cdots,\Dc_H\leftarrow \varnothing$, $\Bc^0\leftarrow \Fc$.} 
\FOR{{\bf episode} $k$ from 1 to $K$}
\STATE{{\bf Choose} $\pi^k=\pi_{f^k}$,\newline where $f^k=\arg\max_{f\in\Bc^{k-1}}f_1(x_1,\pi_f(x_1))$.} 
\STATE{{\bf Collect} a trajectory $(x_1^{k},a_1^{k},\cdots,x_H^{k},a_H^{k},x_{H+1}^{k})$ by following $\pi^k$ and reward function $\{r_h^k\}_{h\in[H]}$.} 
\STATE{{\bf Augment} $\Dc_h=\Dc_h\cup\{(x_h^{k},a_h^{k},x_{h+1}^{k})\}$, $\forall h\in[H]$.}
\STATE{{\bf} Update $\Bc^k =\{f\in\mathcal{F}:\Lc_{\Dc_h}(f_h,f_{h+1})\leq \inf_{g\in\Gc_h} \Lc_{\Dc_h}(g,f_{h+1})+\beta+2H^2\Delta_P^w(k,h)+2H\Delta_R^w(k,h)$, $\forall h\in[H]\}$,
\newline
where $\Lc_{\Dc_h}(\xi_h,\zeta_{h+1})= \sum_{t=1\lor (k-w)}^{k}\left(\xi_h(x_h^{t},a_h^{t})-r_h^t -\max_{a'\in\mathcal{A}}\zeta_{h+1}(x_{h+1}^{t},a')\right)^2$ }
\ENDFOR
\end{algorithmic}
\end{algorithm}

\begin{theorem}\label{app-thm:DR-2}
Under Assumption~\ref{assump:realizability} and Assumption~\ref{assump:completeness-2}, there exists an absolute constant $c$ such that for any $\delta\in(0,1]$, $K\in\mathbb{N}$, if we choose $\beta=cH^2\log\frac{KH|\Gc|}{\delta}$ in \algg, then with probability at least $1-\delta$, for all $k\in[K]$, when $k\geq \min\{w+1,\GBE(\Fc,\mathcal{D}_{\Delta,h},\sqrt{1/w})\}$ we have
\begin{align*}
  \DR(k) =&\Delta_R(k)+H\Delta_P(k) \\
 &+\mathcal{O}\left(H\sqrt{w}  +\frac{H^2k}{\sqrt{w}}\sqrt{d\log[KH|\mathcal{G}|/\delta]} 
    +\frac{H^2k}{\sqrt{w}}\sqrt{d\sup_{t\in[k]}\Delta_{P}^w(t,h)}+\frac{H^{3/2}k}{\sqrt{w}}\sqrt{d\sup_{t\in[k]}\Delta_{R}^w(t,h)}\right).
\end{align*}
where $d=\GBE(\Fc,\mathcal{D}_{\Delta,h},\sqrt{1/w})$.
\end{theorem}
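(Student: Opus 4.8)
The plan is to run the four-step argument behind the proof of \Cref{thm:DR} essentially verbatim, modifying only the pieces that see the reward function, since under bandit feedback the map $r_h^k(\cdot,\cdot)$ is unavailable and the confidence set is built from the realized rewards $r_h^t$ together with the extra local slack $2H\Delta_R^w(k,h)$. First I would reuse the regret decomposition \eqref{eqn:dr-decom}, $\DR(k)\le H+\URM{1}+\URM{2}+\URM{3}$. Term $\URM{1}$ is bounded by $\Delta_R(k)$ directly from the definition of the reward variation budget, and term $\URM{2}$ is bounded by $H\Delta_P(k)$ by exactly Step~I of the sketch (the auxiliary single-reward MDP and \Cref{lemma:transition_dif}); neither step uses observability of rewards, so both carry over unchanged.

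The substantive changes live in the analogues of \Cref{lemma:Qstar} and \Cref{lemma:concentration}. I would replace the auxiliary random variable by
\[
  \#_{k,h}(x_h^t,a_h^t):=r_h^{t}(x_h^t,a_h^t)+\underset{x'\sim P_h^t(\cdot|x_h^t,a_h^t)}{\Eb}\max_{a'\in\mathcal{A}}Q_{h+1;\stark}(x',a'),
\]
i.e.\ use the realized reward $r_h^t$ (which, rewards being deterministic functions of the visited state-action pair, is measurable with respect to the filtration $\Fc_{t,h}$) in place of $r_h^k$, so that the Freedman-type martingale $W_t(h,g)$ remains adapted. The Freedman step and the union bound over $[K]\times[H]\times\Gc$ then give, up to $\Oc(H^2\iota)$, the same squared-loss comparison against $\#_{k,h}$. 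The mismatch between $\#_{k,h}$ and the genuine episode-$k$ Bellman target now splits into a transition part, $P_h^t$ versus $P_h^k$, handled exactly as before by \Cref{lemma:square-dif} and summing over the window to $\Oc(H^2)\,\Delta_P^w(k,h)$, and a reward part, $r_h^t$ versus $r_h^k$, for which I would invoke the elementary inequality $|(a+\delta)^2-b^2|\le|\delta|(2|a|+|\delta|)+|a^2-b^2|$ with $|a|\le H$ and $|\delta|\le\sup_{x,a}|(r_h^k-r_h^t)(x,a)|\le 1$, so that the reward contribution sums over the window to $\Oc(H)\,\Delta_R^w(k,h)$. These two contributions are precisely the slack built into $\Bc^k$, hence $Q_{\stark}^*\in\Bc^k$ for all $k$ with high probability; the identical computation applied to $f^k$ selected from $\Bc^{k-1}$ yields, for all $(k,h)\in[K]\times[H]$,
\[
  \sum_{t=1\lor(k-w-1)}^{k-1}\Bigl[f_h^{k}(x_h^t,a_h^t)-r_h^{k-1}(x_h^t,a_h^t)-\underset{x'\sim P_h^{k-1}(x_h^t,a_h^t)}{\Eb}\max_{a'\in\mathcal{A}}f_{h+1}^k(x',a')\Bigr]^2\le 6H^2\Delta_P^w(k-1,h)+\Oc\!\left(H\Delta_R^w(k-1,h)\right)+\Oc(\beta).
\]

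With these two ingredients in place, Step~IV is unchanged in structure. Optimism (from $Q_{\stark}^*\in\Bc^k$) together with the generalized policy-loss decomposition (\Cref{lemma:dr-decom-2}) reduces $\URM{3}$ to $\sum_{h=1}^{H}\sum_{t=1}^{k}\underset{(x_h,a_h)\sim(\pi^t,(*,t-1))}{\Eb}[(f_h^t-\Tc_h^{t-1}f_{h+1}^t)(x_h,a_h)]$, which I feed into \Cref{lemma:GDE} with $\Phi=(I-\bar{\Tc}_h)\Fc$, $\Pi=\Dc_{\Delta,h}$, $\theta=\sqrt{1/w}$, $C=H$, and per-window squared-loss budget $\beta'=\Oc(\beta)+\Oc\!\left(H^2\sup_{t}\Delta_P^w(t,h)\right)+\Oc\!\left(H\sup_{t}\Delta_R^w(t,h)\right)$; decomposing $[1,k]$ into $\Oc(k/w)$ blocks of length $\le w+1$ gives the $\tfrac{k}{\sqrt w}$ scaling. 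Since $\sqrt{\beta'}\le\Oc\bigl(\sqrt\beta+H\sqrt{\sup_t\Delta_P^w(t,h)}+\sqrt{H\sup_t\Delta_R^w(t,h)}\bigr)$ and $\sqrt\beta=\Oc(H\sqrt{\log[KH|\Gc|/\delta]})$, the DBE-dimension bound produces, after summing over $h\in[H]$, the claimed non-stationarity terms $\tfrac{H^2k}{\sqrt w}\sqrt{d\log[KH|\Gc|/\delta]}$, $\tfrac{H^2k}{\sqrt w}\sqrt{d\sup_t\Delta_P^w(t,h)}$ and $\tfrac{H^{3/2}k}{\sqrt w}\sqrt{d\sup_t\Delta_R^w(t,h)}$, while the $\theta$-part of \Cref{lemma:GDE} contributes $\Oc(H\sqrt w)$; adding $\Delta_R(k)+H\Delta_P(k)$ from Steps~I--III completes the bound. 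The main obstacle is precisely the bookkeeping of the powers of $H$ in the Freedman and distribution-shift steps: one must check that $\#_{k,h}$ with the realized reward is still a legitimate martingale target under the bandit filtration, and that the reward slack enters as $\Oc(H)\Delta_R^w$ (exploiting $|\delta|\le 1$) rather than $\Oc(H^2)\Delta_R^w$, since that saved factor of $\sqrt H$ is exactly what separates the reward term $H^{3/2}k/\sqrt w$ from the transition term $H^2k/\sqrt w$ in the final statement.
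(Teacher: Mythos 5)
Your proposal follows essentially the same route as the paper's proof: the same regret decomposition with $\URM{1},\URM{2}$ handled as in Theorem~\ref{thm:DR}, the same redefinition of $\#_{k,h}$ using the realized (deterministic) reward $r_h^t$ inside the Freedman martingale, the same splitting of the mismatch into an $\Oc(H^2)\Delta_P^w$ transition part and an $\Oc(H)\Delta_R^w$ reward part (the paper packages this as Lemma~\ref{lemma:square-dif-2}, you do it with an equivalent elementary inequality), and the same final step feeding the enlarged budget into Lemma~\ref{lemma:GDE} to get the $H^{3/2}$ reward-variation term. The argument and the bookkeeping of the powers of $H$ are correct and match the paper's Lemmas~\ref{lemma:Qstar-2} and~\ref{lemma:concentration-2}.
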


\subsection{Proof of Theorem~\ref{thm:DR-2}}
Following the same argument in Appendix~\ref{app:thm-DR} gives
\begin{align*}
    &\DR(k)=H+\Delta_R(k)+H\Delta_P(k)+\underbrace{\sum_{t=1}^{k}\left(V_{1;(*,t-1)}^{\pi^{(*,t-1)}}-V_{1;(*,t-1)}^{\pi^t}\right)(x_1)}_{\URM{1}}.
\end{align*}
In the sequel, we strive to bound term $\URM{1}$. We first introduce a different probability distribution shift lemma. Compared to \Cref{lemma:change-transition}, the new lemma is more general and can handle the bandit feedback scenario.
\begin{lemma}\label{lemma:square-dif-2}
Suppose $P$ and $Q$ are two probability distributions of a random variable $x$, then 
\begin{align*}
    \left|\left(\underset{x\sim P}{\Eb}f(x)+\Eb g_1(y) -C\right)^2-\left(\underset{x\sim Q}{\Eb}f(x)+\Eb g_2(y) -C\right)^2\right|\leq (2f_m+2g_m+2|C|)f_m\cdot\TV(P,Q), 
\end{align*}
where $f_m=\sup_x |f(x)|$, $g_m=\max_{i=1,2}\sup_y g_i(y)$.
\end{lemma}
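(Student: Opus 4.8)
The plan is to reuse the difference-of-squares manipulation that underlies \Cref{lemma:square-dif}, of which the present statement is essentially the case $g_1=g_2\equiv 0$ with the additive constant shifted. First I would abbreviate the two bracketed quantities as $a=\Eb_{x\sim P}f(x)+\Eb g_1(y)-C$ and $b=\Eb_{x\sim Q}f(x)+\Eb g_2(y)-C$, so that the left-hand side is $|a^2-b^2|=|a+b|\cdot|a-b|$, and then bound the two factors separately.

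For the sum, the triangle inequality together with $|\Eb_{x\sim P}f(x)|\le f_m$, $|\Eb_{x\sim Q}f(x)|\le f_m$, and $|\Eb g_i(y)|\le g_m$ gives $|a|,|b|\le f_m+g_m+|C|$, hence $|a+b|\le 2(f_m+g_m+|C|)$. For the difference, I would split
\[
a-b=\Bigl(\Eb_{x\sim P}f(x)-\Eb_{x\sim Q}f(x)\Bigr)+\Bigl(\Eb g_1(y)-\Eb g_2(y)\Bigr);
\]
the first bracket equals $\int f(x)\,(\dif P-\dif Q)(x)$ and is bounded by $f_m\cdot\TV(P,Q)$ exactly as in the proof of \Cref{lemma:square-dif}, while the second bracket is a constant independent of $P$ and $Q$. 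In the setting where this lemma is applied the two auxiliary expectations agree, $\Eb g_1(y)=\Eb g_2(y)$ --- equivalently, one may fold this common value into the additive constant and invoke \Cref{lemma:square-dif} directly with $C-\Eb g(y)$ in place of $C$, using $f_m+|C-\Eb g(y)|\le f_m+g_m+|C|$ --- so the second bracket drops out and $|a-b|\le f_m\cdot\TV(P,Q)$. Multiplying the two estimates yields the claim.

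There is no serious obstacle here; the only point that needs care is the treatment of the $y$-terms. The stated bound carries $\TV(P,Q)$ alone and no reward-variation quantity precisely because the $\Eb g_i(y)$ contributions must cancel (or be absorbed into $C$); when they genuinely differ --- e.g. when they encode rewards observed in distinct episodes in the bandit-feedback analysis --- the mismatch is not handled by this lemma but is compensated separately through the local reward-variation budget $\Delta_R^w(k,h)$ in the confidence set $\Bc^k$. As in \Cref{lemma:square-dif}, boundedness of $f$ and of each $g_i$ makes every expectation well defined, so no extra integrability discussion is required.
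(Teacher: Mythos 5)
Your factorization $|a^2-b^2|=|a+b|\,|a-b|$, the bound $|a+b|\le 2(f_m+g_m+|C|)$, and the estimate $|\Eb_{x\sim P}f(x)-\Eb_{x\sim Q}f(x)|\le f_m\cdot\TV(P,Q)$ are exactly the paper's computation. The problem is how you dispose of the second bracket $\Eb g_1(y)-\Eb g_2(y)$. You are right that the inequality as printed cannot hold when $\Eb g_1\neq \Eb g_2$ (take $P=Q$), but your resolution --- assume the two expectations agree because ``in the setting where this lemma is applied'' they do, and claim that any genuine mismatch is ``not handled by this lemma'' but compensated elsewhere --- misreads how the lemma is used and is factually wrong for the bandit-feedback analysis. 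In Lemmas~\ref{lemma:Qstar-2} and~\ref{lemma:concentration-2} the lemma is invoked with $g_1$ the reward of the current episode (e.g.\ $r_h^{k}$) and $g_2$ the mean reward of an earlier episode $t$ in the window; these differ under reward non-stationarity. The paper's own proof keeps the second bracket and actually establishes the bound $(2f_m+2g_m+2|C|)\bigl(f_m\cdot\TV(P,Q)+\sup_y|g_1(y)-g_2(y)|\bigr)$, and it is precisely this extra term, summed over the window, that generates the $2H\Delta_R^w(k,h)$ compensation appearing in the confidence set and in those two lemmas. So the reward variation enters the analysis \emph{through} this lemma; the printed right-hand side is simply missing the $\sup_y|g_1(y)-g_2(y)|$ contribution (an omission in the statement), and your cancellation assumption cannot repair that --- it proves a strictly weaker statement with which the downstream proofs would have no source for the $\Delta_R^w$ terms.

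The fix is the one your own triangle-inequality split already provides: do not drop the $g$-difference, bound it by $\sup_y|g_1(y)-g_2(y)|$ inside the second factor, and read (or restate) the lemma with that additional term multiplied by $(2f_m+2g_m+2|C|)$ on the right, which is what the paper's proof delivers and what its applications require. A minor further point: $g_m$ is defined as $\max_{i}\sup_y g_i(y)$, not a supremum of $|g_i|$, so your step $|\Eb g_i(y)|\le g_m$ implicitly uses $g_i\ge 0$; this is harmless for rewards in $[0,1]$ but should be stated (or $g_m$ redefined with absolute values), a looseness the paper shares.
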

\begin{proof}
Note that
\begin{align*}
    &\left|\left(\underset{x\sim P}{\Eb}f(x)-C\right)^2-\left(\underset{x\sim Q}{\Eb}f(x)-C\right)^2\right| \\
    &= \left|\left(\underset{x\sim P}{\Eb}f(x)+\underset{x\sim Q}{\Eb}f(x)+\Eb g_1(y)+\Eb g_2(y)-2C\right)\left(\underset{x\sim P}{\Eb}f(x)-\underset{x\sim Q}{\Eb}f(x)+\Eb g_1(y)-\Eb g_2(y)\right)\right| \\
    &\leq (2f_m+2g_m+2|C|)\left(\left|\int_x f(x)(\dif P-\dif Q)\right|+\sup_y |g_1(y)-g_2(y)|\right) \\
    &\leq (2f_m+2g_m+2|C|)(f_m\cdot\TV(P,Q)+\sup_y |g_1(y)-g_2(y)|).
\end{align*}
\end{proof}

Thanks to \Cref{lemma:square-dif-2}, we are able to obtain the following two lemmas.
\begin{lemma}\label{lemma:Qstar-2}
If $\beta=cH^2\log\frac{KH|\Gc|}{\delta}$, then with probability at least $1-\delta$, we have $Q_{\stark}^*\in\Bc^{k}$ for all $k\in[K]$.
\end{lemma}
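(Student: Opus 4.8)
The plan is to follow the full-information argument for Lemma~\ref{lemma:Qstar} essentially verbatim, modifying only the two ingredients that bandit feedback forces: (i) the clairvoyant regression target must be built from the realized reward $r_h^t$ of episode $t$, since the reward function $r_h^k$ can no longer be re-evaluated at past state-action pairs; and (ii) the distribution-shift step must additionally absorb the reward variation across the sliding window, which is precisely why the confidence set now carries the extra budget $2H\Delta_R^w(k,h)$. Fix $(k,h)\in[K]\times[H]$ and, for $t\in[1\lor(k-w):k]$, define
\begin{align*}
\#_{k,h}(x_h^t,a_h^t) := r_h^{t}(x_h^t,a_h^t) + \underset{x'\sim P_h^t(\cdot|x_h^t,a_h^t)}{\Eb}\,\max_{a'\in\mathcal{A}} Q_{h+1;(*,k)}^{*}(x',a').
\end{align*}
Because the rewards are deterministic, $r_h^t(x_h^t,a_h^t)$ is measurable with respect to the filtration $\Fc_{t,h}$ induced by $\{x_1^i,a_1^i,\dots,x_H^i\}_{i<t}\cup\{x_1^t,\dots,x_h^t,a_h^t\}\cup\{r_h^i\}$, and hence $\#_{k,h}(x_h^t,a_h^t)$ is exactly the conditional expectation, given $\Fc_{t,h}$, of the episode-$t$ regression target $r_h^t+\max_{a'}Q_{h+1;(*,k)}^{*}(x_{h+1}^t,a')$ that appears in $\Lc_{\Dc_h}$.

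First I would run the Freedman step unchanged. For each $g\in\Gc_h$ set
\begin{align*}
W_t(h,g) :=& \Bigl[g_h(x_h^t,a_h^t)-r_h^t-\max_{a'}Q_{h+1;(*,k)}^{*}(x_{h+1}^t,a')\Bigr]^2 \\
&-\Bigl[\#_{k,h}(x_h^t,a_h^t)-r_h^t-\max_{a'}Q_{h+1;(*,k)}^{*}(x_{h+1}^t,a')\Bigr]^2,
\end{align*}
so that $\Eb[W_t(h,g)\mid\Fc_{t,h}]=[(g_h-\#_{k,h})(x_h^t,a_h^t)]^2$ and $\Var[W_t(h,g)\mid\Fc_{t,h}]\le 36H^2\,\Eb[W_t(h,g)\mid\Fc_{t,h}]$. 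Applying Freedman's inequality, union-bounding over $(k,h,g)\in[K]\times[H]\times\Gc$, and using the elementary bound $-x+\mathcal{O}(H\sqrt{\iota x})+\mathcal{O}(\iota)\le\mathcal{O}(H^2\iota)$ for $x\ge 0$ with $\iota=\log(KH|\Gc|/\delta)$, we obtain that with probability at least $1-\delta$, for all $(k,h,g)$,
\begin{align*}
\sum_{t=1\lor(k-w)}^{k}\Bigl[\#_{k,h}(x_h^t,a_h^t)-r_h^t-\max_{a'}Q_{h+1;(*,k)}^{*}(x_{h+1}^t,a')\Bigr]^2 \le \Lc_{\Dc_h}\bigl(g,Q_{h+1;(*,k)}^{*}\bigr)+\mathcal{O}(H^2\iota).
\end{align*}

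Next I would replace $\#_{k,h}$ by the genuine optimal backup $Q_{h;(*,k)}^{*}(x_h^t,a_h^t)=r_h^k(x_h^t,a_h^t)+(\Pb_h^k Q_{h+1;(*,k)}^{*})(x_h^t,a_h^t)$, which differs from $\#_{k,h}(x_h^t,a_h^t)$ only through the reward gap $r_h^k-r_h^t$ and the transition gap $(\Pb_h^k-\Pb_h^t)Q_{h+1;(*,k)}^{*}$. Applying the generalized distribution-shift lemma (Lemma~\ref{lemma:square-dif-2}, whose proof in fact also controls the reward change via its $\sup_y|g_1(y)-g_2(y)|$ contribution) term by term — with $f=\max_{a'}Q_{h+1;(*,k)}^{*}(\cdot,a')$ so $f_m\le H$, $g_1,g_2$ taken to be the constants $r_h^k(x_h^t,a_h^t)$ and $r_h^t(x_h^t,a_h^t)$ so $g_m\le 1$, $C=r_h^t+\max_{a'}Q_{h+1;(*,k)}^{*}(x_{h+1}^t,a')$, $P=P_h^k(\cdot|x_h^t,a_h^t)$, $Q=P_h^t(\cdot|x_h^t,a_h^t)$ — then summing over the window, bounding by the supremum over $(x,a)$, and invoking the definitions \eqref{eqn:pathlength-3} of $\Delta_P^w(k,h)$ and of $\Delta_R^w(k,h)$ gives
\begin{align*}
\Lc_{\Dc_h}\bigl(Q_{h;(*,k)}^{*},Q_{h+1;(*,k)}^{*}\bigr)\le \sum_{t=1\lor(k-w)}^{k}\Bigl[\#_{k,h}(x_h^t,a_h^t)-r_h^t-\max_{a'}Q_{h+1;(*,k)}^{*}(x_{h+1}^t,a')\Bigr]^2 + 2H^2\Delta_P^w(k,h)+2H\Delta_R^w(k,h).
\end{align*}
Chaining the two displays and choosing $\beta=cH^2\log\frac{KH|\Gc|}{\delta}$ with $c$ a large enough absolute constant that $\beta\ge\mathcal{O}(H^2\iota)$, we conclude $\Lc_{\Dc_h}(Q_{h;(*,k)}^{*},Q_{h+1;(*,k)}^{*})\le\inf_{g\in\Gc_h}\Lc_{\Dc_h}(g,Q_{h+1;(*,k)}^{*})+\beta+2H^2\Delta_P^w(k,h)+2H\Delta_R^w(k,h)$ for every $h\in[H]$; since $Q_{\stark}^{*}\in\Fc$ by Assumption~\ref{assump:realizability}, this is exactly the statement $Q_{\stark}^{*}\in\Bc^k$ for all $k\in[K]$.

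The place I expect to need the most care is the constant bookkeeping in the last display: one must verify that the slack introduced by regressing against the realized rewards $r_h^t$ instead of $r_h^k$ is genuinely dominated by the $2H\Delta_R^w(k,h)$ term hard-wired into $\Bc^k$ — equivalently, that the coefficients coming out of Lemma~\ref{lemma:square-dif-2} are $2H^2$ and $2H$, or can be forced to these values by performing the transition shift via Lemma~\ref{lemma:square-dif} and the scalar reward shift separately — and that the martingale-difference structure of $\{W_t(h,g)\}_t$ relative to $\{\Fc_{t,h}\}_t$ is unaffected by $r_h^t$ being observed online rather than known in advance. Apart from this, the argument is identical to the proof of Lemma~\ref{lemma:Qstar}.
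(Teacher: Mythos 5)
Your proposal matches the paper's proof of this lemma essentially step for step: the same clairvoyant target $\#_{k,h}$ built from the realized reward $r_h^t$ and transition $P_h^t$, the same Freedman-plus-union-bound argument for $W_t(h,g)$, and the same single application of the generalized distribution-shift lemma to absorb the reward and transition gaps into $2H\Delta_R^w(k,h)+2H^2\Delta_P^w(k,h)$ before invoking the definition of $\Bc^k$ and realizability. Your side remark is also accurate — the displayed statement of Lemma~\ref{lemma:square-dif-2} omits the $\sup_y|g_1(y)-g_2(y)|$ term that its proof (and this application) actually uses — but this does not change the fact that your route is the paper's route.
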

\begin{proof}
Define
\begin{align*}
    \#_{k,h}(x_h^t,a_h^t):= \Eb[r_h^{t}(s_h^t,a_h^t)]+\underset{x'\sim P_h^t(\cdot|x_h^t,a_h^t)}{\Eb}\max_{a'\in\mathcal{A}}Q_{h+1;(*,k)}(x',a').
\end{align*}
Fix a tuple $(k,h,g)\in[K]\times[H]\times\Gc$. Let
\begin{align*}
W_t(h,f):&=\left[g_h(x_h^t,a_h^t)-r_h^{t}-\max_{a'\in\mathcal{A}}Q_{h+1;(*,k)}(x_{h+1}^t,a'))\right]^2-\left[\#_{k,h}(x_h^t,a_h^t)-r_h^{t}-\max_{a'\in\mathcal{A}}Q_{h+1;(*,k)}(x_{h+1}^t,a'))\right]^2 \\
&=[g_h(x_h^t,a_h^t)-\#_{k,h}(x_h^t,a_h^t)]\left[g_h(x_h^t,a_h^t)+\#_{k,h}(x_h^t,a_h^t)-2\left(r_h^t+\max_{a'\in\mathcal{A}}Q_{h+1;\stark}(x_{h+1}^t,a')\right)\right]
\end{align*}
and $\Fc_{t,h}$ be the filtration induced by $\{x_1^i,a_1^i,\cdots,x_H^i\}_{i\in[t-1]}\cup\{x_1^t,a_1^t,\cdots,x_h^t,a_h^t\}\cup\{r_h^i\}_{h\in[H]}^{i\in[t-1]}$. We have
\begin{align*}
&\Eb[W_t(h,g)|\Fc_{t,h}]=\left[\left({g}_h-\#_{k,h}\right)(x_h^t,a_h^t)\right]^2, \\
&\Var[W_t(h,g)|\Fc_{t,h}]\leq 36H^2\Eb[W_t(h,g)|\Fc_{t,h}].
\end{align*}
By Freedman's inequality, with probability at least $1-\delta$,
\begin{align*}
    &\left|\sum_{t=1\lor (k-w)}^{k}W_t(h,g)-\sum_{t=1\lor (k-w)}^{k}\left[\left({g}_h(x_h^t,a_h^t)-\#_{k,h}\right)(x_h^t,a_h^t)\right]^2\right| \\
    &\qquad \leq \Oc\left(H\sqrt{\log(1/\delta)\sum_{t=1\lor (k-w)}^{k}\left[\left({g}_h(x_h^t,a_h^t)-\#_{k,h}\right)(x_h^t,a_h^t)\right]^2  }+\log(1/\delta)\right).
\end{align*}
Taking union bound over $[K]\times[H]\times\Gc$,
\begin{align*}
    &\left|\sum_{t=1\lor (k-w)}^{k}W_t(h,g)-\sum_{t=1\lor (k-w)}^{k}\left[\left({g}_h(x_h^t,a_h^t)-\#_{k,h}\right)(x_h^t,a_h^t)\right]^2\right| \\
    &\qquad \leq \Oc\left(H\sqrt{\iota\sum_{t=1\lor (k-w)}^{k}\left[\left({g}_h(x_h^t,a_h^t)-\#_{k,h}\right)(x_h^t,a_h^t)\right]^2  }+\iota\right),
\end{align*}
where $\iota=\log(HK|\Gc|/\delta)$. We have
\begin{align*}
&-\sum_{t=1\lor (k-w)}^{k}W_t(h,g)  \\
&\leq -\sum_{t=1\lor (k-w)}^{k}\left[\left({g}_h(x_h^t,a_h^t)-\#_{k,h}\right)(x_h^t,a_h^t)\right]^2  +\Oc\left(H\sqrt{\iota\sum_{t=1\lor (k-w)}^{k}\left[\left({g}_h(x_h^t,a_h^t)-\#_{k,h}\right)(x_h^t,a_h^t)\right]^2  }+\iota\right) \\
&\leq \Oc(H^2\iota). 
\end{align*}
I.e., 
\begin{align*}
&\sum_{t=1\lor (k-w)}^{k}\left[\#_{k,h}(x_h^t,a_h^t)-r_h^t-\max_{a'\in\mathcal{A}}Q_{h+1;\stark}(x_{h+1}^t,a')\right]^2 \\
&\quad \qquad \leq \sum_{t=1\lor (k-w)}^{k}\left[{g}_h(x_h^t,a_h^t)-r_h^t-\max_{a'\in\mathcal{A}}Q_{h+1;\stark}(x_{h+1}^t,a')\right]^2 + \Oc(H^2\iota).
\end{align*}
Therefore,
\begin{align*}
    &\sum_{t=1\lor (k-w)}^{k}\left[Q_{h;(*,k)}(x_h^t,a_h^t)-r_h^t-\max_{a'\in\mathcal{A}}Q_{h+1;\stark}(x_{h+1}^t,a')\right]^2 \\
    &\leq \sum_{t=1\lor (k-w)}^{k}\left[\#_{k,h}(x_h^t,a_h^t)-r_h^t-\max_{a'\in\mathcal{A}}Q_{h+1;\stark}(x_{h+1}^t,a')\right]^2 +2H^2\Delta_P^w(k,h)+2H\Delta_R^w(k,h) \\
    &\leq \sum_{t=1\lor (k-w)}^{k}\left[{g}_h(x_h^t,a_h^t)-r_h^t-\max_{a'\in\mathcal{A}}Q_{h+1;\stark}(x_{h+1}^t,a')\right]^2 + 2H^2\Delta_P^w(k,h)+2H\Delta_R^w(k,h)+\Oc(H^2\iota),
\end{align*}
where the first inequality follows from Lemma~\ref{lemma:square-dif-2} and the definition of $\Delta_P^w$ and $\Delta_P^w$. By the definition of $\Bc^{k}$ and $\beta=cH^2\log\frac{KH|\Gc|}{\delta}$ with some large absolute constant $c$, we conclude that with probability at least $1-\delta$, $Q_{(*,k)}\in\Bc^{k}$ for all $k\in[K]$.
\end{proof}

\begin{lemma}\label{lemma:concentration-2}
If $\beta=cH^2\log\frac{KH|\Gc|}{\delta}$, then with probability at least $1-\delta$, for all $(k,h)\in[K]\times[H]$, we have
\begin{align*}
&\sum_{t=1\lor (k-w-1)}^{k-1}\left[f_h^{k}(s_h^t,a_h^t)-r_h^{k-1}(s_h^t,a_h^t) -\underset{x'\sim P_h^{k-1}(x_h^t,a_h^t)}{\Eb}\max_{a'\in\mathcal{A}}f_{h+1}^k(s',a')\right]^2 \\
&\qquad \qquad \qquad\qquad \qquad \qquad\qquad \qquad \qquad \leq 6H^2\Delta_P^w(k-1,h)+6H\Delta_R^w(k-1,w)+\Oc(\beta).
\end{align*}
\end{lemma}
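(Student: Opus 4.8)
The plan is to replay the proof of \Cref{lemma:concentration} almost verbatim, the one substantive change being that the transition‑only shift bound \Cref{lemma:square-dif} is everywhere replaced by the reward‑aware shift bound \Cref{lemma:square-dif-2}. The upgrade is forced by the fact that in the bandit algorithm the sliding‑window loss $\Lc_{\Dc_h}$ is built from the \emph{observed} rewards $r_h^t$ of the episodes $t$ inside the window rather than from a single reward function, so the proxy used in the martingale argument and the target Bellman update $\Tc_h^{k-1}$ now disagree in \emph{both} the reward offset and the transition kernel; each mismatch must be absorbed — the transition one at scale $H^2$ and, because rewards lie in $[0,1]$ while values lie in $[0,H]$, the reward one at scale $H$.

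Concretely, I would fix $(k,h,f)\in[K]\times[H]\times\Gc$, set
\begin{align*}
  \#_{k-1,h}^{f}(x_h^t,a_h^t) := \Eb[r_h^t(x_h^t,a_h^t)] + \underset{x'\sim P_h^t(\cdot|x_h^t,a_h^t)}{\Eb}\max_{a'\in\Ac}f_{h+1}(x',a'),
\end{align*}
and let $W_t(h,f)$ be the difference of the two squared residuals $[f_h-r_h^t-\max_{a'}f_{h+1}(x_{h+1}^t,a')]^2$ and $[\#_{k-1,h}^{f}-r_h^t-\max_{a'}f_{h+1}(x_{h+1}^t,a')]^2$ (all arguments at $(x_h^t,a_h^t)$). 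With the same filtration $\Fc_{t,h}$ as in \Cref{lemma:concentration}, $\#_{k-1,h}^{f}=\Eb[\,r_h^t+\max_{a'}f_{h+1}(x_{h+1}^t,a')\mid\Fc_{t,h}]$, hence $\Eb[W_t\mid\Fc_{t,h}]=[(f_h-\#_{k-1,h}^{f})(x_h^t,a_h^t)]^2$ and $\Var[W_t\mid\Fc_{t,h}]\le 36H^2\,\Eb[W_t\mid\Fc_{t,h}]$; Freedman's inequality plus a union bound over $[K]\times[H]\times\Gc$ (with $\iota=\log(KH|\Gc|/\delta)$) controls $\bigl|\sum_t W_t-\sum_t[(f_h-\#_{k-1,h}^{f})(x_h^t,a_h^t)]^2\bigr|$ by $\Oc\bigl(H\sqrt{\iota\sum_t[(f_h-\#_{k-1,h}^{f})(x_h^t,a_h^t)]^2}+\iota\bigr)$, the sums over $t\in[1\lor(k-w-1),k-1]$.

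Then I would specialize to $f=f^k$ and chain exactly as in \Cref{lemma:concentration}: write $\sum_t W_t(h,f^k)=\Lc_{\Dc_h}(f_h^k,f_{h+1}^k)-\sum_t[\#_{k-1,h}^{f^k}-r_h^t-\max_{a'}f_{h+1}^k(x_{h+1}^t,a')]^2$, bound the first term via $f^k\in\Bc^{k-1}$ by $\inf_g\Lc_{\Dc_h}(g,f_{h+1}^k)+\beta+2H^2\Delta_P^w(k-1,h)+2H\Delta_R^w(k-1,h)$, and, by \Cref{assump:completeness-2}, bound $\inf_g\Lc_{\Dc_h}(g,f_{h+1}^k)\le\sum_t[(\Tc_h^{k-1}f_{h+1}^k)(x_h^t,a_h^t)-r_h^t-\max_{a'}f_{h+1}^k(x_{h+1}^t,a')]^2$. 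The crucial step is to bound the per‑$t$ gap between $\#_{k-1,h}^{f^k}$ (built from $r_h^t,P_h^t$) and $\Tc_h^{k-1}f_{h+1}^k$ (built from $r_h^{k-1},P_h^{k-1}$) through \Cref{lemma:square-dif-2} with $f=\max_{a'}f_{h+1}^k(\cdot,a')$, $P=P_h^t(\cdot|x_h^t,a_h^t)$, $Q=P_h^{k-1}(\cdot|x_h^t,a_h^t)$, $C=r_h^t+\max_{a'}f_{h+1}^k(x_{h+1}^t,a')$, and $g_1,g_2$ the episode‑$t$ and episode‑$(k-1)$ reward functions; summing over the window and using $\sum_t\sup_{x,a}\|(P_h^{k-1}-P_h^t)(\cdot|x,a)\|_1=\Delta_P^w(k-1,h)$ and $\sum_t\sup_{x,a}|(r_h^{k-1}-r_h^t)(x,a)|=\Delta_R^w(k-1,h)$ gives $\sum_t W_t(h,f^k)\le\beta+4H^2\Delta_P^w(k-1,h)+4H\Delta_R^w(k-1,h)$. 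Feeding this into the Freedman bound and solving the quadratic in $\sqrt{\sum_t[(f_h^k-\#_{k-1,h}^{f^k})(x_h^t,a_h^t)]^2}$ yields $\sum_t[(f_h^k-\#_{k-1,h}^{f^k})(x_h^t,a_h^t)]^2\le 4H^2\Delta_P^w(k-1,h)+4H\Delta_R^w(k-1,h)+\Oc(\beta)$ (absorbing $\Oc(H^2\iota)$ into $\Oc(\beta)$), and a final application of \Cref{lemma:square-dif-2} with $C=f_h^k(x_h^t,a_h^t)$ to swap $\#_{k-1,h}^{f^k}$ for $\Tc_h^{k-1}f_{h+1}^k$, using the identity $(f_h^k-\Tc_h^{k-1}f_{h+1}^k)(x_h^t,a_h^t)=f_h^k(x_h^t,a_h^t)-r_h^{k-1}(x_h^t,a_h^t)-\Eb_{x'\sim P_h^{k-1}(x_h^t,a_h^t)}\max_{a'}f_{h+1}^k(x',a')$, produces the asserted $6H^2\Delta_P^w(k-1,h)+6H\Delta_R^w(k-1,h)+\Oc(\beta)$.

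The only genuinely new part — and the place where care is needed — is the bookkeeping of this \emph{double} shift: \Cref{lemma:square-dif} no longer suffices because reward and transition both move, so one must control $|(\Eb_P f+\Eb g_1-C)^2-(\Eb_Q f+\Eb g_2-C)^2|$ via the reward‑aware \Cref{lemma:square-dif-2}, and the two error streams must be tracked consistently at their respective scales ($H^2$ for $\Delta_P^w$, $H$ for $\Delta_R^w$) through the three places where the swap is performed. Everything else is the routine Freedman‑plus‑completeness argument already carried out in \Cref{lemma:concentration}, so I expect no further conceptual difficulty.
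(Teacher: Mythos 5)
Your proposal is correct and follows essentially the same route as the paper's proof: the same auxiliary quantity $\#^{f}$, the same martingale difference $W_t$ with Freedman's inequality and a union bound over $[K]\times[H]\times\Gc$, the same use of the $\Bc^{k-1}$ constraint and generalized completeness, and the same three applications of the reward-aware shift bound (Lemma~\ref{lemma:square-dif-2}) at scales $H^2$ for $\Delta_P^w$ and $H$ for $\Delta_R^w$, yielding the $\beta+4H^2\Delta_P^w+4H\Delta_R^w$ intermediate bound and the final $6H^2\Delta_P^w+6H\Delta_R^w+\Oc(\beta)$. The only difference is a harmless reordering of the chaining steps (you invoke the confidence-set constraint before the $\#\to\Tc_h^{k-1}$ swap, the paper does the reverse), which changes nothing in the constants or the logic.
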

\begin{proof}
Define
\begin{align*}
    \#_{k,h}^f(x_h^t,a_h^t)= \Eb[r_h^{t}(s_h^t,a_h^t)]+\underset{x'\sim P_h^t(x_h^t,a_h^t)}{\Eb}\max_{a'\in\mathcal{A}}f_{h+1}(s',a').
\end{align*}
Fix a tuple $(k,h,f)\in[K]\times[H]\times\Gc$. Let
\begin{align*}
W_t(h,f):&=\left[f_h(x_h^t,a_h^t)-r_h^{t}-\max_{a'\in\mathcal{A}}f_{h+1}(x_{h+1}^t,a')\right]^2-\left[\#_{k,h}^f(x_h^t,a_h^t)-r_h^{t}-\max_{a'\in\mathcal{A}}f_{h+1}(x_{h+1}^t,a')\right]^2 \\
&=[f_h(x_h^t,a_h^t)-\#_{k,h}^f(x_h^t,a_h^t)]\left[f_h(x_h^t,a_h^t)+\#_{k,h}^f(x_h^t,a_h^t)-2\left(r_h^t+\max_{a'\in\mathcal{A}}f_{h+1}(x_{h+1}^t,a')\right)\right]
\end{align*}
and $\Fc_{t,h}$ be the filtration induced by $\{x_1^i,a_1^i,\cdots,x_H^i\}_{i\in[t-1]}\cup\{x_1^t,a_1^t,\cdots,x_h^t,a_h^t\}\cup\{r_h^i\}_{h\in[H]}^{i\in[t-1]}$. We have
\begin{align*}
&\Eb[W_t(h,f)|\Fc_{t,h}]=\left[\left({f}_h-\#_{k,h}^f\right)(x_h^t,a_h^t)\right]^2, \\
&\Var[W_t(h,f)|\Fc_{t,h}]\leq 36H^2\Eb[W_t(h,g)|\Fc_{t,h}].
\end{align*}
By Freedman's inequality, we have
\begin{align*}
    &\left|\sum_{t=1\lor (k-w)}^{k}W_t(h,f)-\sum_{t=1\lor (k-w)}^{k}\left[({f}_h-\#_{k,h}^f)(x_h^t,a_h^t)\right]^2\right|  \\
    &\qquad\leq \Oc\left(H\sqrt{\log(1/\delta)\sum_{t=1\lor (k-w)}^{k}\left[({f}_h-\#_{k,h}^f)(x_h^t,a_h^t)\right]^2  }+\log(1/\delta)\right).
\end{align*}
Taking union bound over $[K]\times[H]\times\Gc$, we have
\begin{align*}
    &\left|\sum_{t=1\lor (k-w)}^{k}W_t(h,g)-\sum_{t=1}^{k}\left[({f}_h-\#_{k,h}^f)(x_h^t,a_h^t)\right]^2\right| \leq \Oc\left(H\sqrt{\iota\sum_{t=1\lor (k-w)}^{k}\left[({f}_h-\#_{k,h}^f)(x_h^t,a_h^t)\right]^2  }+\iota\right),
\end{align*}
where $\iota=\log(KH|\Gc|/\delta)$.

Note that
\begin{align*}
    &\sum_{t=1\lor (k-w-1)}^{k-1}W_t(h,f^k) \\
    &=\sum_{t=1\lor (k-w-1)}^{k-1}\left[f_h^k(x_h^t,a_h^t)-r_h^{t-1}-\max_{a'\in\mathcal{A}}f_{h+1}^k(x_{h+1}^t,a')\right]^2 \\
    &\qquad \qquad  -\sum_{t=1\lor (k-w-1)}^{k-1}\left[\#_{k-1,h}^{f^k}(x_h^t,a_h^t)-r_h^{t-1}-\max_{a'\in\mathcal{A}}f_{h+1}^k(x_{h+1}^t,a')\right]^2 \\
    &\leq\sum_{t=1\lor (k-w-1)}^{k-1}\left[f_h^k(x_h^t,a_h^t)-r_h^{t-1}-\max_{a'\in\mathcal{A}}f_{h+1}^k(x_{h+1}^t,a')\right]^2 \\
    &\qquad   -\sum_{t=1\lor (k-w-1)}^{k-1}\left[\Tc_{h}^{k-1}f_{h+1}^k(x_h^t,a_h^t)-r_h^{t-1}-\max_{a'\in\mathcal{A}}f_{h+1}^k(x_{h+1}^t,a')\right]^2  +2H^2\Delta_P^w(k-1,h)+2H\Delta_R^w(k-1,w) \\
    &\leq \sum_{t=1\lor (k-w-1)}^{k-1}\left[f_h^k(x_h^t,a_h^t)-r_h^{t-1}-\max_{a'\in\mathcal{A}}f_{h+1}^k(x_{h+1}^t,a')\right]^2 \\
    &\qquad \qquad  -\inf_{g\in\Gc}\sum_{t=1\lor (k-w-1)}^{k-1}\left[g_h(x_h^t,a_h^t)-r_h^{t-1}-\max_{a'\in\mathcal{A}}f_{h+1}^k(x_{h+1}^t,a')\right]^2 +2H^2\Delta_P^w(k-1,h)+2H\Delta_R^w(k-1,w)  \\
    &\leq \beta+4H^2\Delta_P^w(k-1,h)+4H\Delta_R^w(k-1,w),
\end{align*}
where the first inequality follows from Lemma~\ref{lemma:square-dif-2} and the definition of $\Delta_P^w$ and $\Delta_P^w$, the second inequality follows from Assumption~\ref{assump:completeness-2}, and the last inequality follows from the definition of $\Bc^{k-1}$.

Therefore, 
\begin{align*}
   \sum_{t=1\lor (k-w-1)}^{k-1}\left[(f_h^k-\#_{k-1,h}^{f^k})(x_h^t,a_h^t)\right]^2\leq \beta+4H^2\Delta_P^w(k-1,h)+4H\Delta_R^w(k-1,w)+\Oc\left(H^2\iota\right).
\end{align*}
Finally, we use Lemma~\ref{lemma:square-dif-2} again and obtain
\begin{align*}
    &\sum_{t=1\lor (k-w-1)}^{k-1}\left[(f_{h}^k-\Tc_h^{k-1} f_{h+1}^k)(x_h^t,a_h^t)\right]^2 \\
    &\leq \sum_{t=1\lor (k-w-1)}^{k-1}\left[(f_{h}^k-\#_{k-1,h}^{f^k})(x_h^t,a_h^t)\right]^2+2H^2\Delta_P^w(k-1,h)+2H\Delta_R^w(k-1,w) \\
    &\leq 6H^2\Delta_P^w(k-1,h)+6H\Delta_R^w(k-1,w)+\Oc(\beta).
\end{align*}
\end{proof}

By Lemma~\ref{lemma:Qstar}, with probability at least $1-\delta$, we have
\begin{align*}
   \URM{1}&=\sum_{t=1}^{k}\left(V_{1;(*,t-1)}^{\pi^{(*,t-1)}}-V_{1;(*,t-1)}^{\pi^t}\right)(x_1) \\
    &\leq \sum_{t=1}^{k}\left(\max_{a\in\mathcal{A}}f_{1}^t(x_1,a)-V_{1;(*,t-1)}^{\pi^t}(x_1) \right)\\
    &\leq \sum_{h=1}^{H}\sum_{t=1}^{k}\underset{(x_h,a_h)\sim(\pi^t,(*,t-1))}{\Eb}[(f_h^t-\Tc_h^{t-1}f_{h+1}^t)(x_h,a_h)],
\end{align*}
where the first inequality follows from Lemma~\ref{lemma:Qstar} and the optimistic planning step (line 3) in Algorithm~\ref{Alg:2} which guarantees that $V_{1;(*,k-1)}^*\leq \sup_{a}f_1^k(x_1,a)$ for every episode $k$, the last inequality follows from generalized policy loss decomposition (Lemma~\ref{lemma:dr-decom-2}) and the fact that $\pi^k=\pi_{f^k}$ (line 3 in Algorithm~\ref{Alg:2}).

Now we invoke Lemma~\ref{lemma:GDE} and Lemma~\ref{lemma:concentration-2} with
\begin{align*}
    &\theta=\sqrt{\frac{1}{w}}, C=H, \\
    &\mathcal{X}=\mathcal{S}\times\mathcal{A},\Phi=(I-\Tc_h)\Fc, \mbox{and } \Pi=\Dc_{\Delta,h}, \\
    &\phi_k=f_h^k-\Tc_h^{k-1}f_{h+1}^k, \mu_k=\mathbf{1}\{\cdot=(x_h^k,a_h^k)\}
\end{align*}
and obtain
\begin{align*}
    &\sum_{t=1}^{k}\underset{(x_h,a_h)\sim(\pi^t,(*,t-1))}{\Eb}[(f_h^t-\Tc_h^{t-1}f_{h+1}^t)(x_h,a_h)] \\
    &\leq \sum_{t=1}^{k}(f_h^t-\Tc_h^{t-1} f_{h+1}^t)(x_h^t,a_h^t)+\mathcal{O}\left(\sqrt{k\log(k)}\right) \\
    &\leq \mathcal{O}\left(\frac{k}{w}\sqrt{w\cdot\GBE(\Fc,\mathcal{D}_{\Delta,h},\sqrt{1/w})\left(H^2\log[KH|\mathcal{G}|/\delta]+H^2\sup_{t\in[k]}\Delta_{P}^w(t,h)+H\sup_{t\in[k]}\Delta_{R}^w(t,h)\right)}+\sqrt{w}\right) \\
    &\leq  \mathcal{O}\left(\frac{Hk}{\sqrt{w}}\sqrt{d\log[kH|\mathcal{G}|/\delta]}+\frac{Hk}{\sqrt{w}}\sqrt{d\sup_{t\in[k]}\Delta_{P}^w(t,h)}+\frac{\sqrt{H}k}{\sqrt{w}}\sqrt{d\sup_{t\in[k]}\Delta_{R}^w(t,h)}+\sqrt{w}\right),
\end{align*}
where the second inequality follows from Azuma-Hoeffding inequality, and in the last inequality, we use $\sqrt{a+b}\leq \sqrt{a}+\sqrt{b}$ for any positive $a,b\geq 0$ and we define $d=\GBE(\Fc,\mathcal{D}_{\Delta,h},\sqrt{1/w})$.

Summing over step $h\in[H]$ gives
\begin{align*}
    &\sum_{h=1}^{H}\sum_{k=1}^{k}\underset{(x_h,a_h)\sim(\pi^k,(*,k-1))}{\Eb}[(f_h^k-\Tc_h^{k-1}f_{h+1}^k)(x_h,a_h)] \\
    &\leq \mathcal{O}\left(\frac{H^2k}{\sqrt{w}}\sqrt{d\log[KH|\mathcal{G }|/\delta]}+\frac{H^2k}{\sqrt{w}}\sqrt{d\sup_{t\in[k]}\Delta_{P}^w(t,h)}+\frac{H^{3/2}k}{\sqrt{w}}\sqrt{d\sup_{t\in[k]}\Delta_{R}^w(t,h)}+H\sqrt{w}\right),
\end{align*}
which completes the proof.

\subsection{Proof of \Cref{coro:thm-bandit_fb}}
For ease of exposition, let $d=\GBE(\Fc,\mathcal{D}_{\Delta,h},\sqrt{1/w})$. We adopt average variation $L$ defined in~(\ref{eqn:avg-pathlength}) and average variation $L$ in rewards defined in~(\ref{eqn:avg-pathlength-rewards}). Then we have 
\begin{align*}
    &\sum_{h=1}^H\sum_{t=1}^{K}\underset{(x_h,a_h)\sim(\pi^t,(*,t-1))}{\Eb}[(f_h^t-\Tc_h^{t-1}f_{h+1}^t)(x_h,a_h)] \\
    &\leq \widetilde{\mathcal{O}}\left(\frac{H^2K}{\sqrt{w}}\sqrt{d}\sqrt{\log|\Gc|}+\frac{H^2K}{\sqrt{w}}\sqrt{dLw^2}+\frac{H^{\frac{3}{2}}K}{\sqrt{w}}\sqrt{dL_\theta w^2}+H\sqrt{w}\right) \\
    &\leq \widetilde{\mathcal{O}}\left(H^2K\sqrt{d}\left(\frac{\sqrt{\log|\Gc|}}{\sqrt{w}}+(\sqrt{L}+\frac{\sqrt{L_\theta}}{\sqrt{H}}+\frac{1}{HK\sqrt{d}})\sqrt{w}\right)\right).  
\end{align*}
Note first that $\frac{\sqrt{\log|\Gc|}}{\sqrt{L}+\frac{\sqrt{L_\theta}}{\sqrt{H}}+\frac{1}{HK\sqrt{d}}}>1$ when $|\Gc|>10$. 

If $\frac{\sqrt{\log|\Gc|}}{\sqrt{L}+\frac{\sqrt{L_\theta}}{\sqrt{H}}+\frac{1}{HK\sqrt{d}}}\geq K$, i.e., $\sqrt{L}+\frac{\sqrt{L}_\theta}{\sqrt{H}}\leq \frac{1}{K}\left(\sqrt{\log|\Gc|}-\frac{1}{H\sqrt{d}}\right)$, we select $w=K$ and we have
\begin{align*}
&\sum_{h=1}^H\sum_{t=1}^{K}\underset{(x_h,a_h)\sim(\pi^t,(*,t-1))}{\Eb}[(f_h^t-\Tc_h^{t-1}f_{h+1}^t)(x_h,a_h)] \leq \widetilde{O}\left(H^2K^{\frac{1}{2}}d^{\frac{1}{2}}(\log|\Gc|)^{\frac{1}{2}}\right).
\end{align*}

If $\frac{\sqrt{\log|\Gc|}}{\sqrt{L}+\frac{\sqrt{L_\theta}}{\sqrt{H}}+\frac{1}{HK\sqrt{d}}}< K$, i.e., $\sqrt{L}+\frac{\sqrt{L_\theta}}{\sqrt{H}}> \frac{1}{K}\left(\sqrt{\log|\Gc|}-\frac{1}{H\sqrt{d}}\right)$, we select $w=\lceil\frac{\sqrt{\log|\Gc|}}{\sqrt{L}+\frac{\sqrt{L_\theta}}{\sqrt{H}}+\frac{1}{HK\sqrt{d}}}\rceil$ and we have
\begin{align*}
&\sum_{h=1}^H\sum_{t=1}^{K}\underset{(x_h,a_h)\sim(\pi^t,(*,t-1))}{\Eb}[(f_h^t-\Tc_h^{t-1}f_{h+1}^t)(x_h,a_h)]  \\
&\leq \widetilde{\mathcal{O}}\left(H^2KL^{\frac{1}{4}}d^{\frac{1}{2}}(\log|\Gc|)^{\frac{1}{4}}+H^{\frac{7}{4}}KL_\theta^{\frac{1}{4}}d^{\frac{1}{2}}(\log|\Gc|)^{\frac{1}{4}}+H^{\frac{3}{2}}K^{\frac{1}{2}}d^{\frac{1}{4}}(\log|\Gc|)^{\frac{1}{4}}\right).
\end{align*}

\end{document}